\newtheorem{theo}{Theorem}
\newtheorem{defi}{Definition}
\newtheorem{prop}{Proposition}
\newtheorem{fact}{Fact}
\newcommand*{\defeq}{\stackrel{\text{def}}{=}}
\begin{document}

\title{\textbf{From a few Accurate 2D Correspondences
to\\ 3D Point Clouds}}

\author{\vspace{0.5in}\\\textbf{Trung-Kien Le} ~and~ \textbf{Ping Li}\vspace{0.2in}\\
Cognitive Computing Lab\\
 Baidu Research\\
 10900 NE 8th St. Bellevue, WA 98004, USA\\\\
 \{hieukien1207,\ pingli98\}@gmail.com\\\\
}

\date{}

\maketitle

\begin{abstract}
\noindent Key points, correspondences, projection matrices, point clouds and dense clouds are the skeletons in image-based 3D reconstruction, of which point clouds have the important role in generating a realistic and natural model for a 3D reconstructed object. To achieve a good 3D reconstruction, the point clouds must be almost everywhere in the surface of the object. In this article, with a main purpose to build the point clouds covering the entire surface of the object, we propose a new feature named a \emph{geodesic feature} or \emph{geo-feature}. Based on the new geo-feature, if there are several (given) initial world points on the object's surface along with all accurately estimated projection matrices, some new world points on the geodesics connecting any two of these given world points will be reconstructed. Then the regions on the surface bordering by these initial world points will be covered by the point clouds. Thus, if the initial world points are around the surface, the point clouds will cover the entire surface. A prerequisite for applying the geo-feature to build the point clouds is having several initial world points on the object's surface and the accurate estimated projection matrices. Fortunately, from the multi-view geometry theory, the initial world points and the projection matrices can be accurately derived by their correspondences. Following this task, this article proposes a new method to estimate the world points and projection matrices from their correspondences. This method derives the closed-form and iterative solutions for the world points and projection matrices and proves that when the number of world points is less than seven and the number of images is at least five, the proposed solutions are global optimal. Finally, in this article, we propose an algorithm named \emph{World points from their Correspondences} (\texttt{WPfC}) to estimate the world points and projection matrices from their correspondences, and another algorithm named \emph{Creating Point Clouds} (\texttt{CrPC}) to create the point clouds from the world points and projection matrices given by the first algorithm. To be simple, from a few accurate correspondences as an initialization, we obtain the estimations of all projection matrices and the point clouds covering the entire surface of the object to be 3D reconstructed. We apply the algorithms \texttt{WPfC} and \texttt{CrPC} on the 3D Buddha statue reconstruction project. The obtained point clouds have more than ten millions points and they are almost everywhere on the statue's surface. More than $95\%$ relative errors computed by correspondences are less than 30 pixels. From our experience on image-based 3D reconstruction, this achievement appears dramatic and validates our study.
\end{abstract}

\newpage

\section{Introduction}\label{Sec:Introduction}

Image-based 3D reconstruction have been widely applied to our life and science. The central purpose of 3D reconstruction is to create a spatial virtual model that brings more information from images than what we can see with the naked eyes. With this dynamic, image-based 3D reconstruction is an age-old method, originally used in architecture long before computer-aided imaging techniques were introduced. As early as the Renaissance, scholars studied the appearance of the architecture of the past, analyzing it by means of images, and using it to construct their own contemporary buildings~\citep[page 6]{Carpo2001}. Nowadays, with strong support from computer graphics and computer vision that a virtual model is built accurately and very quickly, and more importantly it can be visualized, we can see applications of image-based 3D reconstruction on robotics, autonomous driving~\citep{Wang2007,Cadena2016,Bresson2017}, medical imaging analysis~\citep{McInerney1996,Heimann2009,Francisco2014}, physical geography~\citep{Smith2016}, archaeology~\citep{Hermon2008,Bruno2010}, and many others. Recently, the advent of \emph{Metaverse}\footnote{Metaverse is a network of 3D virtual worlds focused on social connection. (https://en.wikipedia.org/wiki/Metaverse)} and \emph{Second Life}\footnote{Second Life is an online multimedia platform that allows people to create an avatar for themselves and have a second life in an online 3D virtual world. (https://en.wikipedia.org/wiki/Second\_Life)} reflects the power of digital 3D reconstruction~\citep{Kemp2006,Dionisio2013}. One of the challenges for the Metaverse and Second Life is to create digital environments that appear realistic and natural-looking spaces. Some high-accurate and high-solution 3D virtual models of buildings, physical locations, and objects are rendered and passed to computers to process and generate virtual worlds. Thus, beside \emph{artificial intelligence} (AI), 3D reconstruction is a key technology to power Metaverse and Second Life.

\begin{figure*}[t]
  \centering
  \includegraphics[width=0.9\linewidth]{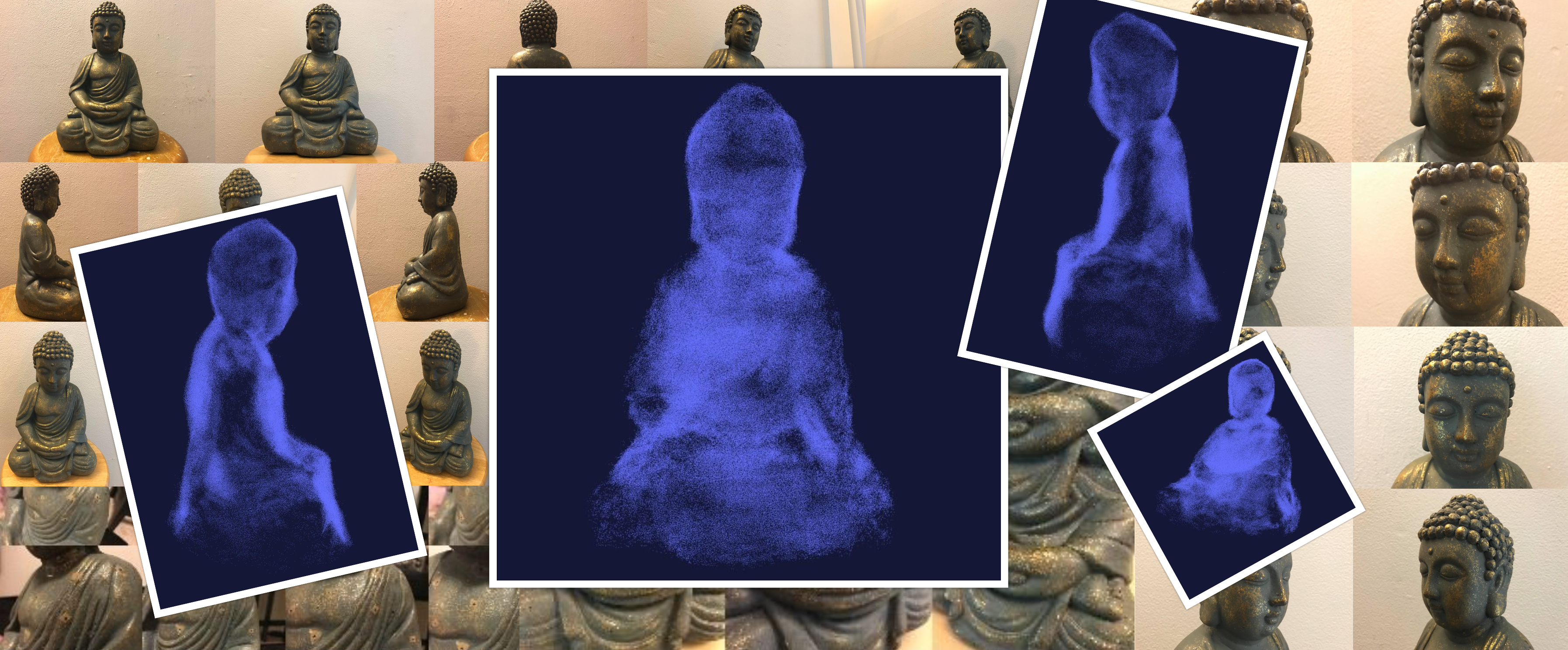}
  \caption{The image-based 3D Buddha statue reconstruction: Images and Point clouds.}
  \label{Fig:Main}\vspace{0.1in}
\end{figure*}

There is a forest of knowledge and techniques in image-based 3D reconstruction, despite its deceptively simple definition. Starting with multiple 2D images capturing an object, image-based 3D reconstruction attempts to build a dense to estimate the surface of this object. This process can be summarized by the three following steps that (i) detect and collect as many \emph{correspondences} as possible, (ii) estimate projection matrices of all images and build \emph{point clouds} from these correspondences, and (iii) apply some smooth surface (rendering) techniques on the point clouds to generate a \emph{dense} and complete the process of 3D reconstruction. Figure~\ref{Fig:Main} is an example of image-based 3D Buddha statue reconstruction. There are a lot of the Buddha statue images (far-view and near-view) used to create the point clouds (the \emph{blue dots} in four black background pictures).

To detect correspondences, we first describe or define and then estimate key points on each image. The \emph{key point estimation} is also known by another name of the \emph{feature detection}. Depending on the object to be 3D reconstructed, key points or features can be defined as special points~\citep{Hsieh1996}, edges~\citep{Ziou1998,dollar2013}, corners~\citep{Zheng1999,Rosten2006}, circles~\citep{Alhichri2003}, or special curves. Of course with different definitions of key points as special points, edges, corners or circles, we have different techniques to extract these key points. In general, SIFT~\citep{Lowe1999} and SURF~\citep{Bay2006} are two popular and effective methods for extracting key points. When key points on each image are collected, correspondences are detected based on the matching key points or the feature matching problem~\citep{Munkres1957,Torresani2012,Yan2016,Swoboda2019}. \emph{Matching key points} is a difficult problem and has received a lot of attention from researchers. The well-known formulation of the matching key points problem is the \emph{quadratic assignment problem} (QAP) that is NP-hard~\citep{Lawler1963}. In addition, the matching key points across multiple images is still a challenging task even though there are several potential methods for this on two images. More recently, \citet{le2022multi} have proposed some algebraic properties for multi-view correspondences. These properties say that there are some constraints among key points if they are correspondences. These constraints will help us on estimating, refining and recovering correspondences from the given key points.

From correspondences, a \emph{fundamental matrix} of each pair  images is determined based on the most popular formula in 3D reconstruction that ${\bf x}'{\bf F}{\bf x} = 0$ where ${\bf x}$ and ${\bf x}'$ are two matching key points on two images and ${\bf F}$ is the fundamental matrix of these two images~\citep{Luong1996}. When the number of two-view correspondences $\{{\bf x}\leftrightarrow{\bf x}'\}$ is at least eight, the \emph{eight-point algorithm}~\citep{Hartley2007} derives the fundamental matrix ${\bf F}$ by linear equations, and when this number is five, the \emph{five-point algorithms}~\citep{Nister2004,Barath2018} find solutions for ${\bf F}$ by high-degree polynomial equations. Since the fundamental matrix presents a relationship between two projection matrices corresponding to two images, all projection matrices of all images are determined by fixing one projection matrix of the reference image. As the projection matrices are given, the point clouds are built based on \emph{triangulation} methods~\citep{Hartigan1979,Stewenius2005,yang2019,Sharp2020} together with a process on creating lots of new correspondences. Finally, the point clouds are rendered to become a dense by the \emph{bundle adjustment} techniques, \emph{harmonic maps} and \emph{conformal maps}~\citep{Triggs1999,Agarwal2010,Zach2014,Wu2020,Choi2022}.

Continuing our previous work in~\cite{le2022multi} on solving the first step of the image-based 3D reconstruction that detects a few accurate correspondences from multiple images, this paper will focus on the second step of estimating projection matrices from a few correspondences and creating point clouds. With the first task that estimate projection matrices, we motivate to obtain very accurate projection matrices. Since image distortion is difficult to avoid, there is always an error in the correspondence estimate. Indeed, to be successful in 3D reconstruction with good point clouds, we need to have very precise projection matrices for all images. As we discussed in the previous paragraph, most of existing methods for the projection matrix estimation\footnote{The projection matrix estimation is also known as another named the \emph{camera calibration}.}~\citep{Weng1992,Wei1994,Zhang2000,Remondino2006,Chuang2021} estimate projection matrices through fundamental matrix estimation. There are ambiguities on the image-based 3D reconstruction as we shall present in Section~\ref{subsec:Ambiguity}, and these methods solve the ambiguity problem by fixing one projection matrix of the reference image. When this projection matrix is given, other projection matrices are easily derived based on the fundamental matrices. In our study, we shall estimate projection matrices directly from correspondences, not through the fundamental matrix estimation. The main difference between our method and other existent methods is in handling the ambiguities of the image-based 3D reconstruction. From our experience on the \emph{localization} problem~\citep{Kien2016,Kien2020,Kien2020_2}, we solve the ambiguity problem by fixing the first five world points corresponding to the correspondences.

More precisely, if correspondences $\{{\bf x}_m^{\text{\tiny(1)}}\leftrightarrow{\bf x}_m^{\text{\tiny(2)}}\leftrightarrow\cdots\leftrightarrow{\bf x}_m^{\text{\tiny(N)}}\}^M_{m=1}$ of $M$ ($M\geq 5$) world points $\{{\bf X}_1,{\bf X}_2,\ldots,{\bf X}_M\}$ and $N$ projection matrices $\{{\bf P}_1,{\bf P}_2,\ldots,{\bf P}_N\}$ are used, from the ambiguities we can fix ${\bf X}_1 = [0\,,\,0\,,\,0]^T$, ${\bf X}_2 = [0\,,\,0\,,\,1]^T$, ${\bf X}_3 = [0\,,\,1\,,\,0]^T$, ${\bf X}_4 = [1\,,\,0\,,\,0]^T$ and ${\bf X}_5 = [1\,,\,1\,,\,1]^T$ in the process of finding solutions for $N$ projection matrix $\{{\bf P}_1,{\bf P}_2,\ldots,{\bf P}_N\}$. The handle of fixing the first five world points allows us to obtain the closed-form solutions for both $N$ projection matrices $\{{\bf P}_1,{\bf P}_2,\ldots,{\bf P}_N\}$ and $M$ world points $\{{\bf X}_1, {\bf X}_2,\ldots,{\bf X}_M\}$ when $N\geq 5$ and $M\geq 6$ (Theorem~\ref{Theo:Sol_WPfC_NoNoise}). In addition, when $M = 6$ we prove that our closed-form solutions are the best from the point of view of minimizing the mean squared errors on the correspondences (Proposition~\ref{Prop:CF_Sol_6}). Beside the closed-form solutions for projection matrices, we propose the iterative solutions (Propositions~\ref{Prop:Opt_by_Iso} and~\ref{Prop:Opt_by_Iter}) that are guaranteed that our final solutions are not only better than the closed-form solutions but also local optimal.

\newpage

Regarding the second task of creating point clouds, we motivate to build the point clouds such that they can cover everywhere on the object's surface. This is a tough challenge. Although the property of being everywhere on the object surface of point clouds is important, to our knowledge no 3D reconstruction studies have addressed this property. Obviously, in order to have point clouds covering everywhere on the object's surface, we need a feature that can recognize all the points on the surface by images. A new concept of \emph{geo-feature} proposed by this paper (Definition~\ref{Def:Geo-feature}) can do what we want. A new world point has a \emph{geo-feature} if there are two old world points on the object's surface such that the new world point lays on the \emph{geodesic} in this surface connecting these two old world points. The `old' and `new' world points mean their coordinates are and are not known, and a geodesic of two points on a surface is the shortest curves on this surface connecting these two points~\citep{Busemann2012}. The definitions of geo-feature and geodesic confirm that if a point has a geo-feature it will be on the object's surface. Therefore, if a new geo-feature point is recognized and determined, it becomes an old world point. Then along with other old world points it helps us to find other new world points. To recognize and determine a geo-feature point, we first estimate the geodesic of two old world points by straight lines connecting image points of these two old world points on each image. From these 2D straight lines, we find correspondences corresponding to geo-feature points, then derive these geo-feature points based on the projection matrices that estimated by the first task. This process will help us to create the point clouds with millions points only from a few given points on the surface. Not only with a huge number of points in them, these point clouds also cover the entire surface of the object.

As a consequence of our results in studying the first step, we propose an algorithm to estimate both the world points and projection matrices from their correspondences. This algorithm names \emph{World points from their Correspondences} (\texttt{WPfC}) and is given by Algorithm~\ref{Alg:Iteration}. From the results for the second step, we also propose another algorithm to create the point clouds based on the world points and projection matrices from the first algorithm. The second algorithm is called \emph{Creating Point Clouds} (\texttt{CrPC})
and presented by Algorithm~\ref{Alg:Point_Clouds}. Unlike the first algorithm in which its correctness and efficiency are guaranteed by some theoretical results like the global solution given by Proposition~\ref{Prop:CF_Sol_6} and the iterative solutions given by Proposition~\ref{Prop:Opt_by_Iter}, the validity of the second algorithm is to be thoroughly investigated through practical experiments.

We apply our study in this paper to the \emph{image-based 3D Buddha statue reconstruction} project. A purpose of this project is modeling the Buddha statue by point clouds based on a lot of images capturing this statue. Note that point clouds, not like dense clouds, are 3D colorless points and they do not have any continuous regions. As such, seeing the Buddha statue through point clouds is difficult. In other words, if we can see the 3D shape of the statue through the point clouds, then these point clouds would be very good. On another hand, getting ground-truths for the Buddha statue is a hard and expensive job if we don't want to say that it is impossible task. Thus, in this paper, we mainly evaluate the quality of the point clouds based on their similarity of these point clouds with the statue through our eyes.

In this project, the correspondences are found based on 112 special points marked on the statue's surface as we see clearly in Figures~\ref{Fig:Sec2},~\ref{Fig:Images_Keypoints},~\ref{Fig:Geo_features} and~\ref{Fig:Experiments}. We use SIFT~\citep{Lowe1999} and SURF~\citep{Bay2006} to detect these special points on images, and then use the work in~\citet{le2022multi} to estimate the correspondences. There are twelve experiments in the project. Each experiment has around 80$\sim$150 images that capture around 20$\sim$30 special points. A role of each experiment is reconstructing an area corresponding to this experiment. For instance, the \emph{front body experiment} reconstructs the 3D shape of the front body, and the \emph{face experiment} reconstructs the 3D shape of the face of the statue. Note that there are a lot of correspondences in each experiment. Indeed, from twelve experiments, we have thousands outputs of Algorithm \texttt{WPfC} and millions outputs of Algorithm \texttt{CrPC}. These huge numbers of implementing \texttt{WPfC} and \texttt{CrPC} are sufficient for us to get correct evaluations of these two algorithms and also the study in this paper. Since we do not have ground-truths for world points, projection matrices and point clouds, each implementation of \texttt{WPfC} or \texttt{CrPC} is evaluated based on a \emph{relative error} defined as a difference between the correspondences in the input and other correspondences computed by the output of \texttt{WPfC} or \texttt{CrPC}. Although there are no theoretical results to warrant a correctness for a relative error, we still believe that small relative error reflects a good estimation from these two algorithms. Hence, in addition to seeing with the naked eye the similarity between the statue and the point clouds, we use the histograms of relative errors to evaluate our study. From these histograms, we know that 95 percent relative errors on key points (the image points of the 112 special points marked in the statue) are less than 49 pixels, and 95 percent relative errors on image points (image points of ten millions points in the point clouds) are less than 25 pixels. Working with the images of size $3024\times 4032~\text{pixel}^2$, these small relative errors on our 3D reconstruction appear to be really dramatic.\\

\begin{table}[t!]
\centering
{\small\begin{tabular}{ll}
\hline\\[-2mm]
${\bf X}_m = \big[x_m\,,\,y_m\,,\,z_m\big]^T$ & World point $m$, a $3\times 1$ real vector\\[1.5mm]
${\bf P}_n = \begin{bmatrix}{\bf p}_{1,n}\\{\bf p}_{2,n}\\{\bf p}_{3,n}\end{bmatrix} = \begin{bmatrix}p_{1n} & p_{2n} & p_{3n} & p_{4n}\\p_{5n} & p_{6n} & p_{7n} & p_{8n}\\p_{9n} & p_{10n} & p_{11n} & 1\end{bmatrix}$ & Projection matrix $n$, a $3\times 4$ real matrix\\[5.5mm]
${\bf x}^{\text{\tiny(n)}}_m = \big[u_{mn}\,,\,v_{mn}\big]^T$ & Key point of the world point $m$ on the image $n$, a $2\times 1$ real vector\\[1.5mm]
$\big\{{\bf x}^{\text{\tiny(1)}}_m\leftrightarrow{\bf x}^{\text{\tiny(2)}}_m\leftrightarrow\cdots\leftrightarrow{\bf x}^{\text{\tiny(N)}}_m\big\}$ & A correspondence of world point ${\bf X}_m$\\[1.5mm]
$\big\{{\bf x}^{\text{\tiny(1)}}_m\leftrightarrow{\bf x}^{\text{\tiny(2)}}_m\leftrightarrow\cdots\leftrightarrow{\bf x}^{\text{\tiny(N)}}_m\big\}^M_{m=1}$ & Correspondences of $M$ world points ${\bf X}_1,\ldots,{\bf X}_M$\\[1.5mm]
\multirow{2}{*}{$\{{\bf X}_m,{\bf P}_n\}$} & Group of $M$ world points $\{{\bf X}_1,{\bf X}_2,\ldots,{\bf X}_M\}$\\
~ & and $N$ projection matrices $\{{\bf P}_1,{\bf P}_2,\ldots,{\bf P}_n\}$\\[1.5mm]
\emph{\normalsize World points from Correspondences:} &~\\[1mm]
\multirow{2}{*}{\qquad$\mathcal{K}_n\big(\{{\bf X}_m\}\big)$} & $12\times 2M$ matrix given by~\eqref{eq:Camera_from_Point} used to estimate projection matrix ${\bf P}_n$\\
~ & from $M$ world points $\{{\bf X}_m\}^M_{m=1}$ and $M$ key points $\{{\bf x}^{\text{\tiny(n)}}_m\}^M_{m=1}$\\[1.5mm]
\multirow{2}{*}{\qquad$\mathcal{H}_m\big(\{{\bf P}_n\}\big)$} & $2N\times 4$ matrix given by~\eqref{eq:Point_from_Camera} used to estimate world point ${\bf X}_m$\\
~ & from $N$ projection matrices $\{{\bf P}_n\}^N_{n=1}$ and $N$ key points $\{{\bf x}^{\text{\tiny(n)}}_m\}^N_{n=1}$\\[1.5mm]
\qquad${\bf T}$ & Projective transformation on $\mathbb{R}^3$, a non-singular $4\times 4$ matrix\\[1.5mm]
\qquad$\xi^{\text{\tiny(n)}}_{m_1m_2,m_3}$ & Defined by~\eqref{eq:Xi} that is computed by three key points ${\bf x}^{\text{\tiny(n)}}_{m_1}, {\bf x}^{\text{\tiny(n)}}_{m_2}, {\bf x}^{\text{\tiny(n)}}_{m_3}$\\[1.5mm]
\qquad${\bf \Lambda}_m, f_m, f_{xm}, f_{ym}, f_{zm}$ & Used to estimate world point ${\bf X}_m,~m\geq 6$\\[2mm]
\qquad$\alpha_n, \beta_n, \gamma_n$ & Used to estimate projection matrix ${\bf P}_n$\\[1.5mm]
\qquad$\big\{\hat{\bf x}^{\text{\tiny(1)}}_m\leftrightarrow\hat{\bf x}^{\text{\tiny(2)}}_m\leftrightarrow\cdots\leftrightarrow\hat{\bf x}^{\text{\tiny(N)}}_m\big\}^M_{m=1}$ & Estimated correspondences\\[1.5mm]
\qquad$\{{\bf X}^{\circ}_m,{\bf P}^{\circ}_n\}$ & Closed-form solution, the output of Algorithm~\ref{Alg:Closed_form}, \texttt{CF-WPfC}\\[1.5mm]
\qquad$\{{\bf X}^{\circ}_m,{\bf P}^*_n\}$ & Better solution than $\{{\bf X}^{\circ}_m,{\bf P}^{\circ}_n\}$ because of Proposition~\ref{Prop:Opt_by_Iso}\\[1.5mm]
\qquad$\{{\bf X}^{**}_m,{\bf P}^{**}_n\}$ & Better solution than $\{{\bf X}^{\circ}_m,{\bf P}^*_n\}$ because of Proposition~\ref{Prop:Opt_by_Iter}\\[1.5mm]
\qquad$\{{\bf X}^{\star}_m,{\bf P}^{\star}_n\}$ & The best solution in our work, the output of Algorithm~\ref{Alg:Iteration}, \texttt{WPfC}\\[1.5mm]
\emph{\normalsize Creating Point clouds:} &~\\[1mm]
\qquad\emph{key point} & An image point of the world point in $\{{\bf X}^{\star}_1, {\bf X}^{\star}_2,\ldots,{\bf X}^{\star}_M\}$\\[1.5mm]
\qquad\emph{image point} & An image point of a point in point clouds\\[1.5mm]
\qquad$\mathcal{PC}$ & Point clouds\\[1.5mm]
\qquad$\textbf{Img}_n$ & Image $n$, a $H\times W$ real matrix\\[1.5mm]
\qquad$\textbf{Img}_n(h,\omega)\,,\,\textbf{Img}_n\big({\bf x}^{\text{\tiny(n)}}_k\big)$ & Value of $\textbf{Img}_n$ at $(h,\omega)$ or image point ${\bf x}^{\text{\tiny(n)}}_k$\\[1.5mm]
\multirow{2}{*}{\qquad$\mathcal{G}({\bf X}_k,{\bf X}_{k'})$} & Geodesic between two points ${\bf X}_k, {\bf X}_{k'}$\\
~ & These two points can be world points or points in point clouds\\[1.5mm]
\multirow{2}{*}{\qquad$\textbf{Img}_n\big({\bf X}_k,{\bf X}_{k'}|L\big)$} & Defined by~\eqref{eq:Image_Line}, $L\times 1$ vector collecting all values on $\textbf{Img}_n$ in the\\
~ & straight line connecting image points ${\bf x}^{\text{\tiny(n)}}_k$ and ${\bf x}^{\text{\tiny(n)}}_{k'}$ of ${\bf X}_k$ and ${\bf X}_{k'}$\\[1.5mm]
\qquad$\mathcal{E}_d({\bf X}), \mathcal{E}_i({\bf X})$ & Two metrics to evaluate a new created point ${\bf X}$\\[1.5mm]
\qquad$\theta, \ell$ & Two parameters in \emph{geo-feature} detection\\[1.5mm]
\qquad$\mathcal{C}\big({\bf X}_k,{\bf X}_{k'},\theta,\ell\big)$ & Collecting all candidates for geo-features w.r.t. $\{{\bf X}_k,{\bf X}_{k'},\theta,\ell\}$\\[1.5mm]
\emph{\normalsize Evaluation:} &~\\[1mm]
\qquad$\mathcal{PC}_{\delta}$ & Refinements of $\mathcal{PC}$ corresponding to distance $\delta$\\[1.5mm]
\qquad$\mathcal{PC}_{\epsilon}$ & Refinements of $\mathcal{PC}$ corresponding to pixel $\epsilon$\\[1.5mm]
\qquad$\text{Error}_{\text{Rel}}\big(\{{\bf X}^{\star}_m,{\bf P}^{\star}_n\}\big)$ & Defined by~\eqref{eq:Relative_Eval_WP_PM}, used to evaluate Algorithm~\ref{Alg:Iteration}, \texttt{WPfC}\\[1.5mm]
\qquad$\text{Error}_{\text{Rel}}\big({\bf X}\big)$ & Defined by~\eqref{eq:Relative_Eval_PClouds}, used to evaluate Algorithm~\ref{Alg:Point_Clouds}, \texttt{CrPC}\\[1.5mm]
\hline
\end{tabular}}
\caption{\emph{Notations and Explanations}}
\label{Tab:Notations}
\end{table}

\textbf{Roadmap.}\ Most important notations of the paper and their explanations are given by the Table~\ref{Tab:Notations} in the next page. The first task of estimating projection matrices is studied in Section~\ref{Sec:World_points} and the second task of creating point clouds is presented by Section~\ref{Sec:Point_Clouds}. In Section~\ref{Sec:World_points}, we first discuss the ambiguities of image-based 3D reconstruction and handle them by Proposition~\ref{Prop:Ambiguity}. The closed-form solutions for the world points and projection matrices from their correspondences are derived by Theorem~\ref{Theo:Sol_WPfC_NoNoise} and Algorithm~\ref{Alg:Closed_form} in Subsection~\ref{subsec:Sol_Corres}. Subsection~\ref{subsec:Sol_EstCorres} starts by introducing the optimization~\eqref{eq:Opt_WPfC} to solve the \emph{World points from their Correspondences} problem when the correspondences have noise. The global optimal solutions for this optimization are derived for the case $M \leq 6$ by Propositions~\ref{Prop:Global_Sol} and~\ref{Prop:CF_Sol_6}. As $M \geq 7$, Proposition~\ref{Prop:Opt_by_Iso} brings the better solution than the closed-forms given by Theorem~\ref{Theo:Sol_WPfC_NoNoise} and Proposition~\ref{Prop:Opt_by_Iter} gives the better solution than the one from Proposition~\ref{Prop:Opt_by_Iso}. In addition, the results from Propositions~\ref{Prop:Opt_by_Iso} and~\ref{Prop:Opt_by_Iter} allow us to propose an iterative process to find the local optimal solutions from the initial closed-from solutions given by Theorem~\ref{Theo:Sol_WPfC_NoNoise}. This iterative process is Algorithm~\ref{Alg:Iteration}, named by \texttt{WPfC}. In Section~\ref{Sec:Point_Clouds}, we introduce the concept of \emph{geodesic feature} or \emph{geo-feature} in the Subsection~\ref{Subsec:Geo_feature} then propose an algorithm to detect geo-features and creating the point clouds in Subsection~\ref{Subsec:Alg}. The most important result in Section~\ref{Sec:Point_Clouds} is Algorithm~\ref{Alg:Point_Clouds}, named by \texttt{CrPC}. This algorithm creates new world points for the point clouds based on two old world points and the projection matrices of all images. Since the results in Sections~\ref{Sec:World_points} and~\ref{Sec:Point_Clouds} are only materials and Algorithms \texttt{WPfC} and \texttt{CrPC} are tools only, to complete the process of building point clouds from a few accurate correspondences we need a protocol. Through the \emph{front body experiment} in the 3D Buddha statue reconstruction project, Section~\ref{Sec:PC_from_Correspondences} introduces the proposed image-based 3D reconstruction protocol. Finally, Section~\ref{Sec:Evaluation} gives an evaluation of our work based on the 3D Buddha statue reconstruction project and Section~\ref{Sec:Conclusion} concludes this article.

\section{World points from Correspondences}\label{Sec:World_points}
Let
\begin{equation*}
\big\{{\bf x}^{\text{\tiny(1)}}_m\leftrightarrow{\bf x}^{\text{\tiny(2)}}_m\leftrightarrow{\bf x}^{\text{\tiny(3)}}_m\leftrightarrow\cdots\leftrightarrow{\bf x}^{\text{\tiny(N)}}_m\big\}^M_{m=1}
\end{equation*}
be \emph{correspondences} of $M$ unknown \emph{world points} ${\bf X}_1$, ${\bf X}_2$, $\ldots$, ${\bf X}_M$ capturing by $N$ unknown \emph{cameras} ${\bf P}_1, {\bf P}_2, \ldots, {\bf P}_N$. The world point ${\bf X}_m$ is in three-dimensional space $\mathbb{R}^3$ and indicated by three coordinates $x_m, y_m$ and $z_m$ and the camera or the projection matrix ${\bf P}_n$ is the $3\times 4$ matrix denoted by
\begin{equation*}
{\bf P}_n \defeq \begin{bmatrix}p_{1n} & p_{2n} & p_{3n} & p_{4n}\\
p_{5n} & p_{6n} & p_{7n} & p_{8n}\\
p_{9n} & p_{10n} & p_{11n} & p_{12n}\end{bmatrix}_{3\times 4} \defeq~ \begin{bmatrix}{\bf p}_{1,n}\\{\bf p}_{2,n}\\{\bf p}_{3,n}\end{bmatrix}
\end{equation*}
From the \emph{multiple view geometry} theory~\citep{Hartley2004}, the relationship between the correspondences, world points and projection matrices are given by nonlinear equations
\begin{equation}\label{eq:parallel}
\qquad\qquad {\bf P}_n\begin{bmatrix}{\bf X}_m\\1\end{bmatrix} \sim \begin{bmatrix}{\bf x}^{\text{\tiny(n)}}_{m}\\1\end{bmatrix} \qquad\quad \text{for all}\quad m,n\,,
\end{equation}
where ${\bf a} \sim {\bf b}$ means that two vectors ${\bf a}$ and ${\bf b}$ are parallel. In this section we study the problem that estimating both the unknown world points ${\bf X}_1, {\bf X}_2, \ldots, {\bf X}_M$ and the unknown projection matrices ${\bf P}_1, {\bf P}_2, \ldots, {\bf P}_N$ from the known key-points $\{{\bf x}^{\text{\tiny(n)}}_m\}$ based on the nonlinear equations~\eqref{eq:parallel}. This problem is named \emph{the world points from their correspondences} (WPfC). If the world points $\{{\bf X}_m\}$ and the projection matrices $\{{\bf P}_n\}$ satisfy~\eqref{eq:parallel} corresponding to the correspondences $\{{\bf x}_m^{\text{\tiny(1)}}\leftrightarrow{\bf x}_m^{\text{\tiny(2)}}\leftrightarrow\cdots\leftrightarrow{\bf x}_m^{\text{\tiny(N)}}\}^M_{m=1}$, we call the group $\{{\bf X}_m,{\bf P}_n\}$ be a \emph{solution} with respect to (w.r.t.) these correspondences. Vice versa, if there exist a group $\{{\bf X}_m,{\bf P}_n\}$ satisfying~\eqref{eq:parallel}, $\{{\bf x}_m^{\text{\tiny(1)}}\leftrightarrow{\bf x}_m^{\text{\tiny(2)}}\leftrightarrow\cdots\leftrightarrow{\bf x}_m^{\text{\tiny(N)}}\}^M_{m=1}$ are called the \emph{correspondences}.

\newpage

Before go to further on finding solutions for the world points and the projection matrices based on the correspondences, we mention that when the world points are given, the solution for the projection matrix ${\bf P}_n$ is derived by the following linear equations
\begin{equation}\label{eq:Camera_from_Point}
\begin{bmatrix}
\begin{matrix}~{\bf X}_1~ & {\bf 0}\\1 & 0\\[0.5mm]{\bf 0} & ~{\bf X}_1~\\0 & 1\end{matrix} & \begin{matrix}~{\bf X}_2~ & {\bf 0}\\1 & 0\\[1mm]{\bf 0} & ~{\bf X}_2~\\0 & 1\end{matrix} & \begin{matrix}\cdots\\\cdots\\[0.5mm]\cdots\\\cdots\end{matrix} & \begin{matrix}~{\bf X}_M~ & {\bf 0}\\1 & 0\\[0.5mm]{\bf 0} & ~{\bf X}_M~\\0 & 1\end{matrix}\\[7mm]
-{\bf x}_1^{\text{\tiny(n)} T}{\bf X}_1 & -{\bf x}_2^{\text{\tiny(n)} T}{\bf X}_2 & \cdots & -{\bf x}_M^{\text{\tiny(n)} T}{\bf X}_M\\[2mm]
-{\bf x}_1^{\text{\tiny(n)} T} & -{\bf x}_2^{\text{\tiny(n)} T} & \cdots & -{\bf x}_M^{\text{\tiny(n)} T}
\end{bmatrix}^T_{12\times 2M}\begin{bmatrix}~{\bf p}_{1,n}^T~\\[3mm]{\bf  p}_{2,n}^T\\[3mm]{\bf p}_{3,n}^T\end{bmatrix} = {\bf 0}_{2M\times 1\, ,}
\end{equation}
for all $n$. Since the correspondences ${\bf x}_m^{\text{\tiny (n)}}$ are always known, we denote the $12\times 2M$ matrix in the left side of~\eqref{eq:Camera_from_Point} by $\mathcal{K}_n(\{{\bf X}_m\})$ as the function of the world points. If $M\geq 6$, the column vector containing all elements of ${\bf P}_n$ will be the eigenvector corresponding to the smallest eigenvalue of the positive semidefinite matrix $\big[\mathcal{K}_n(\{{\bf X}_m\})\mathcal{K}_n(\{{\bf X}_m\})^T\big]$. Note that we can fix $p_{12n} = 1$.

Vice versa, when the projection matrices $\{{\bf P}_n\}$ are given, the solution for the world point ${\bf X}_m$ is given by
\begin{equation}\label{eq:Point_from_Camera}
\begin{bmatrix}
\begin{bmatrix}{\bf p}_{1,1}\\{\bf p}_{2,1}\end{bmatrix} - {\bf x}_m^{\text{\tiny(1)}}{\bf p}_{3,1}~~~\\[5mm]
\begin{bmatrix}{\bf p}_{1,2}\\{\bf p}_{2,2}\end{bmatrix} - {\bf x}_m^{\text{\tiny(2)}}{\bf p}_{3,2}~~~\\\vdots~~~~\\
\begin{bmatrix}{\bf p}_{1,N}\\{\bf p}_{2,N}\end{bmatrix} - {\bf x}_m^{\text{\tiny(N)}}{\bf p}_{3,N}
\end{bmatrix}_{2N\times 4}\begin{bmatrix}{\bf X}_m\\[1mm]1\end{bmatrix} ~=~ {\bf 0}_{2N\times 1\, .}
\end{equation}
We denote the $2N\times 4$ matrix in the left side of~\eqref{eq:Point_from_Camera} by $\mathcal{H}_m(\{{\bf P}_n\})$ as the function of the projection matrices. When $N\geq 2$, the world point ${\bf X}_m$ will be the three first coordinates of the eigenvector corresponding to the smallest eigenvalue of the positive semidefinite matrix $\big[\mathcal{H}_m(\{{\bf P}_n\})^T\mathcal{H}_m(\{{\bf P}_n\})\big]$.

The solutions for the projection matrices from the world points given by~\eqref{eq:Camera_from_Point} and the solutions for the world points from the projection matrices in~\eqref{eq:Point_from_Camera} confirm that the \emph{degrees of freedom} (DoFs) of WPfC should be smaller than $\min\{3M,11N\}$. In the next subsection, by studying an ambiguity of WPfC, we show that the DoFs of this problem should be not larger than $3M-15$. In addition, when there are only six world points ($M = 6$), this DoFs is at most three and the WPfC problem is solvable.

\subsection{Ambiguity}\label{subsec:Ambiguity}

From~\eqref{eq:parallel} we easily see that if $\{{\bf X}_m,{\bf P}_n\}$ is a solution of the correspondences $\{{\bf x}_m^{\text{\tiny(1)}}\leftrightarrow\cdots\leftrightarrow{\bf x}_m^{\text{\tiny(N)}}\}$ and $p_{12n}\neq 0$, the new group given by
\begin{equation*}
\qquad\qquad\widehat{\bf X}_m = {\bf X}_m\qquad \text{and} \qquad \widehat{\bf P}_n = \frac{1}{p_{12n}}{\bf P}_n\, ,\qquad\quad \text{for all}\quad m,n\,,
\end{equation*}
then
\begin{equation*}
    \qquad\qquad\widehat{\bf P}_n\begin{bmatrix}\widehat{\bf X}_m\\1\end{bmatrix} = \frac{1}{p_{12n}}{\bf P}_n\begin{bmatrix}{\bf X}_m\\1\end{bmatrix} \sim \begin{bmatrix}{\bf x}_m^{\text{\tiny(n)}}\\1\end{bmatrix}\,,\qquad\quad\text{for all}\quad m,n\,.
\end{equation*}
It means that $\{\widehat{\bf X}_m,\widehat{\bf P}_n\}$ is another solution w.r.t. $\{{\bf x}_m^{\text{\tiny(1)}}\leftrightarrow\cdots\leftrightarrow{\bf x}_m^{\text{\tiny(N)}}\}$. Thus, to fix this ambiguity, we can assume that the (3,4)th element of the projection matrices ${\bf P}_n$ is unit for all $n$. In addition, by using any rotation or reflection ${\bf R}$ in three-dimensional space to transfer the world points and the projection matrices as follows
\begin{equation*}
    \widetilde{\bf X}_m = {\bf R}{\bf X}_m\qquad \text{and} \qquad
    \widetilde{\bf P}_n = \begin{bmatrix}\begin{bmatrix}p_{1n} & p_{2n} & p_{3n}\\p_{5n} & p_{6n} & p_{7n}\\p_{9n} & p_{10n} & p_{11n}\end{bmatrix}{\bf R}^{-1} & \begin{matrix}p_{4n}\\p_{8n}\\p_{12n}\end{matrix}\end{bmatrix}
\end{equation*}
then the group $\{\widetilde{\bf X}_m,\widetilde{\bf P}_n\}$ is also a solution because of
\begin{equation*}
    \widetilde{\bf P}_n\begin{bmatrix}\widetilde{\bf X}_m\\1\end{bmatrix} = {\bf P}_n\begin{bmatrix}{\bf X}_m\\1\end{bmatrix} \sim \begin{bmatrix}{\bf x}_m^{\text{\tiny(n)}}\\1\end{bmatrix}\, .
\end{equation*}
In another word, the world points from their correspondences is invariant under any rotations and reflections.

In general,~\cite{Hartley2004} shows that WPfC is invariant under any \emph{projective transformation}, a linear transformation that transfers points into points, lines into lines, planes into planes, and any two incident elements into two incident elements. Precisely, let ${\bf T}$ be the \emph{projective transformation} on the three-dimensional space, a non-singular $4\times4$ matrix with 15 degrees of freedom~\citep{Hartley2004}, we create a new group $\{\Breve{\bf X}_m,\Breve{\bf P}_n\}$ as follows
\begin{equation*}
\qquad\qquad\begin{bmatrix}\Breve{\bf X}_m\\1\end{bmatrix} = {\bf T}\begin{bmatrix}{\bf X}_m\\1\end{bmatrix}\qquad \text{and} \qquad \Breve{\bf P}_n = {\bf P}_n{\bf T}^{-1} \qquad\quad \text{for all}\quad m, n\, .
\end{equation*}
The new group will be a solution w.r.t. the correspondences because of the identity of ${\bf T}^{-1}{\bf T}$.

A projective transformation on the three-dimensional space is uniquely defined by five projected points, unless four of them are in the same plane. Based on this property and the invariant property of the world points from their correspondences under any projective transformation, the following result is derived to describe clearly the ambiguities of WPfC via the world points and the projection matrices.

\begin{prop}\label{Prop:Ambiguity}
Given the correspondences $\big\{{\bf x}_m^{\text{\tiny(1)}}\leftrightarrow{\bf x}_m^{\text{\tiny(2)}}\leftrightarrow\cdots\leftrightarrow{\bf x}_m^{\text{\tiny(N)}}\big\}^M_{m=1}$ with $M \geq 5$, there always exists a solution $\{{\bf X}_m,{\bf P}_n\}$ w.r.t. these correspondences and
\begin{equation}\label{eq:Ambi_total}
\begin{bmatrix}{\bf X}_1 & {\bf X}_2 & {\bf X}_3 & {\bf X}_4 & {\bf X}_5\end{bmatrix} = \begin{bmatrix}
0 & 0 & 0 & 1 & 1\\
0 & 0 & 1 & 0 & 1\\
0 & 1 & 0 & 0 & 1\end{bmatrix}\quad \text{and} \quad p_{12n} = 1\, .
\end{equation}
\end{prop}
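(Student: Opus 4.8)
The plan is to exploit the invariance of WPfC under projective transformations, which was just established, together with the classical fact that a projective transformation of $\mathbb{R}^3$ (equivalently of $\mathbb{P}^3$) is uniquely determined by the images of five points in general position — no four coplanar. First I would start from an arbitrary solution $\{{\bf X}_m,{\bf P}_n\}$ of the given correspondences; the existence of at least one such solution is part of the hypothesis that the $\{{\bf x}_m^{\text{\tiny(n)}}\}$ really are correspondences. Then I would check that the first five world points ${\bf X}_1,\dots,{\bf X}_5$ of this solution are in general position in the sense that no four of them are coplanar (in homogeneous coordinates, that the five vectors $[{\bf X}_i^T,1]^T$ are such that every four are linearly independent). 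If they are, there is a unique non-singular $4\times 4$ matrix ${\bf T}$ sending $[{\bf X}_i^T,1]^T$ to the homogeneous coordinates of the five target points $[0,0,0]^T,[0,0,1]^T,[0,1,0]^T,[1,0,0]^T,[1,1,1]^T$ — these five targets are themselves in general position, which is the standard projective frame, so ${\bf T}$ exists. Applying the transformation rule $\left[\Breve{\bf X}_m^T,1\right]^T = {\bf T}\left[{\bf X}_m^T,1\right]^T$ and $\Breve{\bf P}_n = {\bf P}_n{\bf T}^{-1}$ from the paragraph preceding the statement gives a new solution with $\Breve{\bf X}_1,\dots,\Breve{\bf X}_5$ equal to the desired values (up to the scalar in the homogeneous coordinate, which I address below). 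Finally I would use the first ambiguity discussed in Section~\ref{subsec:Ambiguity} — scaling each ${\bf P}_n$ by $1/p_{12n}$ — to additionally normalize $p_{12n}=1$; this rescaling does not touch the world points, so the frame ${\bf X}_1,\dots,{\bf X}_5$ is preserved, and the two normalizations are compatible.

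There are two points that need care. The first is the homogeneous-coordinate bookkeeping: after applying ${\bf T}$, the vector ${\bf T}[{\bf X}_m^T,1]^T$ need not have last entry $1$, so strictly speaking one recovers $\Breve{\bf X}_m$ by dehomogenizing, and one must observe that for $m=1,\dots,5$ the chosen targets can be written with last coordinate $1$ (which they can, since none of the five target points lies on the plane at infinity), so the dehomogenization is harmless and yields exactly the matrix in~\eqref{eq:Ambi_total}. One should also note $p_{12n}\neq 0$ is needed for the rescaling step; this is the generic case, and in the degenerate case one can first apply a small additional projective transformation (or simply argue that a solution with some $p_{12n}=0$ can be perturbed within the solution set), though the cleanest route is to fold the $p_{12n}=1$ normalization into the choice of ${\bf T}$ itself by composing with an affine map fixing the projective frame — I would mention this but not belabor it.

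The main obstacle is justifying that the first five world points of \emph{some} solution are in general position (no four coplanar). This is not automatic: for a badly chosen solution four of ${\bf X}_1,\dots,{\bf X}_5$ could be coplanar, and then no ${\bf T}$ sends them to the standard frame. The fix is to observe that projective transformations act transitively enough on solutions: if the original solution has, say, ${\bf X}_1,\dots,{\bf X}_4$ coplanar, one applies \emph{any} projective transformation moving these points off a common plane (such transformations exist because the coplanarity locus is a proper algebraic subvariety of the space of $5$-tuples) to obtain another solution whose first five points are in general position, and then proceed as above. Alternatively, one may simply restrict attention — as the rest of the paper implicitly does — to correspondences coming from genuine $3$D configurations in general position, in which case the claim is immediate. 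I would present the transitivity argument as the rigorous version and remark that in practice the genericity always holds.
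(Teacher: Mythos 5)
Your proposal follows essentially the same route as the paper: take an arbitrary solution, use the five-point determination of a projective transformation of $\mathbb{P}^3$ to send ${\bf X}_1,\dots,{\bf X}_5$ to the standard frame, dehomogenize, and then use the $1/p_{12n}$ scaling to normalize the last entry of each projection matrix. (The paper merely splits the first step in two — a $15\times 15$ linear system sending the five points to a frame with unknown scales $\alpha_i$, followed by an explicit corrective transformation fixing the first four points and moving $[\alpha,\alpha,\alpha]^T$ to $[1,1,1]^T$ — and, like you, it quietly assumes genericity, saying the $15\times 15$ matrix is invertible ``which is almost the case in our study.'')

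One concrete error in your treatment of the degenerate case: you cannot repair coplanarity of four of the points by applying a projective transformation, because projective transformations map planes to planes, so if four of ${\bf X}_1,\dots,{\bf X}_4$ are coplanar in one solution they remain coplanar in every solution obtained from it by a projective transformation. The ``transitivity'' fix therefore does not work; degeneracy of the five-point configuration is an invariant of the projective orbit of solutions, not of the particular representative, and for such correspondences the normalization~\eqref{eq:Ambi_total} is genuinely unreachable from that orbit. The honest statement — which is what the paper implicitly adopts — is your fallback: restrict to correspondences whose underlying configuration has ${\bf X}_1,\dots,{\bf X}_5$ with no four coplanar. Apart from this, your argument is sound and, on the homogeneous-coordinate bookkeeping, somewhat more careful than the paper's.
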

\begin{proof}
Since $\{{\bf x}_m^{\text{\tiny(1)}}\leftrightarrow{\bf x}_m^{\text{\tiny(2)}}\leftrightarrow\cdots\leftrightarrow{\bf x}_m^{\text{\tiny(N)}}\}^M_{m=1}$ are the correspondences, there exists a solution $\{{\bf X}_m^{\circ},{\bf P}_n^{\circ}\}$ such that
\begin{equation*}
    \qquad\qquad{\bf P}_n^{\circ}\begin{bmatrix}{\bf X}_m^{\circ}\\1\end{bmatrix} \sim \begin{bmatrix}{\bf x}_m^{\text{\tiny(n)}}\\1\end{bmatrix} \qquad\quad \text{for all} \quad m, n\, .
\end{equation*}
From $\{{\bf X}_m^{\circ }\}$, we find a projective transformation $\widehat{\bf T}\defeq \begin{bmatrix}
    \hat{t}_{11} & \hat{t}_{12} & \hat{t}_{13} & \hat{t}_{14}\\
    \hat{t}_{21} & \hat{t}_{22} & \hat{t}_{23} & \hat{t}_{24}\\
    \hat{t}_{31} & \hat{t}_{32} & \hat{t}_{33} & \hat{t}_{34}\\
    \hat{t}_{41} & \hat{t}_{42} & \hat{t}_{43} & 1
    \end{bmatrix}$ such that
\begin{equation}\label{eq:Prop1_01}
\widehat{\bf T}\begin{bmatrix}{\bf X}^{\circ}_1 & {\bf X}^\circ_2 & {\bf X}^\circ_3 & {\bf X}^\circ_4 & {\bf X}^\circ_5\\[2mm]1 & 1 & 1 & 1 & 1\end{bmatrix} = \begin{bmatrix}
0 & 0 & 0 & \alpha_3 & \alpha_4\\
0 & 0 & \alpha_2 & 0 & \alpha_4\\
0 & \alpha_1 & 0 & 0 & \alpha_4\\
1 & \alpha_1 & \alpha_2 & \alpha_3 & \alpha_5\end{bmatrix}
\end{equation}
where $\alpha_1, \alpha_2, \alpha_3, \alpha_4$ and $\alpha_5$ are unknown and nonzero real numbers. Using linear transformations to remove unknown parameters $\alpha_1, \alpha_2, \alpha_3, \alpha_4, \alpha_5$, the matrix form~\eqref{eq:Prop1_01} is presented by linear equations of fifteen variables $\hat{t}_{ij}$ in $\widehat{\bf T}$ as follows
\begin{equation}\label{eq:Sol_T}
\begin{bmatrix}
{\bf X}_1^{\circ T} & 1 & {\bf 0}^T & 0 & {\bf 0}^T & 0 & {\bf 0}^T\\[1mm]
{\bf 0}^T & 0 & {\bf X}_1^{\circ T} & 1 & {\bf 0}^T & 0 & {\bf 0}^T\\[1mm]
{\bf 0}^T & 0 & {\bf 0}^T & 0 & {\bf X}_1^{\circ T} & 1 & {\bf 0}^T\\[1mm]
{\bf 0}^T & 0 & {\bf 0}^T & 0 & {\bf 0}^T & 0 & {\bf X}_1^{\circ T}\\[1mm]
{\bf X}_2^{\circ T} & 1 & {\bf 0}^T & 0 & {\bf 0}^T & 0 & {\bf 0}^T\\[1mm]
{\bf 0}^T & 0 & {\bf X}_2^{\circ T} & 1 & {\bf 0}^T & 0 & {\bf 0}^T\\[1mm]
{\bf 0}^T & 0 & {\bf 0}^T & 0 & {\bf X}_2^{\circ T} & 1 & -{\bf X}_2^{\circ T}\\[1mm]
{\bf X}_3^{\circ T} & 1 & {\bf 0}^T & 0 & {\bf 0}^T & 0 & {\bf 0}^T\\[1mm]
{\bf 0}^T & 0 & {\bf X}_3^{\circ T} & 1 & {\bf 0}^T & 0 & -{\bf X}_3^{\circ T}\\[1mm]
{\bf 0}^T & 0 & {\bf 0}^T & 0 & {\bf X}_3^{\circ T} & 1 & {\bf 0}^T\\[1mm]
{\bf X}_4^{\circ T} & 1 & {\bf 0}^T & 0 & {\bf 0}^T & 0 & -{\bf X}_4^{\circ T}\\[1mm]
{\bf 0}^T & 0 & {\bf X}_4^{\circ T} & 1 & {\bf 0}^T & 0 & {\bf 0}^T\\[1mm]
{\bf 0}^T & 0 & {\bf 0}^T & 0 & {\bf X}_4^{\circ T} & 1 & {\bf 0}^T\\[1mm]
{\bf X}_5^{\circ T} & 1 & -{\bf X}_5^{\circ T} & -1 & {\bf 0}^T & 0 & {\bf 0}^T\\[1mm]
{\bf X}_5^{\circ T} & 1 & {\bf 0}^T & 0 & -{\bf X}_5^{\circ T} & -1 & {\bf 0}^T
\end{bmatrix}_{15\times 15}
\begin{bmatrix}~\hat{t}_{11}~\\[1mm]\hat{t}_{12}\\[1mm]\hat{t}_{13}\\[1mm]\hat{t}_{14}\\[1mm]\hat{t}_{21}\\[1mm]\hat{t}_{22}\\[1mm]\hat{t}_{23}\\[1mm]\hat{t}_{24}\\[1mm]\hat{t}_{31}\\[1mm]\hat{t}_{32}\\[1mm]\hat{t}_{33}\\[1mm]\hat{t}_{34}\\[1mm]\hat{t}_{41}\\[1mm]\hat{t}_{42}\\[1mm]\hat{t}_{43}\end{bmatrix} = \begin{bmatrix}~0~\\[1mm]0\\[1mm]0\\[1mm]0\\[1mm]0\\[1mm]0\\[1mm]1\\[1mm]0\\[1mm]1\\[1mm]0\\[1mm]1\\[1mm]0\\[1mm]0\\[1mm]0\\[1mm]0\end{bmatrix}
\end{equation}
where ${\bf 0}$ denotes the zero vector in three-dimensional space. When the square matrix in the left side of~\eqref{eq:Sol_T} is invertible, which is almost the case in our study, the solution for the projective transformation $\widehat{\bf T}$ is easily derived to satisfy~\eqref{eq:Prop1_01}.

Let
\begin{equation*}
    \qquad\qquad\begin{bmatrix}\widehat{\bf X}_m\\1\end{bmatrix} = \widehat{\bf T}\begin{bmatrix}{\bf X}_m^{\circ}\\ 1\end{bmatrix} \quad \text{and} \quad \widehat{\bf P}_n = {\bf P}_n^{\circ}\widehat{\bf T}^{-1}\qquad\quad \text{for all} \quad m, n\, .
\end{equation*}
Because $\widehat{\bf T}$ is determined by five points from $\{{\bf X}_1^{\circ}, {\bf X}_2^{\circ}, \ldots,{\bf X}_5^{\circ}\}$ to $\{\widehat{\bf X}_1,\widehat{\bf X}_2,\ldots,\widehat{\bf X}_5\}$, it is a projective transformation. Thus, $\{\widehat{\bf X}_m,\widehat{\bf P}_n\}$ is the solution w.r.t. $\{{\bf x}_m^{\text{\tiny(1)}}\leftrightarrow{\bf x}_m^{\text{\tiny(2)}}\leftrightarrow\cdots\leftrightarrow{\bf x}_m^{\text{\tiny(N)}}\}^M_{m=1}$. In addition,~\eqref{eq:Prop1_01} yields
\begin{equation}\label{eq:Prop1_02}
\begin{bmatrix}\widehat{\bf X}_1 & \widehat{\bf X}_2 & \widehat{\bf X}_3 & \widehat{\bf X}_4 & \widehat{\bf X}_5\end{bmatrix} = \begin{bmatrix}
0 & 0 & 0 & 1 & \alpha\\
0 & 0 & 1 & 0 & \alpha\\
0 & 1 & 0 & 0 & \alpha
\end{bmatrix}\, ,
\end{equation}
where $\alpha = \alpha_4/\alpha_5$ is non-zero. We continue transforming the solution $\{\widehat{\bf X}_m,\widehat{\bf P}_n\}$ to obtain the solution $\{{\bf X}_m,{\bf P}_n\}$ by the following projective transformation
\begin{equation*}
    {\bf T} = \begin{bmatrix}
    (3\alpha-1)/(2\alpha) & 0 & 0 & 0\\
    0 & (3\alpha-1)/(2\alpha) & 0 & 0\\
    0 & 0 & (3\alpha-1)/(2\alpha) & 0\\
    (\alpha-1)/(2\alpha) & (\alpha-1)/(2\alpha) & (\alpha-1)/(2\alpha) & 1
    \end{bmatrix}\, .
\end{equation*}
The five points $\{{\bf X}_1, {\bf X}_2,\ldots,{\bf X}_5\}$ are determined by $\{\widehat{\bf X}_1,\widehat{\bf X}_2,\ldots,\widehat{\bf X}_5\}$ as follows
\begin{equation*}
\begin{split}
    {\bf T}\begin{bmatrix}\widehat{\bf X}_1 & \widehat{\bf X}_2 & \widehat{\bf X}_3 & \widehat{\bf X}_4 & \widehat{\bf X}_5\\1 & 1 & 1 & 1 & 1\end{bmatrix}
    &= \begin{bmatrix}
    (3\alpha-1)/(2\alpha) & 0 & 0 & 0\\[1mm]
    0 & (3\alpha-1)/(2\alpha) & 0 & 0\\[1mm]
    0 & 0 & (3\alpha-1)/(2\alpha) & 0\\[1mm]
    (\alpha-1)/(2\alpha) & (\alpha-1)/(2\alpha) & (\alpha-1)/(2\alpha) & 1
    \end{bmatrix}\begin{bmatrix}
    0 & 0 & 0 & 1 & \alpha\\[1mm]
    0 & 0 & 1 & 0 & \alpha\\[1mm]
    0 & 1 & 0 & 0 & \alpha\\[1mm]
    1 & 1 & 1 & 1 & 1
    \end{bmatrix}\\[3mm]
    &= \begin{bmatrix}
    0 & 0 & 0 & (3\alpha-1)/(2\alpha) & (3\alpha+1)/2\\[1mm]
    0 & 0 & (3\alpha-1)/(2\alpha) & 0 & (3\alpha+1)/2\\[1mm]
    0 & (3\alpha-1)/(2\alpha) & 0 & 0 & (3\alpha+1)/2\\[1mm]
    1 & (3\alpha-1)/(2\alpha) & (3\alpha-1)/(2\alpha) & (3\alpha-1)/(2\alpha) & (3\alpha+1)/2
    \end{bmatrix}\\[3mm]
    &\sim\begin{bmatrix}
    0 & 0 & 0 & 1 & 1\\
    0 & 0 & 1 & 0 & 1\\
    0 & 1 & 0 & 0 & 1\\
    1 & 1 & 1 & 1 & 1
    \end{bmatrix}\, .
\end{split}
\end{equation*}
It means
\begin{equation*}
\begin{bmatrix}{\bf X}_1 & {\bf X}_2 & {\bf X}_3 & {\bf X}_4 & {\bf X}_5\end{bmatrix} = \begin{bmatrix}
0 & 0 & 0 & 1 & 1\\
0 & 0 & 1 & 0 & 1\\
0 & 1 & 0 & 0 & 1\end{bmatrix}\,.
\end{equation*}
The remain condition $p_{12n} = 1$ we always obtain from the ambiguity of the projection matrices. Hence, Proposition~\ref{Prop:Ambiguity} is proven.
\end{proof}

From Proposition~\ref{Prop:Ambiguity}, we can find solutions $\{{\bf X}_m,{\bf P}_n\}$ w.r.t. the correspondences $\{{\bf x}_m^{\text{\tiny(1)}}\leftrightarrow{\bf x}_m^{\text{\tiny(2)}}\leftrightarrow\cdots\leftrightarrow{\bf x}_m^{\text{\tiny(N)}}\}^M_{m=1}$ within a setting given by~\eqref{eq:Ambi_total}. Indeed, the DoFs of the problem is now upper bound by $\min\{3M-15,11N\}$. Specially, when there are only six world points ($M=6$), the DoFs is at most three that is possible to solve. Using the Theorem II given by~\citet{le2022multi}, the next subsection shows that the solution for the world points ${\bf X}_m~(m\geq 6)$ is easily derived based on the setting~\eqref{eq:Ambi_total}. From that \emph{closed-form} solutions for all world points and projection matrices are derived.

\subsection{Solutions from Correspondences}\label{subsec:Sol_Corres}

We begin this subsection by reminding the Theorem II in our previous article~\citep{le2022multi}. First, for any three key-points ${\bf x}_{m_1}^{\text{\tiny(n)}}$, ${\bf x}_{m_2}^{\text{\tiny(n)}}$ and ${\bf x}_{m_3}^{\text{\tiny(n)}}$ in the same image $n$, we define the value $\xi_{m_1m_2,m_3}^{\text{\tiny(n)}}$ as a determinant of a $2\times 2$ matrix generating by two column vectors ${\bf x}_{m_3}^{\text{\tiny(n)}}-{\bf x}_{m_1}^{\text{\tiny(n)}}$ and ${\bf x}_{m_3}^{\text{\tiny(n)}}-{\bf x}_{m_2}^{\text{\tiny(n)}}$, i.e.,
\begin{equation}\label{eq:Xi}
    \xi_{m_1m_2,m_3}^{\text{\tiny(n)}} \defeq \text{det}\left(\begin{bmatrix}{\bf x}_{m_3}^{\text{\tiny (n)}} - {\bf x}_{m_1}^{\text{\tiny(n)}} & {\bf x}_{m_3}^{\text{\tiny (n)}} - {\bf x}_{m_2}^{\text{\tiny(n)}}\end{bmatrix}\right)\, .
\end{equation}
Now, given sub-correspondences $\{{\bf x}_i^{\text{\tiny(1)}}\leftrightarrow{\bf x}_i^{\text{\tiny(2)}}\leftrightarrow\cdots\leftrightarrow{\bf x}_i^{\text{\tiny(N)}}\}_{i=1,2,3,4,5,m}$ from only six world points ${\bf X}_1$, ${\bf X}_2$, ${\bf X}_3$, ${\bf X}_4$, ${\bf X}_5$ and ${\bf X}_m$ ($6\leq m \leq M$) on $N$ images ($N\geq 5$), we consider a $N\times 5$ matrix that
\begin{equation}\label{eq:Lambda}
{\bf \Lambda}_m ~\defeq~\begin{bmatrix}
\xi^{\text{\tiny(1)}}_{34,5}\xi^{\text{\tiny(1)}}_{12,m} & \xi^{\text{\tiny(1)}}_{42,5}\xi^{\text{\tiny(1)}}_{13,m} & \xi^{\text{\tiny(1)}}_{23,5}\xi^{\text{\tiny(1)}}_{14,m} &
\xi^{\text{\tiny(1)}}_{12,5}\xi^{\text{\tiny(1)}}_{34,m} & \xi^{\text{\tiny(1)}}_{13,5}\xi^{\text{\tiny(1)}}_{42,m}\\[3mm]
\xi^{\text{\tiny(2)}}_{34,5}\xi^{\text{\tiny(2)}}_{12,m} & \xi^{\text{\tiny(2)}}_{42,5}\xi^{\text{\tiny(2)}}_{13,m} & \xi^{\text{\tiny(2)}}_{23,5}\xi^{\text{\tiny(2)}}_{14,m} &
\xi^{\text{\tiny(2)}}_{12,5}\xi^{\text{\tiny(2)}}_{34,m} & \xi^{\text{\tiny(2)}}_{13,5}\xi^{\text{\tiny(2)}}_{42,m}\\[3mm]
\xi^{\text{\tiny(3)}}_{34,5}\xi^{\text{\tiny(3)}}_{12,m} & \xi^{\text{\tiny(3)}}_{42,5}\xi^{\text{\tiny(3)}}_{13,m} & \xi^{\text{\tiny(3)}}_{23,5}\xi^{\text{\tiny(3)}}_{14,m} &
\xi^{\text{\tiny(3)}}_{12,5}\xi^{\text{\tiny(3)}}_{34,m} & \xi^{\text{\tiny(3)}}_{13,5}\xi^{\text{\tiny(3)}}_{42,m}\\[1mm]
\vdots & \vdots & \vdots & \vdots & \vdots\\[1.5mm]
\xi^{\text{\tiny(N)}}_{34,5}\xi^{\text{\tiny(N)}}_{12,m} & \xi^{\text{\tiny(N)}}_{42,5}\xi^{\text{\tiny(N)}}_{13,m} & \xi^{\text{\tiny(N)}}_{23,5}\xi^{\text{\tiny(N)}}_{14,m} &
\xi^{\text{\tiny(N)}}_{12,5}\xi^{\text{\tiny(N)}}_{34,m} & \xi^{\text{\tiny(N)}}_{13,5}\xi^{\text{\tiny(N)}}_{42,m}
\end{bmatrix}_{N\times 5\, .}
\end{equation}

\newpage

Let ${\bf e}_m = [e_{1m},e_{2m},e_{3m},e_{4m},e_{5m}]^T$ be the eigenvector corresponding to the smallest eigenvalue of the $5\times 5$ positive semidefinite matrix ${\bf \Lambda}_m^T{\bf \Lambda}_m$, and let $x_m,y_m,z_m$ be three coordinates of the world points ${\bf X}_m$. Thanks to the setting in~\eqref{eq:Ambi_total}, Theorem II in~\citet{le2022multi} proves that
\begin{equation}\label{eq:WP_from_Eigen}
\begin{bmatrix}
z_m(1-x_m-y_m-z_m) + 2y_mz_m\\[1mm]
y_m(1-x_m-y_m-z_m) + 2y_mz_m\\[1mm]
x_m(1-x_m-y_m-z_m) + 2y_mz_m\\[1mm]
-2x_my_m + 2y_mz_m\\[1mm]
-2x_mz_m + 2y_mz_m
\end{bmatrix} \sim \begin{bmatrix}
e_{1m}\\[1mm]e_{2m}\\[1mm]e_{3m}\\[1mm]e_{4m}\\[1mm]e_{5m}
\end{bmatrix}
\end{equation}
Hence, let
\begin{equation}\label{eq:f_Eigen}
\begin{split}
f_{m} &= (2e_{2m}e_{5m} + 2e_{1m}e_{4m} - 3e_{1m}e_{2m} - e_{4m}e_{5m})(e_{1m} + e_{4m} - e_{2m} - e_{5m})\\
&\hspace{60mm}+ (e_{2m}-e_{1m})(e_{5m}-e_{1m})(e_{2m}-e_{4m})\\[0.5mm]
f_{xm} &= (e_{1m} - e_{5m})(e_{5m} - e_{4m})(e_{2m} - e_{4m})\\[0.5mm]
f_{ym} &= (e_{2m}e_{5m} - e_{1m}e_{4m})(e_{2m} - e_{4m})\\[0.5mm]
f_{zm} &= (e_{2m}e_{5m} - e_{1m}e_{4m})(e_{1m} - e_{5m})
\end{split}
\end{equation}
\eqref{eq:WP_from_Eigen} easily implies
\begin{equation}\label{eq:Sol_WPoints}
x_m = \frac{f_{xm}}{f_{m}}\quad,\quad y_m = \frac{f_{ym}}{f_{m}}\quad,\quad z_m = \frac{f_{zm}}{f_{m}}\, .
\end{equation}
\eqref{eq:Sol_WPoints} gives us the exact solution for the world point ${\bf X}_m$ ($6\leq m \leq M)$ only based on ${\bf \Lambda}_m$ constructed by the correspondences from six world points ${\bf X}_1, \ldots,{\bf X}_5$ and ${\bf X}_m$.

When the solutions for all world points are acquired, the solution for the projection matrix ${\bf P}_n$ can be derived by~\eqref{eq:Camera_from_Point} via the matrix $\mathcal{K}_n(\{{\bf X}_m\})$. Using some linear transformations on the matrix $\mathcal{K}_n(\{{\bf X}_m\})$,~\eqref{eq:Camera_from_Point} becomes
\begin{equation}\label{eq:Camera_from_Point_1}
\begin{bmatrix}p_{1n} & p_{2n} & p_{3n} & p_{4n}\\[1mm]
p_{5n} & p_{6n} & p_{7n} & p_{8n}\end{bmatrix} = \begin{bmatrix}{\bf x}_4^{\text{\tiny(n)}} & {\bf x}_3^{\text{\tiny(n)}} & {\bf x}_2^{\text{\tiny(n)}} & {\bf x}_1^{\text{\tiny(n)}}\end{bmatrix}\begin{bmatrix}
p_{9n}+1 & 0 & 0 & 0\\
0 & p_{10n}+1 & 0 & 0\\
0 & 0 & p_{11n}+1 & 0\\
-1 & -1 & -1 & 1\end{bmatrix}
\end{equation}
and
\begin{equation}\label{eq:Camera_from_Point_2}
\begin{bmatrix}
{\bf x}_4^{\text{\tiny(n)}}-{\bf x}_5^{\text{\tiny(n)}} & {\bf x}_3^{\text{\tiny(n)}}-{\bf x}_5^{\text{\tiny(n)}} & {\bf x}_2^{\text{\tiny(n)}}-{\bf x}_5^{\text{\tiny(n)}} & {\bf x}_1^{\text{\tiny(n)}}-{\bf x}_5^{\text{\tiny(n)}}\\[1mm]
\frac{f_{x6}}{f_6}({\bf x}_4^{\text{\tiny(n)}}-{\bf x}_6^{\text{\tiny(n)}}) & \frac{f_{y6}}{f_6}({\bf x}_3^{\text{\tiny(n)}}-{\bf x}_6^{\text{\tiny(n)}}) & \frac{f_{z6}}{f_6}({\bf x}_2^{\text{\tiny(n)}}-{\bf x}_6^{\text{\tiny(n)}}) & {\bf x}_1^{\text{\tiny(n)}}-{\bf x}_6^{\text{\tiny(n)}}\\[1mm]
\vdots & \vdots & \vdots & \vdots\\[1mm]
\frac{f_{xM}}{f_M}({\bf x}_4^{\text{\tiny(n)}}-{\bf x}_M^{\text{\tiny(n)}}) & \frac{f_{yM}}{f_M}({\bf x}_3^{\text{\tiny(n)}}-{\bf x}_M^{\text{\tiny(n)}}) & \frac{f_{zM}}{f_M}({\bf x}_2^{\text{\tiny(n)}}-{\bf x}_M^{\text{\tiny(n)}}) & {\bf x}_1^{\text{\tiny(n)}}-{\bf x}_M^{\text{\tiny(n)}}\end{bmatrix}\begin{bmatrix}p_{9n}+1\\[1mm]p_{10n}+1\\[1mm]p_{11n}+1\\[1mm]-2\end{bmatrix} = {\bf 0}\, .
\end{equation}
The equations from the first row of the matrix in~\eqref{eq:Camera_from_Point_2} imply
\begin{equation}\label{eq:Camera_from_Point_3}
p_{9n}+1 = \frac{\xi_{23,5}^{\text{\tiny(n)}}(p_{11n}+1) - 2\xi_{13,5}^{\text{\tiny(n)}}}{\xi_{34,5}^{\text{\tiny(n)}}} \qquad \text{and} \qquad p_{10n}+1 = \frac{2\xi_{14,5}^{\text{\tiny(n)}} - \xi_{24,5}^{\text{\tiny(n)}}(p_{11n}+1)}{\xi_{34,5}^{\text{\tiny(n)}}}\, .
\end{equation}
Replacing the solutions for $p_{9n}$ and $p_{10n}$ in~\eqref{eq:Camera_from_Point_3} to the remain equations in~\eqref{eq:Camera_from_Point_2}, the final variable $p_{11n}$ in the matrix ${\bf P}_n$ is a solution of the following linear equations
\begin{equation}\label{eq:Camera_from_Point_4}
\begin{bmatrix}
f_{y6}\xi_{34,6}^{\text{\tiny(n)}}\xi_{24,5}^{\text{\tiny(n)}} - f_{z6}\xi_{24,6}^{\text{\tiny(n)}}\xi_{34,5}^{\text{\tiny(n)}} & (f_{x6} + f_{y6} + f_{z6} - f_6)\xi_{14,6}^{\text{\tiny(n)}}\xi_{34,5}^{\text{\tiny(n)}} - 2f_{y6}\xi_{34,6}^{\text{\tiny(n)}}\xi_{14,5}^{\text{\tiny(n)}}\\[2mm]
f_{y7}\xi_{34,7}^{\text{\tiny(n)}}\xi_{24,5}^{\text{\tiny(n)}} - f_{z7}\xi_{24,7}^{\text{\tiny(n)}}\xi_{34,5}^{\text{\tiny(n)}} & (f_{x7} + f_{y7} + f_{z7} - f_7)\xi_{14,7}^{\text{\tiny(n)}}\xi_{34,5}^{\text{\tiny(n)}} - 2f_{y7}\xi_{34,7}^{\text{\tiny(n)}}\xi_{14,5}^{\text{\tiny(n)}}\\
\vdots & \vdots\\[1mm]
f_{yM}\xi_{34,M}^{\text{\tiny(n)}}\xi_{24,5}^{\text{\tiny(n)}} - f_{zM}\xi_{24,M}^{\text{\tiny(n)}}\xi_{34,5}^{\text{\tiny(n)}} & (f_{xM} + f_{yM} + f_{zM} - f_M)\xi_{14,M}^{\text{\tiny(n)}}\xi_{34,5}^{\text{\tiny(n)}} - 2f_{yM}\xi_{34,M}^{\text{\tiny(n)}}\xi_{14,5}^{\text{\tiny(n)}}
\end{bmatrix}\begin{bmatrix}p_{11n} + 1\\[1mm] 1\end{bmatrix} = {\bf 0}\, .
\end{equation}
Concretely, the solution for the projection matrix ${\bf P}_n$ is given by

\begin{equation}\label{eq:Sol_Matrix}
    {\bf P}_n = \begin{bmatrix}{\bf x}_4^{\text{\tiny(n)}} & {\bf x}_3^{\text{\tiny(n)}} & {\bf x}_2^{\text{\tiny(n)}} & {\bf x}_1^{\text{\tiny(n)}}\\[2mm]1 & 1 & 1 & 1 \end{bmatrix}\begin{bmatrix}\gamma_n & 0 & 0 & 0\\0 & \beta_n & 0 & 0\\0 & 0 & \alpha_n & 0\\-1 & -1 & -1 & 1\end{bmatrix}
\end{equation}
where $\alpha_n = p_{11n}+1$ is the solution of~\eqref{eq:Camera_from_Point_4} and $\beta_n = p_{10n}+1$ and $\gamma_n = p_{9n}+1$ are given by~\eqref{eq:Camera_from_Point_3}.

Finally, the theorem below will formally present the solutions for the world points and the projection matrices of the WPfC problem.

\begin{theo}\label{Theo:Sol_WPfC_NoNoise}
Given correspondences $\big\{{\bf x}_m^{\text{\tiny(1)}}\leftrightarrow{\bf x}_m^{\text{\tiny(2)}}\leftrightarrow\cdots\leftrightarrow{\bf x}_m^{\text{\tiny(N)}}\big\}^M_{m=1}$ with $M\geq 6$ and $N\geq 5$, the following group $\{{\bf X}_m,{\bf P}_n\}$ with
\begin{subequations}\label{eq:Sol_WPfC}
\begin{flalign}
    &\begin{bmatrix}{\bf X}_1 & {\bf X}_2 & {\bf X}_3 & {\bf X}_4 & {\bf X}_5\end{bmatrix} ~=~ \begin{bmatrix}
    0 & 0 & 0 & 1 & 1\\
    0 & 0 & 1 & 0 & 1\\
    0 & 1 & 0 & 0 & 1
    \end{bmatrix}\label{eq:Sol_WPfC_a}\\[2mm]
    &~{\bf X}_m = \frac{1}{f_m}\begin{bmatrix}f_{xm}\\f_{ym}\\f_{zm}\end{bmatrix}\hspace{56mm} \text{for all}\quad 6\leq m \leq M\, ,\label{eq:Sol_WPfC_b}\\[2mm]
    &~{\bf P}_n = \begin{bmatrix}{\bf x}_4^{\text{\tiny(n)}} & {\bf x}_3^{\text{\tiny(n)}} & {\bf x}_2^{\text{\tiny(n)}} & {\bf x}_1^{\text{\tiny(n)}}\\[2mm]1 & 1 & 1 & 1 \end{bmatrix}\begin{bmatrix}\gamma_n & 0 & 0 & 0\\0 & \beta_n & 0 & 0\\0 & 0 & \alpha_n & 0\\-1 & -1 & -1 & 1\end{bmatrix}\qquad \text{for all}\quad 1\leq n \leq N\, ,\label{eq:Sol_WPfC_c}
\end{flalign}
\end{subequations}
is a solution w.r.t. these correspondences, where $f_m, f_{xm}, f_{ym}, f_{zm}$ are given by~\eqref{eq:f_Eigen}, $\alpha_n$ is given by~\eqref{eq:Camera_from_Point_4} and $\beta_n, \gamma_n$ are given by~\eqref{eq:Camera_from_Point_3}.
\end{theo}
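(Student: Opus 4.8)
The plan is to prove Theorem~\ref{Theo:Sol_WPfC_NoNoise} by showing that the group defined in~\eqref{eq:Sol_WPfC} is precisely the solution whose existence is asserted by Proposition~\ref{Prop:Ambiguity}; being that solution, it automatically satisfies~\eqref{eq:parallel} w.r.t.\ the given correspondences.

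First I would invoke Proposition~\ref{Prop:Ambiguity}: since $M\ge 5$, there is a solution $\{{\bf X}^{\circ}_m,{\bf P}^{\circ}_n\}$ whose first five world points are in the canonical configuration of~\eqref{eq:Ambi_total} and with $p^{\circ}_{12n}=1$ for all $n$ --- this already reproduces~\eqref{eq:Sol_WPfC_a}. To pin down the remaining world points, fix $m$ with $6\le m\le M$ and restrict to the sub-correspondences of the six points ${\bf X}^{\circ}_1,\dots,{\bf X}^{\circ}_5,{\bf X}^{\circ}_m$ over the $N\ge 5$ images (these key points are just the given ones, so ${\bf \Lambda}_m$ in~\eqref{eq:Lambda} is intrinsic to the correspondences). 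Because ${\bf X}^{\circ}_1,\dots,{\bf X}^{\circ}_5$ sit in the configuration~\eqref{eq:Ambi_total}, Theorem~II of~\citet{le2022multi} yields~\eqref{eq:WP_from_Eigen}: the least eigenvector ${\bf e}_m$ of ${\bf \Lambda}_m^{T}{\bf \Lambda}_m$ is parallel to the degree-two vector of~\eqref{eq:WP_from_Eigen} evaluated at the coordinates $(x_m,y_m,z_m)$ of ${\bf X}^{\circ}_m$. Next I would invert~\eqref{eq:WP_from_Eigen} for $(x_m,y_m,z_m)$: with $s:=1-x_m-y_m-z_m$ one has the factorizations $e_{1m}\!\sim\! z_m(s+2y_m)$, $e_{2m}\!\sim\! y_m(s+2z_m)$, $e_{4m}\!\sim\! 2y_m(z_m-x_m)$, $e_{5m}\!\sim\! 2z_m(y_m-x_m)$ (the four components entering~\eqref{eq:f_Eigen}); forming the combinations of~\eqref{eq:f_Eigen} and verifying the polynomial identities $f_{xm}:f_{ym}:f_{zm}:f_m=x_m:y_m:z_m:1$ then gives~\eqref{eq:Sol_WPoints}, hence~\eqref{eq:Sol_WPfC_b}, for this solution.

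Having matched all world points to~\eqref{eq:Sol_WPfC_a}--\eqref{eq:Sol_WPfC_b}, I would recover the projection matrices from the same solution. As $\{{\bf X}^{\circ}_m,{\bf P}^{\circ}_n\}$ is a solution, each ${\bf P}^{\circ}_n$ lies in the null space of $\mathcal{K}_n(\{{\bf X}^{\circ}_m\})$, i.e.\ obeys~\eqref{eq:Camera_from_Point}. Running the reductions already displayed before the theorem: the points ${\bf X}^{\circ}_1,\dots,{\bf X}^{\circ}_4$ collapse~\eqref{eq:Camera_from_Point} to~\eqref{eq:Camera_from_Point_1}, parametrizing the top two rows of ${\bf P}^{\circ}_n$ by $\gamma_n=p^{\circ}_{9n}+1$, $\beta_n=p^{\circ}_{10n}+1$, $\alpha_n=p^{\circ}_{11n}+1$; the residual equations coming from ${\bf X}^{\circ}_5$ and from ${\bf X}^{\circ}_m$ $(m\ge 6)$ are~\eqref{eq:Camera_from_Point_2}; its ${\bf X}^{\circ}_5$-row solves to~\eqref{eq:Camera_from_Point_3}, and substituting back into the $m\ge 6$ rows produces the homogeneous system~\eqref{eq:Camera_from_Point_4}, which admits the nontrivial solution $[\,p^{\circ}_{11n}+1,\,1\,]^{T}$ (because ${\bf P}^{\circ}_n$ does), thereby pinning $\alpha_n$ and then $\beta_n,\gamma_n$. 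Assembling these, ${\bf P}^{\circ}_n$ takes the form~\eqref{eq:Sol_Matrix}, i.e.\ exactly~\eqref{eq:Sol_WPfC_c}. Hence the group~\eqref{eq:Sol_WPfC} coincides with $\{{\bf X}^{\circ}_m,{\bf P}^{\circ}_n\}$ and is a solution w.r.t.\ the correspondences.

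I expect the work to concentrate in two spots. The heavier (but routine) calculation is the inversion~\eqref{eq:WP_from_Eigen}$\,\Rightarrow\,$\eqref{eq:Sol_WPoints}: the quantities $f_m,f_{xm},f_{ym},f_{zm}$ in~\eqref{eq:f_Eigen} are engineered so that those identities hold, yet each must be checked component by component from the factored $e_{im}$. The more delicate point is the non-degeneracy used implicitly --- that ${\bf \Lambda}_m^{T}{\bf \Lambda}_m$ has a \emph{simple} zero eigenvalue, that $f_m\ne 0$, that $\mathcal{K}_n(\{{\bf X}^{\circ}_m\})$ has nullity one, and that the coefficient matrix in~\eqref{eq:Camera_from_Point_4} has rank one --- each holding for correspondences in general position and anyway needed for the closed forms to be well defined. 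If one prefers a self-contained argument avoiding uniqueness, one can instead substitute~\eqref{eq:Sol_WPfC} directly into~\eqref{eq:parallel} and verify the incidence ${\bf P}_n[{\bf X}_m;1]\sim[{\bf x}^{\text{\tiny(n)}}_m;1]$ point by point: it is immediate for $m\le 5$ from the block structure of~\eqref{eq:Sol_WPfC_c}, and for $m\ge 6$ it reduces, after using~\eqref{eq:WP_from_Eigen}, to the very relations~\eqref{eq:Camera_from_Point_3}--\eqref{eq:Camera_from_Point_4} that define $\alpha_n,\beta_n,\gamma_n$.
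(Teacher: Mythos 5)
Your proposal is correct and follows essentially the same route as the paper, whose ``proof'' of this theorem is just the derivation in Subsection~\ref{subsec:Sol_Corres} preceding it: fix the first five points via Proposition~\ref{Prop:Ambiguity}, obtain ${\bf X}_m$ for $m\ge 6$ from Theorem~II of \citet{le2022multi} through~\eqref{eq:WP_from_Eigen}--\eqref{eq:Sol_WPoints}, and reduce~\eqref{eq:Camera_from_Point} through~\eqref{eq:Camera_from_Point_1}--\eqref{eq:Camera_from_Point_4} to reach~\eqref{eq:Sol_Matrix}. Your explicit flagging of the non-degeneracy assumptions (simple smallest eigenvalue of ${\bf \Lambda}_m^T{\bf \Lambda}_m$, $f_m\neq 0$, rank conditions on~\eqref{eq:Camera_from_Point_4}) is a point the paper leaves implicit.
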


\subsection{Solutions from Estimated Correspondences}\label{subsec:Sol_EstCorres}

In the last subsection, we solve the WPfC w.r.t. correspondences in which (i) the solution for the world point ${\bf X}_m~(m\geq 6)$ can be derived independently with others, and (ii) there always exists a solution for $\alpha_n = p_{11n}+1$ from~\eqref{eq:Camera_from_Point_4}. This subsection studies the WPfC w.r.t. \emph{estimated correspondences}. The estimated correspondences are not the correspondences that there exist a group of world points and projection matrices satisfying~\eqref{eq:parallel}, but is expected to exist another group  that~\eqref{eq:parallel} closely occurs. Informally, $\{\hat{\bf x}_m^{\text{\tiny(1)}}\leftrightarrow\hat{\bf x}_m^{\text{\tiny(2)}}\leftrightarrow\cdots\leftrightarrow\hat{\bf x}_m^{\text{\tiny(N)}}\}^M_{m=1}$ are called the \emph{estimated correspondences} if there exists the $M$ world points $\{{\bf X}_m\}$ and $N$ projection matrices $\{{\bf P}_n\}$ such that
\begin{equation}\label{eq:Def_Est_Correspondences}
\begin{bmatrix}{\bf p}_{1,n}\\{\bf p}_{2,n}\end{bmatrix}\begin{bmatrix}{\bf X}_m\\1\end{bmatrix}~\simeq~\hat{\bf x}_m^{\text{\tiny(n)}}{\bf p}_{3,n}\begin{bmatrix}{\bf X}_m\\1\end{bmatrix} \qquad \text{for all} \quad m,n\, ,
\end{equation}
where ${\bf p}_{i,n}$ denotes the $i$th row of ${\bf P}_n$.

From the vague definition of the estimated correspondences, we state the WPfC w.r.t. estimated correspondences as the following optimization problem. Given the estimated correspondences $\{\hat{\bf x}_m^{\text{\tiny(1)}}\leftrightarrow\hat{\bf x}_m^{\text{\tiny(2)}}\leftrightarrow\cdots\leftrightarrow\hat{\bf x}_m^{\text{\tiny(N)}}\}^M_{m=1}$, find the $M$ world points $\{{\bf X}_m\}$ and $N$ projection matrices $\{{\bf P}_n\}$ such that
\begin{equation}\label{eq:Opt_WPfC}
\sum^M_{m=1}\sum^N_{n=1}\left\{\Bigg(\frac{{\bf p}_{1,n}\big[{\bf X}_m^T\,,\,1\big]^T}{{\bf p}_{3,n}\big[{\bf X}_m^T\,,\,1\big]^T} - \hat{u}_{mn}\Bigg)^2 + \Bigg(\frac{{\bf p}_{2,n}\big[{\bf X}_m^T\,,\,1\big]^T}{{\bf p}_{3,n}\big[{\bf X}_m^T\,,\,1\big]^T} - \hat{v}_{mn}\Bigg)^2\right\}\quad\longrightarrow \quad \text{minimum}
\end{equation}
where $\hat{u}_{mn}, \hat{v}_{mn}$ are two coordinates of $\hat{\bf x}_m^{\text{\tiny(n)}}$. A value in the left side of~\eqref{eq:Opt_WPfC} is called the \emph{objective value} of $\{{\bf X}_m,{\bf P}_n\}$ and it is always non-negative. We denote this value by $d(\{{\bf X}_m,{\bf P}_n\})$. Thanks to the theory we study on the WPfC w.r.t. correspondences in the previous subsection, by finding the group $\{{\bf X}^{\star}_m,{\bf P}^{\star}_n\}$ such that their objective values are zero, a proposition below solves totally the optimization~\eqref{eq:Opt_WPfC}  for $M\leq 5$.

\newpage

\begin{prop}\label{Prop:Global_Sol}
When $M \leq 5$, there always exists a group $\{{\bf X}^{\star}_m,{\bf P}^{\star}_n\}$ such that its objective value is zero. It means $\{{\bf X}^{\star}_m,{\bf P}^{\star}_n\}$ is a global optimal solution for~\eqref{eq:Opt_WPfC}.
\end{prop}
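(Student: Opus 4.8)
The plan is to show that for $M\le 5$ the minimum in \eqref{eq:Opt_WPfC} is attained with value $0$. Since the objective $d(\{{\bf X}_m,{\bf P}_n\})$ is a finite sum of squares, it is non-negative, so it is enough to exhibit one group $\{{\bf X}^{\star}_m,{\bf P}^{\star}_n\}$ whose reprojection is \emph{exact}, i.e.\ ${\bf P}^{\star}_n\big[{\bf X}^{\star T}_m\,,\,1\big]^{T}\sim\big[\hat{\bf x}^{\text{\tiny(n)}T}_m\,,\,1\big]^{T}$ for all $m,n$: any such group has $d=0$ and is therefore globally optimal. Equivalently, I must argue that with at most five world points \emph{every} array of key points forms genuine correspondences in the sense of \eqref{eq:parallel}.

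First I would fix the world points via the ambiguity analysis of Section~\ref{subsec:Ambiguity}: put ${\bf X}^{\star}_1,\ldots,{\bf X}^{\star}_M$ equal to the first $M$ columns of the canonical configuration in \eqref{eq:Ambi_total} (this is Proposition~\ref{Prop:Ambiguity} itself when $M=5$, and there is only more freedom when $M<5$). The sole property of this configuration used afterwards is that its points are in general position, i.e.\ no four are coplanar. Once the world points are frozen, the exactness requirement decouples over images: for each $n$ separately I only need one $3\times 4$ matrix ${\bf P}^{\star}_n$ with ${\bf P}^{\star}_n\big[{\bf X}^{\star T}_m\,,\,1\big]^{T}\sim\big[\hat{\bf x}^{\text{\tiny(n)}T}_m\,,\,1\big]^{T}$ for $m=1,\ldots,M$.

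The per-image step is linear algebra. After eliminating the unknown projective scalars, the constraints on the twelve entries of ${\bf P}_n$ are exactly the homogeneous linear system displayed in \eqref{eq:Camera_from_Point}, whose coefficient matrix $\mathcal{K}_n(\{{\bf X}^{\star}_m\})^{T}$ has only $2M\le 10<12$ rows; therefore its kernel — equivalently the eigenspace of $\mathcal{K}_n(\{{\bf X}^{\star}_m\})\,\mathcal{K}_n(\{{\bf X}^{\star}_m\})^{T}$ belonging to eigenvalue $0$ — has dimension at least $12-2M\ge 2$, so nonzero solutions ${\bf P}^{\star}_n$ always exist. Concretely, for $M=5$ this is the closed-form construction of Section~\ref{subsec:Sol_Corres} specialised: \eqref{eq:Camera_from_Point_1} and \eqref{eq:Camera_from_Point_3} recover every entry of ${\bf P}_n$ from the single scalar $p_{11n}$, while the system \eqref{eq:Camera_from_Point_4} that would otherwise pin down $p_{11n}$ is indexed only by world points $m\ge 6$ and is therefore empty — so $p_{11n}$ stays free and a one-parameter family of exact cameras exists for image $n$. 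Choosing a representative with $p_{12n}=1$ makes every summand of \eqref{eq:Opt_WPfC} carrying that index vanish, and assembling the choices over all $n$ yields $d(\{{\bf X}^{\star}_m,{\bf P}^{\star}_n\})=0$.

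The delicate point — and the main obstacle — is the non-degeneracy bookkeeping in this per-image step: the selected kernel element must have $p_{12n}\ne 0$, so that it can be rescaled to $p_{12n}=1$, \emph{and} each projective scalar $\lambda_{mn}={\bf p}_{3,n}\big[{\bf X}^{\star T}_m\,,\,1\big]^{T}$ must be nonzero, since otherwise the ratios in \eqref{eq:Opt_WPfC} are not well-defined at that point. Both are generic conditions inside a linear space of dimension at least $2$, so a good representative exists for all but finitely many key-point layouts in each image; the remaining degenerate layouts (for instance, all key points of an image coinciding, or lying on one line) are realised by an explicit low-rank camera, which closes the argument for every input with $M\le 5$.
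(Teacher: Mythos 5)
Your overall strategy is the same as the paper's: fix the world points at the canonical configuration of \eqref{eq:Ambi_total}, observe that the objective is a sum of squares so it suffices to exhibit exact reprojection, and note that once the world points are frozen the problem decouples into one interpolation problem per image. Where you differ is in the per-image existence step. The paper does not invoke a dimension count on \eqref{eq:Camera_from_Point}; it writes the camera down explicitly in the parametrized form \eqref{eq:Global_Sol_PM} and verifies case by case ($M=1$ through $M=5$) that the product ${\bf P}^{\star}_n\big[{\bf X}^{\star T}_m,1\big]^{T}$ equals $\big[\hat{\bf x}_m^{\text{\tiny(n)}T},1\big]^T$, $\alpha_n\big[\hat{\bf x}_2^{\text{\tiny(n)}T},1\big]^T$, etc., with $\beta_n,\gamma_n$ tied to $\alpha_n$ by \eqref{eq:beta_gamma_from_alpha} only when $M=5$. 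This explicit form is what the rest of the paper (Proposition~\ref{Prop:CF_Sol_6}, Algorithm~\ref{Alg:Closed_form}) builds on, and it makes the projective scalars visible: the third row evaluates to $1,\alpha_n,\beta_n,\gamma_n,\alpha_n+\beta_n+\gamma_n-2$ on the five canonical points, so non-vanishing is arranged by choosing the free parameters. Your kernel-dimension argument proves that a nonzero solution of \eqref{eq:Camera_from_Point} exists, but that alone does not give a zero objective value: the homogeneous system is satisfied trivially by any ${\bf P}_n$ with ${\bf p}_{3,n}\big[{\bf X}^{\star T}_m,1\big]^T=0$ (then the whole projected vector vanishes and the ratios in \eqref{eq:Opt_WPfC} are undefined rather than equal to $\hat u_{mn},\hat v_{mn}$). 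You flag this, but your closing sentence --- that degenerate layouts are ``realised by an explicit low-rank camera'' --- is not an argument; a low-rank camera with a vanishing third-row scalar does not produce a finite zero-valued summand. To make your route airtight you would need to show that the $(\geq 2)$-dimensional kernel is not contained in the union of the hyperplanes $\{{\bf p}_{3,n}\big[{\bf X}^{\star T}_m,1\big]^T=0\}$ and $\{p_{12n}=0\}$; the paper's explicit family is, in effect, exactly that verification.
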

\begin{proof}
We start the proof of Proposition~\ref{Prop:Global_Sol} with a mention that the ambiguity given by Proposition ~\ref{Prop:Ambiguity} is still correct for the WPfC w.r.t. estimated correspondences. Indeed, the global optimal solution $\{{\bf X}^{\star}_m,{\bf P}^{\star}_n\}$ for~\eqref{eq:Opt_WPfC} will be found based on the setting~\eqref{eq:Ambi_total} for the world points and
\begin{equation}\label{eq:Global_Sol_PM}
{\bf P}^{\star}_n = \begin{bmatrix}\hat{\bf x}_4^{\text{\tiny(n)}} & \hat{\bf x}_3^{\text{\tiny(n)}} & \hat{\bf x}_2^{\text{\tiny(n)}} & \hat{\bf x}_1^{\text{\tiny(n)}}\\[1mm]1 & 1 & 1 & 1\end{bmatrix}\begin{bmatrix}
\gamma_n & 0 & 0 & 0\\
0 & \beta_n & 0 & 0\\
0 & 0 & \alpha_n & 0\\
-1 & -1 & -1 & 1
\end{bmatrix}
\end{equation}
for the projection matrices. It easily sees that when $M = 1$ with the estimated correspondences $\{\hat{\bf x}_1^{\text{\tiny(n)}}\}^N_{n=1}$, we always have
\begin{equation*}
    {\bf P}^{\star}_n\begin{bmatrix}{\bf X}^{\star}_1\\1\end{bmatrix} = \begin{bmatrix}\hat{\bf x}_4^{\text{\tiny(n)}} & \hat{\bf x}_3^{\text{\tiny(n)}} & \hat{\bf x}_2^{\text{\tiny(n)}} & \hat{\bf x}_1^{\text{\tiny(n)}}\\[1mm]1 & 1 & 1 & 1\end{bmatrix}\begin{bmatrix}
\gamma_n & 0 & 0 & 0\\
0 & \beta_n & 0 & 0\\
0 & 0 & \alpha_n & 0\\
-1 & -1 & -1 & 1
\end{bmatrix}\begin{bmatrix}0\\0\\0\\1\end{bmatrix} = \begin{bmatrix}\hat{\bf x}_1^{\text{\tiny(n)}}\\[1mm]1\end{bmatrix}
\end{equation*}
for any values of the remain parameters $\hat{\bf x}_2^{\text{\tiny(n)}}, \hat{\bf x}_3^{\text{\tiny(n)}}, \hat{\bf x}_4^{\text{\tiny(n)}}$ and $\alpha_n, \beta_n, \gamma_n$. The equation above says that the objective value of $\{{\bf X}^{\star}_1,{\bf P}^{\star}_n\}$ is zero and $\{{\bf X}^{\star}_1,{\bf P}^{\star}_n\}$ is a global optimal solution for~\eqref{eq:Opt_WPfC} w.r.t. the estimated correspondences $\{\hat{\bf x}_1^{\text{\tiny(n)}}\}^N_{n=1}$. We do similar explanations for the case $M = 2, 3,$ and 4 by fixing the values of $\{\hat{\bf x}_1^{\text{\tiny(n)}}, \hat{\bf x}_2^{\text{\tiny(n)}}\}$ (for $M=2$), $\{\hat{\bf x}_1^{\text{\tiny(n)}}, \hat{\bf x}_2^{\text{\tiny(n)}}, \hat{\bf x}_3^{\text{\tiny(n)}}\}$ (for $M=3$), and $\{\hat{\bf x}_1^{\text{\tiny(n)}}, \hat{\bf x}_2^{\text{\tiny(n)}}, \hat{\bf x}_3^{\text{\tiny(n)}}, \hat{\bf x}_4^{\text{\tiny(n)}}\}$ (for $M=4$) in~\eqref{eq:Global_Sol_PM} from the estimated correspondences. For example, in the case $M=4$ with the estimated correspondences $\{\hat{\bf x}_1^{\text{\tiny(n)}}\,,\,\hat{\bf x}_2^{\text{\tiny(n)}}\,,\,\hat{\bf x}_3^{\text{\tiny(n)}}\,,\,\hat{\bf x}_4^{\text{\tiny(n)}}\}^N_{n=1}$, we can choose any values for $\alpha_n, \beta_n, \gamma_n$ and always guarantee that the zero value for the objective value of $\{{\bf X}^{\star}_1,{\bf X}^{\star}_2,{\bf X}^{\star}_3,{\bf X}^{\star}_4,{\bf P}^{\star}_n\}$ because
\begin{equation*}
    \begin{split}
        {\bf P}^{\star}_n\begin{bmatrix}{\bf X}^{\star}_1 & {\bf X}^{\star}_2 & {\bf X}^{\star}_3 & {\bf X}^{\star}_4\\[1mm]1 & 1 & 1 & 1\end{bmatrix} &= \begin{bmatrix}\hat{\bf x}_4^{\text{\tiny(n)}} & \hat{\bf x}_3^{\text{\tiny(n)}} & \hat{\bf x}_2^{\text{\tiny(n)}} & \hat{\bf x}_1^{\text{\tiny(n)}}\\[1mm]1 & 1 & 1 & 1\end{bmatrix}\begin{bmatrix}
\gamma_n & 0 & 0 & 0\\
0 & \beta_n & 0 & 0\\
0 & 0 & \alpha_n & 0\\
-1 & -1 & -1 & 1
\end{bmatrix}\begin{bmatrix}0 & 0 & 0 & 1\\0 & 0 & 1 & 0\\0 & 1 & 0 & 0\\1 & 1 & 1 & 1\end{bmatrix}\\[2mm]
&= \begin{bmatrix}\gamma_n\hat{\bf x}_4^{\text{\tiny(n)}} - \hat{\bf x}_1^{\text{\tiny(n)}} & \beta_n\hat{\bf x}_3^{\text{\tiny(n)}} - \hat{\bf x}_1^{\text{\tiny(n)}} & \alpha_n\hat{\bf x}_2^{\text{\tiny(n)}} - \hat{\bf x}_1^{\text{\tiny(n)}} & \hat{\bf x}_1^{\text{\tiny(n)}}\\[1mm]\gamma_n-1 & \beta_n-1 & \alpha_n-1 & 1\end{bmatrix}\begin{bmatrix}0 & 0 & 0 & 1\\0 & 0 & 1 & 0\\0 & 1 & 0 & 0\\1 & 1 & 1 & 1\end{bmatrix}\\[2mm]
&= \begin{bmatrix}\hat{\bf x}_1^{\text{\tiny(n)}} & \alpha_n\hat{\bf x}_2^{\text{\tiny(n)}} & \beta_n\hat{\bf x}_3^{\text{\tiny(n)}} & \gamma_n\hat{\bf x}_4^{\text{\tiny(n)}}\\[1mm]1 & \alpha_n & \beta_n & \gamma_n\end{bmatrix}\\[2mm]
&\sim \begin{bmatrix}\hat{\bf x}_1^{\text{\tiny(n)}} & \hat{\bf x}_2^{\text{\tiny(n)}} & \hat{\bf x}_3^{\text{\tiny(n)}} & \hat{\bf x}_4^{\text{\tiny(n)}}\\[1mm]1 & 1 & 1 & 1\end{bmatrix}.
    \end{split}
\end{equation*}
For the case $M=5$ with the estimated correspondences $\{\hat{\bf x}_1^{\text{\tiny(n)}}\,,\,\hat{\bf x}_2^{\text{\tiny(n)}}\,,\,\hat{\bf x}_3^{\text{\tiny(n)}}\,,\,\hat{\bf x}_4^{\text{\tiny(n)}}\,,\,\hat{\bf x}_5^{\text{\tiny(n)}}\}^N_{n=1}$, we need to choose $\beta_n$ and $\gamma_n$ such that
\begin{equation}\label{eq:beta_gamma_from_alpha}
    \beta_n = \frac{1}{\hat{\xi}_{34,5}^{\text{\tiny(n)}}}\big(2\hat{\xi}_{14,5}^{\text{\tiny(n)}} - \hat{\xi}_{24,5}^{\text{\tiny(n)}}\alpha_n\big) \qquad \text{and} \qquad \gamma_n = \frac{1}{\hat{\xi}_{34,5}^{\text{\tiny(n)}}}\big(\hat{\xi}_{23,5}^{\text{\tiny(n)}}\alpha_n - 2\hat{\xi}_{13,5}^{\text{\tiny(n)}}\big)
\end{equation}
where $\hat{\xi}_{m_1m_2,m_3}$ is given by~\eqref{eq:Xi} when ${\bf x}_m^{\text{\tiny(n)}}$ is replaced by its estimation $\hat{\bf x}_m^{\text{\tiny(n)}}$. From that we confirm
\begin{equation*}
\begin{split}
    &(\gamma_n-1)(\hat{\bf x}_4^{\text{\tiny(n)}} - \hat{\bf x}_5^{\text{\tiny(n)}}) + (\beta_n-1)(\hat{\bf x}_3^{\text{\tiny(n)}} - \hat{\bf x}_5^{\text{\tiny(n)}}) + (\alpha_n-1)(\hat{\bf x}_2^{\text{\tiny(n)}} - \hat{\bf x}_5^{\text{\tiny(n)}})\\
    &\quad\qquad + (\hat{\bf x}_4^{\text{\tiny(n)}} - \hat{\bf x}_5^{\text{\tiny(n)}}) + (\hat{\bf x}_3^{\text{\tiny(n)}} - \hat{\bf x}_5^{\text{\tiny(n)}}) + (\hat{\bf x}_2^{\text{\tiny(n)}} - \hat{\bf x}_5^{\text{\tiny(n)}}) - 2(\hat{\bf x}_1^{\text{\tiny(n)}} - \hat{\bf x}_5^{\text{\tiny(n)}}) ~=~ 0\,,
\end{split}
\end{equation*}
as the explanation in~\eqref{eq:Camera_from_Point_2} and~\eqref{eq:Camera_from_Point_3}. Then
\begin{equation*}
\begin{split}
    {\bf P}^{\star}_n\begin{bmatrix}{\bf X}^{\star}_5\\[1mm]1\end{bmatrix} &= \begin{bmatrix}\hat{\bf x}_4^{\text{\tiny(n)}} & \hat{\bf x}_3^{\text{\tiny(n)}} & \hat{\bf x}_2^{\text{\tiny(n)}} & \hat{\bf x}_1^{\text{\tiny(n)}}\\[1mm]1 & 1 & 1 & 1\end{bmatrix}\begin{bmatrix}
\gamma_n & 0 & 0 & 0\\
0 & \beta_n & 0 & 0\\
0 & 0 & \alpha_n & 0\\
-1 & -1 & -1 & 1
\end{bmatrix}\begin{bmatrix}1\\1\\1\\1\end{bmatrix}\\[2mm]
&= \begin{bmatrix}\gamma_n\hat{\bf x}_4^{\text{\tiny(n)}} + \beta_n\hat{\bf x}_3^{\text{\tiny(n)}} + \alpha_n\hat{\bf x}_2^{\text{\tiny(n)}} - 2\hat{\bf x}_1^{\text{\tiny(n)}}\\[1mm]\alpha_n + \beta_n + \gamma_n - 2\end{bmatrix} ~\sim \begin{bmatrix}\hat{\bf x}_5^{\text{\tiny(n)}}\\[1mm] 1\end{bmatrix}
\end{split}
\end{equation*}
and the objective value of $\{{\bf X}^{\star}_1,\ldots,{\bf X}^{\star}_5,{\bf P}^{\star}_n\}$ is zero for arbitrary values of $\alpha_n,~n = 1,2,\ldots,N$. \end{proof}

By exploiting the ambiguities of WPfC, Proposition~\ref{Prop:Global_Sol} proposes some simple and potential solutions for WPfC with at most five world points. Interestingly, all these solutions are global optimal. In optimization theory, finding global optimal solutions is the most important problem we hope to solve. Unfortunately, finding global solutions for the non-convex and non-concave objective functions like~\eqref{eq:Opt_WPfC} is  difficult~\citep{Horst1996}. Our strategy to solve the optimization~\eqref{eq:Opt_WPfC} is summarized as follows. First, basing on the global optimal solutions in Proposition~\ref{Prop:Global_Sol}, we find closed-form solutions $\{{\bf X}_m^{\circ},{\bf P}_n^{\circ}\}$ for~\eqref{eq:Opt_WPfC} with expecting that these closed-form solutions are closed to the global optimal ones. Second, we propose an iterative algorithm to generate some other better solutions $\{{\bf X}^*_m,{\bf P}^*_n\}$ from its initialization $\{{\bf X}_m^{\circ},{\bf P}_n^{\circ}\}$, the closed-form solution found by the first step.

\subsubsection{Closed-form solutions}\label{SubSec:Closed-form}

This subsection starts with the closed-form solutions for WPfC when the number of world points are six ($M=6$). These closed-forms are global optimal solutions of another optimization that is from~\eqref{eq:Opt_WPfC} and adding five constraints related to the first five world points. This optimization is stated as follows.
\begin{equation}\label{eq:Opt_CF_6}
\begin{split}
    &\arg\hspace{-2mm}\min_{\hspace{-5mm}\big\{{\bf X}_m,{\bf P}_n\big\}}\sum^N_{n=1}\left\{\Bigg(\frac{{\bf p}_{1,n}\big[{\bf X}_6^T\,,\,1\big]^T}{{\bf p}_{3,n}\big[{\bf X}_6^T\,,\,1\big]^T} - \hat{u}_{6n}\Bigg)^2 + \Bigg(\frac{{\bf p}_{2,n}\big[{\bf X}_6^T\,,\,1\big]^T}{{\bf p}_{3,n}\big[{\bf X}_6^T\,,\,1\big]^T} - \hat{v}_{6n}\Bigg)^2\right\}\\[2mm]
    &\text{subject to }\quad {\bf P}_n\begin{bmatrix}{\bf X}_m\\1\end{bmatrix} \sim \begin{bmatrix}\hat{\bf x}_m^{\text{\tiny(n)}}\\1\end{bmatrix}\quad \text{for all}\quad m = 1,2,3,4,5,\quad n = 1,2,\ldots, N.
\end{split}
\end{equation}
Note that under the constraints ${\bf P}_n\begin{bmatrix}{\bf X}_m\\1\end{bmatrix} \sim \begin{bmatrix}\hat{\bf x}_m^{\text{\tiny(n)}}\\1\end{bmatrix}$ for all $1 \leq m  \leq 5$,
\begin{equation*}
\sum^5_{m=1}\sum^N_{n=1}\left\{\Bigg(\frac{{\bf p}_{1,n}\big[{\bf X}_m^T\,,\,1\big]^T}{{\bf p}_{3,n}\big[{\bf X}_m^T\,,\,1\big]^T} - \hat{u}_{mn}\Bigg)^2 + \Bigg(\frac{{\bf p}_{2,n}\big[{\bf X}_m^T\,,\,1\big]^T}{{\bf p}_{3,n}\big[{\bf X}_m^T\,,\,1\big]^T} - \hat{v}_{mn}\Bigg)^2\right\} = 0\, .
\end{equation*}
Thus the objective value of~\eqref{eq:Opt_CF_6} is the exact objective value of~\eqref{eq:Opt_WPfC}. The proof of Proposition~\ref{Prop:Global_Sol} says that the group $\{{\bf X}^{\star}_1,{\bf X}^{\star}_2,{\bf X}^{\star}_3,{\bf X}^{\star}_4,{\bf X}^{\star}_5,{\bf X}^{\star}_6,{\bf P}^{\star}_n\}$ given by
\begin{subequations}\label{eq:CF_6}
\begin{align}
&\begin{bmatrix}{\bf X}^{\star}_1 & {\bf X}^{\star}_2 & {\bf X}^{\star}_3 & {\bf X}^{\star}_4 & {\bf X}^{\star}_5 & {\bf X}^{\star}_6\end{bmatrix} =
\begin{bmatrix}0 & 0 & 0 & 1 & 1 & f^{\star}_x/f^{\star}\\0 & 0 & 1 & 0 & 1 & f^{\star}_y/f^{\star}\\0 & 1 & 0 & 0 & 1 & f^{\star}_z/f^{\star}\end{bmatrix}\label{eq:CF_WP_6}\\[2mm]
&{\bf P}^{\star}_n = \begin{bmatrix}\hat{\bf x}_4^{\text{\tiny(n)}} & \hat{\bf x}_3^{\text{\tiny(n)}} & \hat{\bf x}_2^{\text{\tiny(n)}} & \hat{\bf x}_1^{\text{\tiny(n)}}\\[1mm]1 & 1 & 1 & 1\end{bmatrix}\begin{bmatrix}
\gamma^{\star}_n & 0 & 0 & 0\\
0 & \beta^{\star}_n & 0 & 0\\
0 & 0 & \alpha^{\star}_n & 0\\
-1 & -1 & -1 & 1
\end{bmatrix}\quad (n = 1,2,\ldots,N),\label{eq:CF_PM_6}\\[1mm]
&\text{with}\quad \beta^{\star}_n = \frac{1}{\hat{\xi}_{34,5}^{\text{\tiny(n)}}}\big(2\hat{\xi}_{14,5}^{\text{\tiny(n)}} - \hat{\xi}_{24,5}^{\text{\tiny(n)}}\alpha^{\star}_n\big),~~ \gamma^{\star}_n = \frac{1}{\hat{\xi}_{34,5}^{\text{\tiny(n)}}}\big(\hat{\xi}_{23,5}^{\text{\tiny(n)}}\alpha^{\star}_n - 2\hat{\xi}_{13,5}^{\text{\tiny(n)}}\big)\nonumber
\end{align}
\end{subequations}
will satisfy the constraints of~\eqref{eq:Opt_CF_6} for any values of $f^{\star}, f^{\star}_x, f^{\star}_y, f^{\star}_z,$ and $\{\alpha^{\star}_n\}^N_{n=1}$. Based on this fact, the following proposition gives global optimal solutions for~\eqref{eq:Opt_CF_6}.

\begin{prop}\label{Prop:CF_Sol_6}
As $f^{\star}, f^{\star}_x, f^{\star}_y, f^{\star}_z$ are given by~\eqref{eq:Lambda}-\eqref{eq:f_Eigen} in which ${\bf \Lambda}_6$ computing based on the estimated correspondences, and
\begin{equation}\label{eq:CF_alpha}
\alpha^{\star}_n ~=~ \frac{(a_{2n}^2+b_{2n}^2)c_{1n}^2 - (a_{1n}^2+b_{1n}^2)c_{2n}^2 \pm \sqrt{\Delta_n}}{2(a_{1n}^2+b_{1n}^2)c_{1n}c_{2n} - 2(a_{1n}a_{2n}+b_{1n}b_{2n})c_{1n}^2}
\end{equation}
where
\begin{equation}\label{eq:CF_alpha_1}
\begin{split}
    a_{1n} &= \hat{\xi}_{23,5}^{\text{\tiny(n)}}f^{\star}_x(\hat{u}_{4n}-\hat{u}_{6n}) - \hat{\xi}_{24,5}^{\text{\tiny(n)}}f^{\star}_y(\hat{u}_{3n}-\hat{u}_{6n}) + f^{\star}_z(\hat{u}_{2n}-\hat{u}_{6n})\\[1mm]
    a_{2n} &= -2\hat{\xi}_{13,5}^{\text{\tiny(n)}}f^{\star}_x(\hat{u}_{4n}-\hat{u}_{6n}) + 2\hat{\xi}_{14,5}^{\text{\tiny(n)}}f^{\star}_y(\hat{u}_{3n}-\hat{u}_{6n}) + (f^{\star} - f^{\star}_x - f^{\star}_y - f^{\star}_z)(\hat{u}_{1n} - \hat{u}_{6n})\\[1mm]
    b_{1n} &= \hat{\xi}_{23,5}^{\text{\tiny(n)}}f^{\star}_x(\hat{v}_{4n}-\hat{v}_{6n}) - \hat{\xi}_{24,5}^{\text{\tiny(n)}}f^{\star}_y(\hat{v}_{3n}-\hat{v}_{6n}) + f^{\star}_z(\hat{v}_{2n}-\hat{v}_{6n})\\[1mm]
    b_{2n} &= -2\hat{\xi}_{13,5}^{\text{\tiny(n)}}f^{\star}_x(\hat{v}_{4n}-\hat{v}_{6n}) + 2\hat{\xi}_{14,5}^{\text{\tiny(n)}}f^{\star}_y(\hat{v}_{3n}-\hat{v}_{6n}) + (f^{\star} - f^{\star}_x - f^{\star}_y - f^{\star}_z)(\hat{v}_{1n} - \hat{v}_{6n})\\[1mm]
    c_{1n} &= \hat{\xi}_{23,5}^{\text{\tiny(n)}}f^{\star}_x - \hat{\xi}_{24,5}^{\text{\tiny(n)}}f^{\star}_y + f^{\star}_z\\[1mm]
    c_{2n} &= -2\hat{\xi}_{13,5}^{\text{\tiny(n)}}f^{\star}_x + 2\hat{\xi}_{14,5}^{\text{\tiny(n)}}f^{\star}_y + (f^{\star} - f^{\star}_x - f^{\star}_y - f^{\star}_z)\\[1mm]
    \Delta_n &= \big[(a_{1n}^2+b_{1n}^2)c_{2n}^2 - (a_{2n}^2+b_{2n}^2)c_{1n}^2\big]^2\\
    &\quad- 4\big[(a_{1n}^2+b_{1n}^2)c_{1n}c_{2n} - (a_{1n}a_{2n}+b_{1n}b_{2n})c_{1n}^2\big]\big[(a_{1n}a_{2n}+b_{1n}b_{2n})c_{2n}^2 - (a_{2n}^2+b_{2n}^2)c_{1n}c_{2n}\big]\,,
\end{split}
\end{equation}
the group $\{{\bf X}^{\star}_1, {\bf X}^{\star}_2, {\bf X}^{\star}_3, {\bf X}^{\star}_4, {\bf X}^{\star}_5, {\bf X}^{\star}_6, {\bf P}^{\star}_n\}$ will be a global optimal solution for the optimization~\eqref{eq:Opt_CF_6}.
\end{prop}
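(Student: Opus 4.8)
\ The plan is to collapse the constrained problem \eqref{eq:Opt_CF_6} onto its single free world point ${\bf X}_6$, using that its feasible set has exactly the parametric shape already exploited for the noise-free closed form. First I would note that the projective ambiguity of Proposition~\ref{Prop:Ambiguity} applies verbatim: both the objective of \eqref{eq:Opt_CF_6} and the constraints ${\bf P}_n\begin{bmatrix}{\bf X}_m\\1\end{bmatrix}\sim\begin{bmatrix}\hat{\bf x}_m^{\text{\tiny(n)}}\\1\end{bmatrix}$ for $m=1,\dots,5$ are unchanged by $({\bf X}_m,{\bf P}_n)\mapsto\big({\bf T}\begin{bmatrix}{\bf X}_m\\1\end{bmatrix},{\bf P}_n{\bf T}^{-1}\big)$, so without loss of generality ${\bf X}_1,\dots,{\bf X}_5$ take the standard values of \eqref{eq:CF_WP_6} and $p_{12n}=1$. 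With that normalization the five exact-projection constraints force ${\bf P}_n$ into the form \eqref{eq:CF_PM_6} with $\beta_n,\gamma_n$ the affine functions of $\alpha_n$ in \eqref{eq:beta_gamma_from_alpha}; this is precisely the elimination performed in \eqref{eq:Camera_from_Point_1}--\eqref{eq:Camera_from_Point_3}. Hence the feasible set of \eqref{eq:Opt_CF_6} is parametrized freely by $\big({\bf X}_6,\alpha_1,\dots,\alpha_N\big)$.

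Substituting this parametrization into the objective is the only real computation of the first half, and it is linear bookkeeping: because $\beta_n$ and $\gamma_n$ are affine in $\alpha_n$, both reprojection residuals of ${\bf X}_6$ in image $n$ are ratios of a degree-one polynomial in $\alpha_n$ over $c_{1n}\alpha_n+c_{2n}$, with numerator coefficients exactly the $a_{1n},a_{2n},b_{1n},b_{2n}$ of \eqref{eq:CF_alpha_1}. The objective thus separates as $\sum_{n=1}^{N}h_n({\bf X}_6,\alpha_n)$ with $h_n=\big[(a_{1n}\alpha_n+a_{2n})^2+(b_{1n}\alpha_n+b_{2n})^2\big]/(c_{1n}\alpha_n+c_{2n})^2$, so the global minimum of \eqref{eq:Opt_CF_6} equals $\min_{{\bf X}_6}\sum_n\min_{\alpha_n}h_n$. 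For fixed ${\bf X}_6$ the inner problems decouple over $n$ and are elementary calculus: the equation $h_n'(\alpha_n)=0$, after the common factor $c_{1n}\alpha_n+c_{2n}$ is cancelled from numerator and denominator of $h_n'$, is the quadratic whose roots are those in \eqref{eq:CF_alpha}. One root is the pole $\alpha_n=-c_{2n}/c_{1n}$, at which $h_n\to+\infty$ (a spurious stationary point at infinity); the other is the unique finite minimizer, which fixes the sign in \eqref{eq:CF_alpha} (keep the root that is not the pole, equivalently the one giving the smaller value of $h_n$). In the noise-free case the optimal value is $0$ and this construction reduces to Theorem~\ref{Theo:Sol_WPfC_NoNoise}.

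It remains to show that the outer minimization over ${\bf X}_6$ is solved by ${\bf X}_6^\star=\big[f_x^\star,f_y^\star,f_z^\star\big]^T/f^\star$. Geometrically $\min_{\alpha_n}h_n$ is the squared distance from $\hat{\bf x}_6^{\text{\tiny(n)}}$ to the straight line $\ell_n({\bf X}_6)$ traced in image $n$ by the reprojection of ${\bf X}_6$ as $\alpha_n$ ranges over $\mathbb{R}$ (numerator and denominator of the reprojection are both affine in $\alpha_n$, so its graph is a projective parametrization of a line), whence the reduced objective is $g({\bf X}_6)=\sum_n\mathrm{dist}^2\big(\hat{\bf x}_6^{\text{\tiny(n)}},\ell_n({\bf X}_6)\big)$. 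I would expand each such distance through the determinant identities among the $\xi^{\text{\tiny(n)}}_{m_1m_2,m_3}$ that underlie Theorem~II of \citet{le2022multi}, writing the $n$-th summand of $g$ as $\big({\bf r}_n^T{\bf w}({\bf X}_6)\big)^2$ over a row-normalizing factor, where ${\bf r}_n^T$ is the $n$-th row of ${\bf \Lambda}_6$ and ${\bf w}({\bf X}_6)$ is the degree-two vector in $(x_6,y_6,z_6)$ appearing on the left of \eqref{eq:WP_from_Eigen}. If those row factors collapse to $\|{\bf w}({\bf X}_6)\|^2$ up to an ${\bf X}_6$-independent constant, then $g$ is a fixed multiple of the Rayleigh quotient ${\bf w}^T{\bf \Lambda}_6^T{\bf \Lambda}_6{\bf w}/\|{\bf w}\|^2$, which is minimized at ${\bf w}\parallel{\bf e}_6$; inverting the map ${\bf w}(\cdot)$ via \eqref{eq:f_Eigen}--\eqref{eq:Sol_WPoints} returns ${\bf X}_6^\star$, and pairing it with the $\alpha_n^\star$ of \eqref{eq:CF_alpha} (hence $\beta_n^\star,\gamma_n^\star$ of \eqref{eq:beta_gamma_from_alpha}) yields a feasible point of \eqref{eq:Opt_CF_6} attaining $\min g$, i.e. a global optimum.

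The hard part is this last reduction: one must verify that the numerator determinant and the denominator norm defining $\min_{\alpha_n}h_n$ scale identically as ${\bf X}_6$ varies, so that the per-row normalization is genuinely $\|{\bf w}({\bf X}_6)\|^2$ up to a constant --- a nontrivial algebraic identity in the $\xi^{\text{\tiny(n)}}_{m_1m_2,m_3}$, and precisely the fact that singles out the eigenvector-based ${\bf X}_6^\star$ rather than some other fit of ${\bf X}_6$. The feasible-set reduction and the one-variable minimization giving \eqref{eq:CF_alpha} are, by contrast, routine once the coefficient identities of \eqref{eq:CF_alpha_1} have been checked.
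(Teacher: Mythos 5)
Your route coincides with the paper's own proof: fix the gauge by Proposition~\ref{Prop:Ambiguity}, let the five exact constraints force ${\bf P}_n$ into the form \eqref{eq:CF_PM_6} with $\beta_n,\gamma_n$ affine in $\alpha_n$ via \eqref{eq:beta_gamma_from_alpha}, reduce \eqref{eq:Opt_CF_6} to the unconstrained problem \eqref{eq:Opt_CF_6_1} in $({\bf X}_6,\alpha_1,\dots,\alpha_N)$, minimize each $h_n$ over $\alpha_n$ in closed form to obtain \eqref{eq:CF_alpha}, and then take ${\bf X}_6$ from the smallest eigenvector of ${\bf \Lambda}_6^T{\bf \Lambda}_6$. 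Two remarks. First, your analysis of the inner problem is a genuine refinement the paper omits: after cancelling the factor $c_{1n}\alpha_n+c_{2n}$, the derivative numerator of $h_n$ is only \emph{linear} in $\alpha_n$, so the quadratic behind \eqref{eq:CF_alpha} factors as $(c_{1n}\alpha_n+c_{2n})$ times that linear equation; one of the two roots in \eqref{eq:CF_alpha} is therefore always the pole $-c_{2n}/c_{1n}$ and must be discarded, whereas the paper keeps both signs and resolves the choice only numerically inside Algorithm~\ref{Alg:Closed_form}. Second, the step you flag as ``the hard part'' --- that $\sum_n\min_{\alpha_n}h_n$ collapses to the Rayleigh quotient ${\bf w}^T{\bf \Lambda}_6^T{\bf \Lambda}_6{\bf w}/\|{\bf w}\|^2$ up to an ${\bf X}_6$-independent constant, so that ${\bf e}_6$ yields the global minimizer over ${\bf X}_6$ --- is exactly the step the paper does not prove either: it passes from the geometric cost \eqref{eq:Opt_CF_6_1} to the algebraic cost \eqref{eq:Opt_CF_6_2} by citing Theorem~II of \citet{le2022multi} (a statement about exact correspondences), and then asserts that minimizing $\|{\bf \Lambda}_6{\bf w}\|^2_2$ is achieved at ${\bf w}\sim{\bf e}_6$, which already presupposes a normalization $\|{\bf w}\|=1$ that \eqref{eq:Opt_CF_6_2} does not impose. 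Your conditional phrasing (``if those row factors collapse to $\|{\bf w}({\bf X}_6)\|^2$ up to a constant'') names the missing identity precisely; it is verified neither by you nor by the paper, and for noisy correspondences the geometric and algebraic minimizers generally differ, so the global-optimality claim for \eqref{eq:Opt_CF_6} rests entirely on it. In short: same decomposition, a sharper treatment of the $\alpha_n$ subproblem, and an honestly flagged gap that the paper shares but does not acknowledge.
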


\begin{proof}
From the setting of ${\bf X}^{\star}_1,{\bf X}^{\star}_2,{\bf X}^{\star}_3,{\bf X}^{\star}_4,{\bf X}^{\star}_5,$ and $\{{\bf P}^{\star}_n\}^N_{n=1}$,~\eqref{eq:Opt_CF_6} becomes
\begin{equation}\label{eq:Opt_CF_6_1}
\begin{split}
&\arg\hspace{-5mm}\min_{\hspace{-5mm}\big\{f, f_x, f_y, f_z, \alpha_n\big\}\,}\sum^N_{n=1}\Bigg\{\frac{\big[\gamma_nf_x(\hat{u}_{4n}\hspace{-0.25mm}-\hspace{-0.25mm}\hat{u}_{6n}) \hspace{-0.25mm}+\hspace{-0.25mm} \beta_nf_y(\hat{u}_{3n}\hspace{-0.25mm}-\hspace{-0.25mm}\hat{u}_{6n}) \hspace{-0.25mm}+\hspace{-0.25mm} \alpha_nf_z(\hat{u}_{2n}\hspace{-0.25mm}-\hspace{-0.25mm}\hat{u}_{6n}) \hspace{-0.25mm}+\hspace{-0.25mm} (f\hspace{-0.25mm}-\hspace{-0.25mm}f_x\hspace{-0.25mm}-\hspace{-0.25mm}f_y\hspace{-0.25mm}-\hspace{-0.25mm}f_z)(\hat{u}_{1n}\hspace{-0.25mm}-\hspace{-0.25mm}\hat{u}_{6n})\big]^2}{\big[\gamma_nf_x \hspace{-0.25mm}+\hspace{-0.25mm} \beta_nf_y \hspace{-0.25mm}+\hspace{-0.25mm} \alpha_nf_z \hspace{-0.25mm}+\hspace{-0.25mm} (f \hspace{-0.25mm}-\hspace{-0.25mm} f_x \hspace{-0.25mm}-\hspace{-0.25mm} f_y \hspace{-0.25mm}-\hspace{-0.25mm} f_z)\big]^2}\\[2mm]
&\hspace{23mm} + \frac{\big[\gamma_nf_x(\hat{v}_{4n}\hspace{-0.25mm}-\hspace{-0.25mm}\hat{v}_{6n}) \hspace{-0.25mm}+\hspace{-0.25mm} \beta_nf_y(\hat{v}_{3n}\hspace{-0.25mm}-\hspace{-0.25mm}\hat{v}_{6n}) \hspace{-0.25mm}+\hspace{-0.25mm} \alpha_nf_z(\hat{v}_{2n}\hspace{-0.25mm}-\hspace{-0.25mm}\hat{v}_{6n}) \hspace{-0.25mm}+\hspace{-0.25mm} (f\hspace{-0.25mm}-\hspace{-0.25mm}f_x\hspace{-0.25mm}-\hspace{-0.25mm}f_y\hspace{-0.25mm}-\hspace{-0.25mm}f_z)(\hat{v}_{1n}\hspace{-0.25mm}-\hspace{-0.25mm}\hat{v}_{6n})\big]^2}{\big[\gamma_nf_x \hspace{-0.25mm}+\hspace{-0.25mm} \beta_nf_y \hspace{-0.25mm}+\hspace{-0.25mm} \alpha_nf_z \hspace{-0.25mm}+\hspace{-0.25mm} (f \hspace{-0.25mm}-\hspace{-0.25mm} f_x \hspace{-0.25mm}-\hspace{-0.25mm} f_y \hspace{-0.25mm}-\hspace{-0.25mm} f_z)\big]^2}\Bigg\}
\end{split}
\end{equation}
where $a_{1n}, b_{1n}, c_{1n}$ and $a_{2n}, b_{2n}, c_{2n}$ are given by~\eqref{eq:CF_alpha_1}.  When $f^{\star}, f^{\star}_x, f^{\star}_y, f^{\star}_z$ are fixed, the optimization~\eqref{eq:Opt_CF_6_1} is globally optimized by the values $\{\alpha^{\star}_n\}^N_{n=1}$ given by~\eqref{eq:CF_alpha}. Hence, to finish the proof of Proposition~\ref{Prop:CF_Sol_6} we need to prove that~\eqref{eq:Opt_CF_6_1} is globally optimized at $f^{\star}, f^{\star}_x, f^{\star}_y, f^{\star}_z$ given by~\eqref{eq:Lambda}-\eqref{eq:f_Eigen} while ${\bf P}^{\star}_n, \alpha^{\star}_n, \beta^{\star}_n, \gamma^{\star}_n$ are functions of $f^{\star}, f^{\star}_x, f^{\star}_y$ and $f^{\star}_z$. Thanks to Theorem II in~\citet{le2022multi}, we know that the sixth world point ${\bf X}^{\star}_6 = \frac{1}{f^{\star}}[f^{\star}_x\,,\,f^{\star}_y\,,\,f^{\star}_z]^T$ should be found from the following optimization
\begin{equation}\label{eq:Opt_CF_6_2}
   \{f^{\star}\,,\, f^{\star}_x\,,\, f^{\star}_y\,,\, f^{\star}_z\} =\quad \arg\hspace{-3mm}\min_{\hspace{-5mm}\big\{f,f_x,f_y,f_z\big\}}\left\|~{\bf \Lambda}_6\begin{bmatrix}
    f_z(f - f_x - f_y - f_z) + 2f_y f_z\\[1mm]
    f_y(f - f_x - f_y - f_z) + 2f_y f_z\\[1mm]
    f_x(f - f_x - f_y - f_z) + 2f_y f_z\\[1mm]
    -2f_x f_y + 2f_y f_z\\[1mm]
    -2f_x f_z + 2f_y f_z\end{bmatrix}~\right\|^2_2
\end{equation}
where ${\bf \Lambda}_6$ given by~\eqref{eq:Lambda} is the $N\times 5$ matrix computing from the estimated correspondences. Let $[e_1,e_2,e_3,e_4,e_5]^T$ be the eigenvector corresponding to the smallest eigenvalue of ${\bf \Lambda}^T_6{\bf \Lambda}_6$. The optimization~\eqref{eq:Opt_CF_6_2} will be globally optimized if
\begin{equation}\label{eq:f_Eigen_1}
    \begin{bmatrix}
    f_z(f - f_x - f_y - f_z) + 2f_y f_z\\[1mm]
    f_y(f - f_x - f_y - f_z) + 2f_y f_z\\[1mm]
    f_x(f - f_x - f_y - f_z) + 2f_y f_z\\[1mm]
    -2f_x f_y + 2f_y f_z\\[1mm]
    -2f_x f_z + 2f_y f_z\end{bmatrix} \sim \begin{bmatrix}e_1\\[1mm]e_2\\[1mm]e_3\\[1mm]e_4\\[1mm]e_5\end{bmatrix}\,.
\end{equation}
The parameters $\{f^{\star}, f^{\star}_x, f^{\star}_y, f^{\star}_z\}$ given by~\eqref{eq:f_Eigen} satisfy~\eqref{eq:f_Eigen_1}. Thus, they are the global optimal solutions for~\eqref{eq:Opt_CF_6_2} and also for~\eqref{eq:Opt_CF_6_1}. Proposition~\ref{Prop:CF_Sol_6} is proven.
\end{proof}

Note that Proposition~\ref{Prop:CF_Sol_6} derives the global optimal solutions for~\eqref{eq:Opt_CF_6} but these solutions are not global optimal solutions for~\eqref{eq:Opt_WPfC}. In addition, because of the strict constraints in~\eqref{eq:Opt_CF_6}, its global optimal solutions are far to ones of~\eqref{eq:Opt_WPfC}. In the case $M > 6$, the optimization~\eqref{eq:Opt_CF_6} naturally becomes
\begin{equation}\label{eq:Opt_CF_more6}
\begin{split}
    &\arg\hspace{-2mm}\min_{\hspace{-5mm}\big\{{\bf X}_m,{\bf P}_n\big\}}\sum^M_{m=6}\,\sum^N_{n=1}\left\{\hspace{-1mm}\Bigg(\frac{{\bf p}_{1,n}\big[{\bf X}_m^T\,,\,1\big]^T}{{\bf p}_{3,n}\big[{\bf X}_m^T\,,\,1\big]^T} - \hat{u}_{mn}\hspace{-0.5mm}\Bigg)^2 \hspace{-1.5mm}+\hspace{-0.5mm} \Bigg(\frac{{\bf p}_{2,n}\big[{\bf X}_m^T\,,\,1\big]^T}{{\bf p}_{3,n}\big[{\bf X}_m^T\,,\,1\big]^T} - \hat{v}_{mn}\hspace{-0.5mm}\Bigg)^2\right\}\\[2mm]
    &\text{subject to }\quad {\bf P}_n\begin{bmatrix}{\bf X}_m\\1\end{bmatrix} \sim \begin{bmatrix}\hat{\bf x}_m^{\text{\tiny(n)}}\\1\end{bmatrix}\quad \text{for all}\quad 1 \leq m  \leq 5,\quad 1 \leq n \leq N.
\end{split}
\end{equation}
The optimization~\eqref{eq:Opt_CF_more6} is more complex than the optimization~\eqref{eq:Opt_CF_6}. Finding global optimal solutions for~\eqref{eq:Opt_CF_more6} is too difficult. Moreover, similar as the case of $M=6$, we believe that the global optimal solutions for~\eqref{eq:Opt_CF_more6} are far to the global optimal solutions for~\eqref{eq:Opt_WPfC}. Because of this reason, this study does not focus on obtaining the global optimal solutions for~\eqref{eq:Opt_CF_more6}. We shall use the global optimal solutions in the case $M=6$ as the candidates for the optimal solutions in the case $M > 6$. More precisely, to generate optimal solutions for~\eqref{eq:Opt_CF_more6}, we first propose the optimal world points ${\bf X}^{\circ}_m = \frac{1}{f^{\star}_m}[f^{\star}_{xm}\,,\,f^{\star}_{ym}\,,\,f^{\star}_{zm}]^T$, ($6 \leq m \leq M$) as the global optimal solutions for~\eqref{eq:Opt_CF_6_2} when ${\bf \Lambda}_6$ is replaced by ${\bf \Lambda}_m$. Then for each $m\in\{6,7,\ldots,M\}$ that brings a group of $\{f^{\star}, f^{\star}_x, f^{\star}_y, f^{\star}_z\}$ and $\{a_{1n},b_{1n},c_{1n},a_{2n},b_{2n},c_{2n},\Delta_n\}^N_{n=1}$, one candidate for $\{\alpha^{\star}_n\}^N_{n=1}$ is generated by~\eqref{eq:CF_alpha}, and a corresponded candidate for $\{{\bf P}^{\circ}_n\}^N_{n=1}$ is obtained by~\eqref{eq:CF_PM_6}. From a lot of these candidates, the best one is chosen based on the objective value in~\eqref{eq:Opt_CF_more6}. Formally, our process on estimating the closed-form solutions for WPfC is presented by Algorithm~\ref{Alg:Closed_form}. In this paper, we denote
\begin{equation*}
    \{{\bf X}^{\circ}_m,{\bf P}^{\circ}_n\} = \texttt{CF-WPfC}\big(\{\hat{\bf x}^{\text{\tiny(1)}}_m \,\leftrightarrow\,\hat{\bf x}^{\text{\tiny(2)}}_m \,\leftrightarrow\,\cdots \,\leftrightarrow\,\hat{\bf x}^{\text{\tiny(N)}}_m\}^M_{m=1}\big)
\end{equation*}
to say that $\{{\bf X}^{\circ}_m,{\bf P}^{\circ}_n\}$ is the output of the CF-WPfC algorithm with the input $\{\hat{\bf x}^{\text{\tiny(1)}}_m \,\leftrightarrow\,\hat{\bf x}^{\text{\tiny(2)}}_m \,\leftrightarrow\,\cdots \,\leftrightarrow\,\hat{\bf x}^{\text{\tiny(N)}}_m\}^M_{m=1}$.

\begin{algorithm}[t!]
\caption{Closed-form solutions for WPfC (\textbf{CF-WPfC})}
\label{Alg:Closed_form}
{
{\bf Input:} $\big\{\hat{\bf x}^{\text{\tiny(1)}}_m \,\leftrightarrow\,\hat{\bf x}^{\text{\tiny(2)}}_m \,\leftrightarrow\,\cdots \,\leftrightarrow\,\hat{\bf x}^{\text{\tiny(N)}}_m\big\}^M_{m=1}$~:~ Correspondences.\\[-2mm]

{\bf Conditions:} $M\geq 6$ and $N\geq 5$.\\[-2mm]

{\bf Implementation:}\\[-2mm]

$\text{Eval} \longleftarrow 10^8$: a huge number\\[-2mm]

For all $1 \leq m_1 < m_2 < m_3 < m_4 < m_5 \leq M$:\\[-2mm]

{\bf 1.} $[{\bf X}_{m_1}\,,\,{\bf X}_{m_2}\,,\,{\bf X}_{m_3}\,,\,{\bf X}_{m_4}\,,\,{\bf X}_{m_5}] \quad \longleftarrow \quad \begin{bmatrix}0 & 0 & 0 & 1 & 1\\0 & 0 & 1 & 0 & 1\\0 & 1 & 0 & 0 & 1\end{bmatrix}$ \quad as the setting~\eqref{eq:Ambi_total}.\\

{\bf 2.} For all $1\leq m \leq M$ such that $m\notin\{m_1,m_2,m_3,m_4,m_5\}$:\\[-2mm]

\hspace{6mm}{\bf a.} $\hat{\xi}_{i_1i_2,j}^{\text{\tiny(n)}} ~\overset{\eqref{eq:Xi}}{\longleftarrow}~\big\{\hat{\bf x}_{i_1}^{\text{\tiny(n)}}\,,\,\hat{\bf x}_{i_2}^{\text{\tiny(n)}}\,,\,\hat{\bf x}_j^{\text{\tiny(n)}}\big\}$ \quad for all $1\leq n \leq N$,\\[-2mm]

\hspace{6mm}{\bf b.} ${\bf \Lambda}_m \hspace{3mm}~\overset{\eqref{eq:Lambda}}{\longleftarrow}~\big\{\hat{\bf x}_i^{\text{\tiny(1)}}\leftrightarrow\hat{\bf x}_i^{\text{\tiny(2)}}\leftrightarrow\cdots\leftrightarrow\hat{\bf x}_i^{\text{\tiny(N)}}\big\}_{i\in\{m_1,m_2,m_3,m_4,m_5,m\}}$\\[-2mm]

\hspace{6mm}{\bf c.} $[e_{1m},e_{2m},e_{3m},e_{4m},e_{5m}]^T$: the eigenvector corresponding to the smallest eigenvalue of ${\bf \Lambda}^T_m{\bf \Lambda}_m$,\\[-2mm]

\hspace{6mm}{\bf d.} $\{f_m, f_{xm}, f_{ym}, f_{zm}\} ~\overset{\eqref{eq:f_Eigen}}{\longleftarrow}~\{e_{1m},e_{2m},e_{3m},e_{4m},e_{5m}\}$\\[-2mm]

\hspace{6mm}{\bf e.} ${\bf X}_m ~\longleftarrow~\frac{1}{f_m}[f_{xm}\,,\,f_{ym}\,,\,f_{zm}]^T$.\\[-2mm]

{\bf 3.} For all $1\leq n \leq N$ ~and~ $1\leq m \leq M$ such that $m\notin\{m_1,m_2,m_3,m_4,m_5\}$:\\[-2mm]

\hspace{6mm}{\bf a.} $\{a_{1n},a_{2n},b_{1n},b_{2n},c_{1n},c_{2n},\Delta_n\} ~\overset{\eqref{eq:CF_alpha_1}}{\longleftarrow}~\{f_m, f_{xm}, f_{ym}, f_{zm}\}$, $\{\hat{\xi}_{i_1i_2,j}^{\text{\tiny(n)}}\}$, $\big\{\hat{\bf x}_{i}^{\text{\tiny(n)}}\big\}_{i\in\{m_1,m_2,m_3,m_4,m_5,m\}}$\\[-2mm]

\hspace{6mm}{\bf b.} if $\Delta_n > 0$:\\[-2mm]
\[\alpha_n ~\overset{\eqref{eq:CF_alpha}}{\longleftarrow}~ \frac{(a_{2n}^2+b_{2n}^2)c_{1n}^2 - (a_{1n}^2+b_{1n}^2)c_{2n}^2 \pm \sqrt{\Delta_n}}{2(a_{1n}^2+b_{1n}^2)c_{1n}c_{2n} - 2(a_{1n}a_{2n}+b_{1n}b_{2n})c_{1n}^2}\]

\hspace{11mm}else: ~ $\alpha_n ~\longleftarrow~ \big\{-a_{2n}/a_{1n}~,~ -b_{2n}/b_{1n} ~,~ -(a_{1n}a_{2n}+b_{1n}b_{2n})/(a_{1n}^2+b_{1n}^2)\big\}$\\[-2mm]

\hspace{6mm}{\bf c.} ${\bf P}_n ~\overset{\eqref{eq:CF_PM_6}}{\longleftarrow}~\big\{\hat{\bf x}_{m_1}^{\text{\tiny(n)}}\,,\,\hat{\bf x}_{m_2}^{\text{\tiny(n)}}\,,\,\hat{\bf x}_{m_3}^{\text{\tiny(n)}}\,,\,\hat{\bf x}_{m_4}^{\text{\tiny(n)}}\,,\,\alpha_n\,,\,\beta_n\,,\,\gamma_n\big\}$\\[-2mm]

{\bf 4.}\\[-5mm]
\[\text{Obj} \longleftarrow \sum_{m\notin\{m_1,m_2,m_3,m_4,m_5\}}\,\sum^N_{n=1}\left\{\Bigg(\frac{{\bf p}_{1,n}\big[{\bf X}_m^T\,,\,1\big]^T}{{\bf p}_{3,n}\big[{\bf X}_m^T\,,\,1\big]^T} - \hat{u}_{mn}\Bigg)^2 + \Bigg(\frac{{\bf p}_{2,n}\big[{\bf X}_m^T\,,\,1\big]^T}{{\bf p}_{3,n}\big[{\bf X}_m^T\,,\,1\big]^T} - \hat{v}_{mn}\Bigg)^2\right\}\]

{\bf 5.} If~ $\text{Obj} < \text{Eval}$:
\[\text{Eval} \longleftarrow \text{Obj}\qquad\text{and}\qquad \big\{{\bf X}^{\circ}_m, {\bf P}^{\circ}_n\big\}~\longleftarrow~\big\{{\bf X}_m,{\bf P}_n\big\}\,.\]

{\bf Output:} $\big\{{\bf X}^{\circ}_m, {\bf P}^{\circ}_n\big\}$\,.\\[-1mm]
}
\end{algorithm}

\subsubsection{Iterative solutions}

The output $\{{\bf X}^{\circ}_m,{\bf P}^{\circ}_n\}$ of the CF-WPfC algorithm is an optimal solution for~\eqref{eq:Opt_CF_more6}, from that there always exist five world points ${\bf X}^{\circ}_{m_1}, {\bf X}^{\circ}_{m_2}, {\bf X}^{\circ}_{m_3}, {\bf X}^{\circ}_{m_4}, {\bf X}^{\circ}_{m_5}$ such that
\begin{equation*}
    \qquad\qquad{\bf P}^{\circ}_n\begin{bmatrix}{\bf X}^{\circ}_{m_i}\\[1mm]1\end{bmatrix} \sim\begin{bmatrix}\hat{\bf x}_{m_i}^{\text{\tiny(n)}}\\[1mm]1\end{bmatrix}\qquad \text{for all}\quad n \quad \text{and} \quad i = 1,2,3,4,5\,.
\end{equation*}
Under these constraints, the solution $\{{\bf X}^{\circ}_m,{\bf P}^{\circ}_n\}$ can be  far to the global optimal solution for~\eqref{eq:Opt_WPfC}. In this subsection, we propose a method to release these constraints and then create better solutions for~\eqref{eq:Opt_WPfC}. The proposed method starts with the following result. Given the estimated correspondences $\big\{\hat{\bf x}_m^{\text{\tiny(1)}}\leftrightarrow\hat{\bf x}_m^{\text{\tiny(2)}}\leftrightarrow\cdots\leftrightarrow\hat{\bf x}_m^{\text{\tiny(N)}}\big\}^M_{m=1}$ with $M\geq 6$ and $N\geq 5$, let
\begin{equation*}
    \{{\bf X}^{\circ}_m,{\bf P}^{\circ}_n\} = \texttt{CF-WPfC}\big(\{\hat{\bf x}_m^{\text{\tiny(1)}}\leftrightarrow\hat{\bf x}_m^{\text{\tiny(2)}}\leftrightarrow\cdots\leftrightarrow\hat{\bf x}_m^{\text{\tiny(N)}}\}^M_{m=1}\big)
\end{equation*}
and $\big\{\breve{\bf x}_m^{\text{\tiny(1)}}\leftrightarrow\breve{\bf x}_m^{\text{\tiny(2)}}\leftrightarrow\cdots\leftrightarrow\breve{\bf x}_m^{\text{\tiny(N)}}\big\}^M_{m=1}$ be its correspondences, i.e.,
\begin{equation}\label{eq:Corresponences_Est}
\qquad\qquad\breve{\bf x}_m^{\text{\tiny(n)}} = \left[~\frac{{\bf p}^{\circ}_{1,n}\big[{\bf X}^{\circ T}_m\,,\,1\big]^T}{{\bf p}^{\circ}_{3,n}\big[{\bf X}^{\circ T}_m\,,\,1\big]^T}~,~\frac{{\bf p}^{\circ}_{2,n}\big[{\bf X}^{\circ T}_m\,,\,1\big]^T}{{\bf p}^{\circ}_{3,n}\big[{\bf X}^{\circ T}_m\,,\,1\big]^T}~\right]^T\qquad \text{for all}\quad m,n\, .
\end{equation}
On the $n$th image, there are two $M$-groups of points $\{\hat{\bf x}_m^{\text{\tiny(n)}}\}^M_{m=1}$ and $\{\breve{\bf x}_m^{\text{\tiny(n)}}\}^M_{m=1}$. A difference between these two groups is a part corresponding to the index $n$ of the object value of~\eqref{eq:Opt_WPfC} for the solution $\{{\bf X}^{\circ}_m,{\bf P}^{\circ}_n\}$. Thus, we expect that these $N$ differences are small to ensure that $\{{\bf X}^{\circ}_m,{\bf P}^{\circ}_n\}$ is good. The theory of \emph{absolute orientation}~\citep{Horn1987} allows us to have an optimal \emph{reflection} or \emph{rotation} ${\bf R}^*_n\in\mathbb{R}^{2\times 2}$ and an optimal \emph{translation} ${\bf t}^*_n\in\mathbb{R}^2$ to transfer $\{\breve{\bf x}_m^{\text{\tiny(n)}}\}^M_{m=1}$ closest to $\{\hat{\bf x}_m^{\text{\tiny(n)}}\}^M_{m=1}$. Exactly, the result in~\citet{Horn1987} solves the following optimization
\begin{equation}\label{eq:Isometry}
\begin{split}
\{{\bf R}^*_n\,,\,{\bf t}^*_n\} = &\arg\min\sum^M_{m=1}\Big\|\big({\bf R}_n\breve{\bf x}_m^{\text{\tiny(n)}} + {\bf t}_n\big) - \hat{\bf x}_m^{\text{\tiny(n)}}\Big\|_2\\[1mm]
&\text{subject to} \quad {\bf R}_n ~\text{: a rotation or a reflection},~~ {\bf t}_n ~\text{: a translation.}
\end{split}
\end{equation}
With the optimal reflection or rotation ${\bf R}^*_n$ and the optimal translation ${\bf t}^*_n$ for the $n$th image, we consider new projection matrices that
\begin{equation}\label{eq:PM_Iso}
{\bf P}^*_n = \begin{bmatrix}{\bf R}^*_n & {\bf t}^*_n\\[1mm]{\bf 0}_{1\times 2} & 1\end{bmatrix}{\bf P}^{\circ}_n \qquad \text{for all} \quad n\,,
\end{equation}
and get a better solution for~\eqref{eq:Opt_WPfC} by the following proposition.

\newpage

\begin{prop}\label{Prop:Opt_by_Iso}
According to the optimization~\eqref{eq:Opt_WPfC}, $\{{\bf X}^{\circ}_m,{\bf P}^*_n\}$ is better than $\{{\bf X}^{\circ}_m,{\bf P}^{\circ}_n\}$, i.e.,
\begin{equation}\label{eq:Comparation1}
\sum^M_{m=1}\sum^N_{n=1}\big\|\bar{\bf x}_m^{\text{\tiny(n)}} - \hat{\bf x}_m^{\text{\tiny(n)}}\big\|^2_2~\leq~ \sum^M_{m=1}\sum^N_{n=1}\big\|\breve{\bf x}_m^{\text{\tiny(n)}} - \hat{\bf x}_m^{\text{\tiny(n)}}\big\|^2_2\,,
\end{equation}
where $\big\{\bar{\bf x}_m^{\text{\tiny(1)}}\leftrightarrow\bar{\bf x}_m^{\text{\tiny(2)}}\leftrightarrow\cdots\leftrightarrow\bar{\bf x}_m^{\text{\tiny(N)}}\big\}$ be the correspondences of $\{{\bf X}^{\circ}_m,{\bf P}^*_n\}$, i.e.,
\begin{equation*}
\qquad\qquad\bar{\bf x}_m^{\text{\tiny(n)}} = \left[~\frac{{\bf p}^*_{1,n}\big[{\bf X}^{\circ T}_m\,,\,1\big]^T}{{\bf p}^*_{3,n}\big[{\bf X}^{\circ T}_m\,,\,1\big]^T}~,~\frac{{\bf p}^*_{2,n}\big[{\bf X}^{\circ T}_m\,,\,1\big]^T}{{\bf p}^*_{3,n}\big[{\bf X}^{\circ T}_m\,,\,1\big]^T}~\right]^T\qquad \text{for all}\quad m,n\, .
\end{equation*}
\end{prop}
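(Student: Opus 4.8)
The plan is to show that the left-multiplication in~\eqref{eq:PM_Iso} acts on the reprojected image points exactly as the planar isometry $({\bf R}^*_n,{\bf t}^*_n)$, and then to conclude by comparing the optimal value of~\eqref{eq:Isometry} against its value at the identity map, which is feasible.

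First I would make the reprojection bookkeeping explicit. By the definition of $\breve{\bf x}_m^{\text{\tiny(n)}}$ in~\eqref{eq:Corresponences_Est}, setting $\lambda_{mn} = {\bf p}^{\circ}_{3,n}[{\bf X}^{\circ T}_m,1]^T$ (nonzero, a mild genericity condition that is already implicit since otherwise the reprojection is undefined) gives ${\bf P}^{\circ}_n[{\bf X}^{\circ T}_m,1]^T = \lambda_{mn}[\breve{\bf x}_m^{\text{\tiny(n)} T},1]^T$. The block matrix $\begin{bmatrix}{\bf R}^*_n & {\bf t}^*_n\\ {\bf 0}_{1\times 2} & 1\end{bmatrix}$ has last row $[\,0\,,\,0\,,\,1\,]$, so it preserves the last homogeneous coordinate; consequently
\[
{\bf P}^*_n\begin{bmatrix}{\bf X}^{\circ}_m\\ 1\end{bmatrix} = \lambda_{mn}\begin{bmatrix}{\bf R}^*_n\breve{\bf x}_m^{\text{\tiny(n)}}+{\bf t}^*_n\\ 1\end{bmatrix},
\]
and after the perspective division in the definition of $\bar{\bf x}_m^{\text{\tiny(n)}}$ the scalar $\lambda_{mn}$ cancels, leaving $\bar{\bf x}_m^{\text{\tiny(n)}} = {\bf R}^*_n\breve{\bf x}_m^{\text{\tiny(n)}}+{\bf t}^*_n$. (In particular ${\bf p}^*_{3,n}[{\bf X}^{\circ T}_m,1]^T = \lambda_{mn}\neq 0$, so the objective value of $\{{\bf X}^{\circ}_m,{\bf P}^*_n\}$ in~\eqref{eq:Opt_WPfC} is well defined.)

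Then I would simply chain inequalities image by image. For each fixed $n$, the pair $({\bf R}^*_n,{\bf t}^*_n)$ minimizes $\sum_{m=1}^M\big\|({\bf R}_n\breve{\bf x}_m^{\text{\tiny(n)}}+{\bf t}_n)-\hat{\bf x}_m^{\text{\tiny(n)}}\big\|_2^2$ over rotations/reflections ${\bf R}_n$ and translations ${\bf t}_n$ by~\eqref{eq:Isometry} (the classical absolute-orientation problem of~\citet{Horn1987}), and the identity choice ${\bf R}_n={\bf I}$, ${\bf t}_n={\bf 0}$ is feasible. Hence
\[
\sum_{m=1}^M\big\|\bar{\bf x}_m^{\text{\tiny(n)}}-\hat{\bf x}_m^{\text{\tiny(n)}}\big\|_2^2 = \sum_{m=1}^M\big\|({\bf R}^*_n\breve{\bf x}_m^{\text{\tiny(n)}}+{\bf t}^*_n)-\hat{\bf x}_m^{\text{\tiny(n)}}\big\|_2^2 \le \sum_{m=1}^M\big\|\breve{\bf x}_m^{\text{\tiny(n)}}-\hat{\bf x}_m^{\text{\tiny(n)}}\big\|_2^2.
\]
Summing over $n=1,\dots,N$ yields~\eqref{eq:Comparation1}, i.e. $\{{\bf X}^{\circ}_m,{\bf P}^*_n\}$ has no larger objective value in~\eqref{eq:Opt_WPfC} than $\{{\bf X}^{\circ}_m,{\bf P}^{\circ}_n\}$.

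I expect the only real obstacle to be notational: writing the homogeneous-coordinate argument of the second paragraph airtight, and flagging $\lambda_{mn}\neq 0$ so that every reprojection and every division is legitimate. A minor point worth one sentence is that the residual in~\eqref{eq:Isometry} must be read as a sum of squared distances (the classical absolute-orientation formulation) so that the per-image bound matches the squared terms in the objective of~\eqref{eq:Opt_WPfC}; beyond that, no optimization machinery is needed, since the argument only compares the optimum to one explicit feasible point.
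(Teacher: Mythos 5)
Your proposal is correct and follows essentially the same route as the paper's proof: establish $\bar{\bf x}_m^{\text{\tiny(n)}} = {\bf R}^*_n\breve{\bf x}_m^{\text{\tiny(n)}} + {\bf t}^*_n$ from~\eqref{eq:PM_Iso} via the homogeneous-coordinate identity, then compare the optimum of~\eqref{eq:Isometry} against the feasible identity map and sum over $n$. Your version merely makes the scale factor $\lambda_{mn}\neq 0$ and the feasibility of $({\bf I},{\bf 0})$ explicit, which the paper leaves implicit.
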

\begin{proof}
The formula of ${\bf P}^*_n$ given by~\eqref{eq:PM_Iso} implies
\begin{equation*}
{\bf P}^*_n\begin{bmatrix}{\bf X}^{\circ}_m\\[1mm] 1\end{bmatrix} = \begin{bmatrix}{\bf R}^*_n & {\bf t}^*_n\\[1mm]{\bf 0}_{1\times 2} & 1\end{bmatrix}{\bf P}^{\circ}_n\begin{bmatrix}{\bf X}^{\circ}_m\\[1mm]1\end{bmatrix} \sim \begin{bmatrix}{\bf R}^*_n & {\bf t}^*_n\\[1mm]{\bf 0}_{1\times 2} & 1\end{bmatrix}\begin{bmatrix}\breve{\bf x}_m^{\text{\tiny(n)}}\\[1mm] 1\end{bmatrix} = \begin{bmatrix}{\bf R}^*_n\breve{\bf x}_m^{\text{\tiny(n)}} + {\bf t}^*_n\\[1mm]1\end{bmatrix}\,.
\end{equation*}
Since $\{\bar{\bf x}_m^{\text{\tiny(1)}}\leftrightarrow\bar{\bf x}_m^{\text{\tiny(2)}}\leftrightarrow\cdots\leftrightarrow\bar{\bf x}_m^{\text{\tiny(N)}}\}$ be the correspondences of $\{{\bf X}^{\circ}_m,{\bf P}^*_n\}$, the above equations confirm $\bar{\bf x}_m^{\text{\tiny(n)}} = {\bf R}^*_n\breve{\bf x}_n^{\text{\tiny(n)}} + {\bf t}^*_n$ for all $m$ and $n$. Thus, for all $n$
\begin{equation}\label{eq:Final_Prop_Iso}
    \sum^M_{m=1}\big\|\bar{\bf x}_m^{\text{\tiny(n)}} - \hat{\bf x}_m^{\text{\tiny(n)}}\big\|^2_2 \leq \sum^M_{m=1}\Big\|\big({\bf I}_2\breve{\bf x}_m^{\text{\tiny(n)}} + {\bf 0}_{2\times 1}\big) - \hat{\bf x}_m^{\text{\tiny(n)}}\Big\|^2_2 = \sum^M_{m=1}\big\|\breve{\bf x}_m^{\text{\tiny(n)}} - \hat{\bf x}_m^{\text{\tiny(n)}}\big\|^2_2\, ,
\end{equation}
where ${\bf I}_2$ is a $2\times 2$ identity matrix.~\eqref{eq:Final_Prop_Iso} sufficiently finishes the proof of Proposition~\ref{Prop:Opt_by_Iso}.
\end{proof}

From the closed-form solution $\{{\bf X}^{\circ}_m,{\bf P}^{\circ}_n\}$ given by CF-WPfC algorithm, Proposition~\ref{Prop:Opt_by_Iso} proposes the better solution $\{{\bf X}^{\circ}_m,{\bf P}^*_n\}$ by using the absolute orientation theory to change the projection matrices. In the absolute orientation theory, we are interested in the following fact.
\begin{fact}\label{Fact:Orientation}
Given a group of $M$ points $\{{\bf x}_m\}^M_{m=1}$ in $\mathbb{R}^2$ and its estimation $\{{\bf x}'_m\}^M_{m=1}$. Assuming that $\{{\bf x}''_m\}^M_{m=1}$ is a linear transformation from $\{{\bf x}_m\}^M_{m=1}$ and $\{{\bf x}'_m\}^M_{m=1}$, and
\begin{equation*}
    \sum^M_{m=1}\big\|{\bf x}'_m - {\bf x}_m\big\|^2_2~\leq~\sum^M_{m=1}\big\|{\bf x}''_m - {\bf x}_m\big\|^2_2
\end{equation*}
then
\begin{equation*}
    \sum^M_{m=1}\big\|\big({\bf R}'{\bf x}'_m + {\bf t}'\big) - {\bf x}_m\big\|^2_2~\leq~\sum^M_{m=1}\big\|\big({\bf R}''{\bf x}''_m + {\bf t}''\big) - {\bf x}_m\big\|^2_2
\end{equation*}
where $\{{\bf R}',{\bf t}'\}$ and $\{{\bf R}'',{\bf t}''\}$ are the optimal solutions for~\eqref{eq:Isometry} corresponding to $\{{\bf x}'_m\}$ and $\{{\bf x}''_m\}$, respectively.
\end{fact}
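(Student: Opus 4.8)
The plan is to rewrite the claim in terms of a single scalar functional and then isolate exactly the point where the structural hypothesis on $\{{\bf x}''_m\}$ is needed. Write $\Phi(\{{\bf y}_m\}) \defeq \min_{{\bf R},{\bf t}} \sum_{m=1}^M \|({\bf R}{\bf y}_m + {\bf t}) - {\bf x}_m\|_2^2$ for the minimal residual of the absolute--orientation problem~\eqref{eq:Isometry} with target $\{{\bf x}_m\}$, the minimum being over ${\bf R}\in O(2)$ and ${\bf t}\in\mathbb{R}^2$; what must be shown is precisely $\Phi(\{{\bf x}'_m\}) \le \Phi(\{{\bf x}''_m\})$. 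Two cheap ingredients are available at once: a \emph{trivial upper bound} $\Phi(\{{\bf y}_m\}) \le \sum_m\|{\bf y}_m - {\bf x}_m\|_2^2$ (feasibility of ${\bf R} = {\bf I}_2$, ${\bf t} = {\bf 0}$), and \emph{invariance under pre--composition with an isometry}, $\Phi(\{{\bf Q}{\bf y}_m + {\bf s}\}) = \Phi(\{{\bf y}_m\})$ for ${\bf Q}\in O(2)$, ${\bf s}\in\mathbb{R}^2$, since $({\bf R},{\bf t})\mapsto({\bf R}{\bf Q},\,{\bf R}{\bf s}+{\bf t})$ is a bijection of the feasible set. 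If the map producing $\{{\bf x}''_m\}$ is itself an isometry the second ingredient already gives equality and we are done, so the work is entirely in the genuinely affine case.

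Next I would pass to the centred form of~\eqref{eq:Isometry}: the optimal translation for a fixed ${\bf R}$ is ${\bf t} = \frac1M\sum_m({\bf x}_m - {\bf R}{\bf y}_m)$, so after subtracting centroids one gets $\Phi(\{{\bf y}_m\}) = \sum_m\|\tilde{\bf x}_m\|_2^2 + \sum_m\|\tilde{\bf y}_m\|_2^2 - 2\,\sigma(\{{\bf y}_m\})$, where tildes mark centred coordinates and $\sigma(\{{\bf y}_m\}) = \max_{{\bf R}\in O(2)}\mathrm{tr}({\bf R}^T{\bf C}_y)$ equals the sum of the two singular values of the $2\times 2$ cross--covariance ${\bf C}_y \defeq \sum_m\tilde{\bf x}_m\tilde{\bf y}_m^T$. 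The inequality to prove becomes $\sum_m\|\tilde{\bf x}'_m\|_2^2 - 2\sigma(\{{\bf x}'_m\}) \le \sum_m\|\tilde{\bf x}''_m\|_2^2 - 2\sigma(\{{\bf x}''_m\})$, a statement about $2\times 2$ cross--covariances that is far more tractable than the original minimax.

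Then I would feed in the hypothesis. Writing the construction as ${\bf x}''_m = {\bf A}{\bf x}'_m + {\bf b}$ (with ${\bf A}$, ${\bf b}$ the affine map that, by the way the iterate is formed from $\{{\bf x}_m\}$ and $\{{\bf x}'_m\}$, is already best--aligned with $\{{\bf x}_m\}$ in the orthogonal and centroid sense), one has ${\bf C}_{y''} = {\bf C}_{y'}{\bf A}^T$ and $\sum_m\|\tilde{\bf x}''_m\|_2^2 = \mathrm{tr}({\bf A}\,{\bf S}\,{\bf A}^T)$ with ${\bf S} = \sum_m\tilde{\bf x}'_m\tilde{\bf x}'^{\,T}_m$. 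The endgame I expect is to show that this alignment property forces ${\bf R}'' = {\bf I}_2$, ${\bf t}'' = {\bf 0}$ to be optimal for $\{{\bf x}''_m\}$ — i.e. $\Phi(\{{\bf x}''_m\}) = \sum_m\|{\bf x}''_m - {\bf x}_m\|_2^2$ — via the first--order optimality conditions of~\eqref{eq:Isometry}, namely coincident centroids and symmetric positive--semidefinite ${\bf C}_{y''}$. Chaining this with the trivial bound for $\{{\bf x}'_m\}$ and the standing hypothesis $\sum_m\|{\bf x}'_m - {\bf x}_m\|_2^2 \le \sum_m\|{\bf x}''_m - {\bf x}_m\|_2^2$ then yields $\Phi(\{{\bf x}'_m\}) \le \sum_m\|{\bf x}'_m - {\bf x}_m\|_2^2 \le \sum_m\|{\bf x}''_m - {\bf x}_m\|_2^2 = \Phi(\{{\bf x}''_m\})$.

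The main obstacle is exactly this last certification. The plain ingredients only give $\Phi(\{{\bf x}'_m\}) \le \sum_m\|{\bf x}''_m - {\bf x}_m\|_2^2$, which is \emph{weaker} than the claim, and without a restriction on the transformation one cannot rule out that $\{{\bf x}''_m\}$ has large raw residual yet admits a much better rigid realignment than $\{{\bf x}'_m\}$ (e.g.\ $\{{\bf x}''_m\}$ a far rotated copy of $\{{\bf x}_m\}$). So the heart of the argument is to turn the phrase ``$\{{\bf x}''_m\}$ is a linear transformation from $\{{\bf x}_m\}$ and $\{{\bf x}'_m\}$'' into the precise statement that its orthogonal and translational parts are already optimal, and to verify the two first--order conditions above from that; once that is in hand, the rest is routine algebra with singular values of $2\times 2$ matrices.
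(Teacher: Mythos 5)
Your plan cannot be completed, and the obstacle you flag at the end is not a technical hurdle but a genuine refutation: the statement is false as written, and the paper does not prove it. Immediately after stating Fact~\ref{Fact:Orientation}, the authors concede that it ``is not always correct'' and that only ``the probability for the correctness \ldots is close to one'' for the data arising in their pipeline; it is used as a heuristic justification for the iteration in Proposition~\ref{Prop:Opt_by_Iter}, not as a theorem with a proof. Your own candidate counterexample already kills it: take $\{{\bf x}''_m\} = \{{\bf Q}{\bf x}_m\}$ for a rotation ${\bf Q}$ far from the identity (certainly ``a linear transformation from $\{{\bf x}_m\}$ and $\{{\bf x}'_m\}$'' under any reading of that phrase). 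Then the raw residual $\sum_m\|{\bf x}''_m-{\bf x}_m\|^2_2$ can exceed $\sum_m\|{\bf x}'_m-{\bf x}_m\|^2_2$, so the hypothesis holds, yet $\Phi(\{{\bf x}''_m\})=0$ while $\Phi(\{{\bf x}'_m\})>0$ whenever $\{{\bf x}'_m\}$ is not an exact isometric copy of $\{{\bf x}_m\}$, contradicting the conclusion. So the step where you hope to ``certify'' from the hypotheses that ${\bf R}''={\bf I}_2$, ${\bf t}''={\bf 0}$ are already optimal for $\{{\bf x}''_m\}$ has no basis: the hypotheses simply do not constrain the orthogonal part of the affine map, and no amount of manipulation of the $2\times 2$ cross-covariances will supply that constraint.

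Your reductions themselves are sound and worth keeping: the trivial bound $\Phi(\{{\bf y}_m\})\le\sum_m\|{\bf y}_m-{\bf x}_m\|^2_2$, the isometry invariance of $\Phi$, and the centred reformulation via the sum of singular values of the cross-covariance are all correct, and they do yield the weaker chain $\Phi(\{{\bf x}'_m\})\le\sum_m\|{\bf x}'_m-{\bf x}_m\|^2_2\le\sum_m\|{\bf x}''_m-{\bf x}_m\|^2_2$. That is as far as one can go without additional assumptions. If you want a provable version, you would need to add a hypothesis of exactly the kind you identify --- e.g.\ that the affine map carrying $\{{\bf x}'_m\}$ to $\{{\bf x}''_m\}$ has symmetric positive-semidefinite cross-covariance with $\{{\bf x}_m\}$ and matches centroids --- and then state the result conditionally. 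The honest reading of the paper is that no such hypothesis is assumed and the ``Fact'' is an empirical working assumption, which is why Proposition~\ref{Prop:Opt_by_Iter} is only claimed ``based on the correctness of Fact~\ref{Fact:Orientation}'' and why Algorithm~\ref{Alg:Iteration} guards against non-monotonicity by keeping the best iterate seen so far.
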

Fact~\ref{Fact:Orientation} is not always correct, however the probability for the correctness of Fact~\ref{Fact:Orientation} is close to one. For example, Fact~\ref{Fact:Orientation} is almost correct with the correspondences and their estimations in our study. Hence, in our work, we use this fact to create better solutions. Formally, we find the better solution $\{{\bf X}^{**}_m,{\bf P}^{**}_n\}$ than $\{{\bf X}^{\circ}_m,{\bf P}^*_n\}$ by the following process. From the correspondences $\{\bar{\bf x}_m^{\text{\tiny(1)}}\leftrightarrow\bar{\bf x}_m^{\text{\tiny(2)}}\leftrightarrow\cdots\leftrightarrow\bar{\bf x}_m^{\text{\tiny(N)}}\}$ of $\{{\bf X}^{\circ}_m,{\bf P}^*_n\}$, we create a new estimated correspondences $\{\ddot{\bf x}_m^{\text{\tiny(1)}}\leftrightarrow\ddot{\bf x}_m^{\text{\tiny(2)}}\leftrightarrow\cdots\leftrightarrow\ddot{\bf x}_m^{\text{\tiny(N)}}\}$ from the old estimated correspondences $\{\hat{\bf x}_m^{\text{\tiny(1)}}\leftrightarrow\hat{\bf x}_m^{\text{\tiny(2)}}\leftrightarrow\cdots\leftrightarrow\hat{\bf x}_m^{\text{\tiny(N)}}\}$ by a simple linear transformation that
\begin{equation*}
\qquad\qquad\qquad\ddot{\bf x}_m^{\text{\tiny(n)}} = \frac{1}{2}\big(\bar{\bf x}_m^{\text{\tiny(n)}} + \hat{\bf x}_m^{\text{\tiny(n)}}\big) \qquad \text{for all}\quad m,n\, .
\end{equation*}
Let
\begin{equation}\label{eq:Sol_ddot}
\{\ddot{\bf X}_m,\ddot{\bf P}_n\} ~=~ \texttt{CF-WPfC}\big(\{\ddot{\bf x}_m^{\text{\tiny(1)}}\leftrightarrow\ddot{\bf x}_m^{\text{\tiny(2)}}\leftrightarrow\cdots\leftrightarrow\ddot{\bf x}_m^{\text{\tiny(N)}}\}\big)
\end{equation}
and $\{\dot{\bf x}_m^{\text{\tiny(1)}}\leftrightarrow\dot{\bf x}_m^{\text{\tiny(2)}}\leftrightarrow\cdots\leftrightarrow\dot{\bf x}_m^{\text{\tiny(N)}}\}$ be the correspondences of $\{\ddot{\bf X}_m,\ddot{\bf P}_n\}$. Finally, let $\{\tilde{\bf x}_m^{\text{\tiny(1)}}\leftrightarrow\tilde{\bf x}_m^{\text{\tiny(2)}}\leftrightarrow\cdots\leftrightarrow\tilde{\bf x}_m^{\text{\tiny(N)}}\}$ be the optimal correspondences from $\{\dot{\bf x}_m^{\text{\tiny(1)}}\leftrightarrow\dot{\bf x}_m^{\text{\tiny(2)}}\leftrightarrow\cdots\leftrightarrow\dot{\bf x}_m^{\text{\tiny(N)}}\}$ by using~\eqref{eq:Isometry}. We use the final correspondences $\{\tilde{\bf x}_m^{\text{\tiny(1)}}\leftrightarrow\tilde{\bf x}_m^{\text{\tiny(2)}}\leftrightarrow\cdots\leftrightarrow\tilde{\bf x}_m^{\text{\tiny(N)}}\}$ to obtain the better solution $\{{\bf X}^{**}_m,{\bf P}^{**}_n\}$ as we want by
\begin{equation}\label{eq:Sol_twostar}
\{{\bf X}^{**}_m,{\bf P}^{**}_n\}~=~ \texttt{CF-WPfC}\big(\{\tilde{\bf x}_m^{\text{\tiny(1)}}\leftrightarrow\tilde{\bf x}_m^{\text{\tiny(2)}}\leftrightarrow\cdots\leftrightarrow\tilde{\bf x}_m^{\text{\tiny(N)}}\}\big)\,.
\end{equation}
The better of $\{{\bf X}^{**}_m,{\bf P}^{**}_n\}$ than $\{{\bf X}^{\circ}_m,{\bf P}^{*}_n\}$ is proven by the following proposition.

\begin{prop}\label{Prop:Opt_by_Iter}
The solution $\{\dot{\bf X}_m,\dot{\bf P}_n\}$ is better than $\{{\bf X}^{\circ}_m,{\bf P}^{\circ}_n\}$, and then the solution $\{{\bf X}^{**}_m,{\bf P}^{**}_n\}$ is better than $\{{\bf X}^{\circ}_n,{\bf P}^*_n\}$ based on the correctness of Fact~\ref{Fact:Orientation}.
\end{prop}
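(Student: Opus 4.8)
The plan is to follow the single scalar that ``better'' refers to: the objective value $d(\{{\bf X}_m,{\bf P}_n\})$ of~\eqref{eq:Opt_WPfC}, which for any group equals $\sum_{m,n}\big\|{\bf y}_m^{(n)}-\hat{\bf x}_m^{(n)}\big\|_2^2$, where $\{{\bf y}_m^{(n)}\}$ are the correspondences of that group. First I would record that $\breve{\bf x}_m^{(n)}$, $\bar{\bf x}_m^{(n)}$, $\dot{\bf x}_m^{(n)}$ and $\tilde{\bf x}_m^{(n)}$ are all \emph{exact} correspondence sets, each realized by a concrete group: $\{{\bf X}^{\circ}_m,{\bf P}^{\circ}_n\}$, $\{{\bf X}^{\circ}_m,{\bf P}^{*}_n\}$, $\{\ddot{\bf X}_m,\ddot{\bf P}_n\}$, and the group whose world points are $\ddot{\bf X}_m$ and whose projection matrices are $\ddot{\bf P}_n$ post-composed with the planar similarity $({\bf R}_n,{\bf t}_n)$ of~\eqref{eq:Isometry}, respectively. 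Hence, when an exact correspondence set is fed to \texttt{CF-WPfC}, Theorem~\ref{Theo:Sol_WPfC_NoNoise} guarantees that one of the five-point subsets scanned in Algorithm~\ref{Alg:Closed_form} yields a candidate with zero objective (under the non-degeneracy taken for granted throughout), so that candidate is selected and the output has exactly the input correspondences. In particular $d(\{{\bf X}^{**}_m,{\bf P}^{**}_n\})=\sum_{m,n}\big\|\tilde{\bf x}_m^{(n)}-\hat{\bf x}_m^{(n)}\big\|_2^2$, while by definition $d(\{\ddot{\bf X}_m,\ddot{\bf P}_n\})=\sum_{m,n}\big\|\dot{\bf x}_m^{(n)}-\hat{\bf x}_m^{(n)}\big\|_2^2$, $d(\{{\bf X}^{\circ}_m,{\bf P}^{\circ}_n\})=\sum_{m,n}\big\|\breve{\bf x}_m^{(n)}-\hat{\bf x}_m^{(n)}\big\|_2^2$ and $d(\{{\bf X}^{\circ}_m,{\bf P}^{*}_n\})=\sum_{m,n}\big\|\bar{\bf x}_m^{(n)}-\hat{\bf x}_m^{(n)}\big\|_2^2$; the ``solution $\{\dot{\bf X}_m,\dot{\bf P}_n\}$'' of the statement is the group $\{\ddot{\bf X}_m,\ddot{\bf P}_n\}$, whose correspondences are $\dot{\bf x}$.

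For the first assertion I would use only the averaging identity and Proposition~\ref{Prop:Opt_by_Iso}. From $\ddot{\bf x}_m^{(n)}=\tfrac12\big(\bar{\bf x}_m^{(n)}+\hat{\bf x}_m^{(n)}\big)$ we get $\ddot{\bf x}_m^{(n)}-\hat{\bf x}_m^{(n)}=\tfrac12\big(\bar{\bf x}_m^{(n)}-\hat{\bf x}_m^{(n)}\big)$, hence for each image $n$, $\sum_m\big\|\ddot{\bf x}_m^{(n)}-\hat{\bf x}_m^{(n)}\big\|_2^2=\tfrac14\sum_m\big\|\bar{\bf x}_m^{(n)}-\hat{\bf x}_m^{(n)}\big\|_2^2\le\tfrac14\sum_m\big\|\breve{\bf x}_m^{(n)}-\hat{\bf x}_m^{(n)}\big\|_2^2$, the inequality being the per-image optimality of $({\bf R}^{*}_n,{\bf t}^{*}_n)$ already used in Proposition~\ref{Prop:Opt_by_Iso}. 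Since $\bar{\bf x}$ is exact and $\hat{\bf x}$ is a nearly realizable estimated correspondence set, $\ddot{\bf x}$ is itself nearly realizable, so the closed-form maps of Theorem~\ref{Theo:Sol_WPfC_NoNoise} --- exact on exact data, hence continuous near it --- return $\dot{\bf x}$ that differs from $\ddot{\bf x}$ only negligibly. Therefore $\sum_m\big\|\dot{\bf x}_m^{(n)}-\hat{\bf x}_m^{(n)}\big\|_2^2\le\sum_m\big\|\breve{\bf x}_m^{(n)}-\hat{\bf x}_m^{(n)}\big\|_2^2$ for every $n$, and summing over $n$ gives $d(\{\ddot{\bf X}_m,\ddot{\bf P}_n\})\le d(\{{\bf X}^{\circ}_m,{\bf P}^{\circ}_n\})$.

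For the second assertion I would apply Fact~\ref{Fact:Orientation} image by image. By construction $\bar{\bf x}^{(n)}$ is the absolute-orientation-optimal alignment of $\breve{\bf x}^{(n)}$ onto $\hat{\bf x}^{(n)}$ (this is precisely~\eqref{eq:Isometry}--\eqref{eq:PM_Iso}), and $\tilde{\bf x}^{(n)}$ is the absolute-orientation-optimal alignment of $\dot{\bf x}^{(n)}$ onto $\hat{\bf x}^{(n)}$. Inverting $\ddot{\bf x}_m^{(n)}=\tfrac12(\bar{\bf x}_m^{(n)}+\hat{\bf x}_m^{(n)})$ together with $\bar{\bf x}_m^{(n)}={\bf R}^{*}_n\breve{\bf x}_m^{(n)}+{\bf t}^{*}_n$ gives $\breve{\bf x}_m^{(n)}=2{\bf R}_n^{*-1}\ddot{\bf x}_m^{(n)}-{\bf R}_n^{*-1}\hat{\bf x}_m^{(n)}-{\bf R}_n^{*-1}{\bf t}^{*}_n$, so, using $\dot{\bf x}^{(n)}\approx\ddot{\bf x}^{(n)}$ from the first assertion, $\breve{\bf x}^{(n)}$ is an affine function, with $m$-independent coefficients, of $\hat{\bf x}^{(n)}$ and $\dot{\bf x}^{(n)}$; moreover the first assertion restricted to image $n$ gives $\sum_m\|\dot{\bf x}_m^{(n)}-\hat{\bf x}_m^{(n)}\|_2^2\le\sum_m\|\breve{\bf x}_m^{(n)}-\hat{\bf x}_m^{(n)}\|_2^2$. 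Applying Fact~\ref{Fact:Orientation} with ${\bf x}_m=\hat{\bf x}_m^{(n)}$, ${\bf x}'_m=\dot{\bf x}_m^{(n)}$, ${\bf x}''_m=\breve{\bf x}_m^{(n)}$ then yields $\sum_m\|\tilde{\bf x}_m^{(n)}-\hat{\bf x}_m^{(n)}\|_2^2\le\sum_m\|\bar{\bf x}_m^{(n)}-\hat{\bf x}_m^{(n)}\|_2^2$ (the left side is the optimal alignment of ${\bf x}'$, the right side that of ${\bf x}''$, which is $\bar{\bf x}^{(n)}$). Summing over $n$ and recalling the identifications of the first paragraph, $d(\{{\bf X}^{**}_m,{\bf P}^{**}_n\})\le d(\{{\bf X}^{\circ}_m,{\bf P}^{*}_n\})$.

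The hard part is the step $\dot{\bf x}\approx\ddot{\bf x}$. \texttt{CF-WPfC} is a heuristic: it pins five world points to reproject exactly and only scans the $\binom{M}{5}$ index subsets, rather than genuinely minimizing~\eqref{eq:Opt_WPfC}, so the only honest justification that its output $\dot{\bf x}$ tracks its input $\ddot{\bf x}$ is the continuity of the closed-form formulas of Theorem~\ref{Theo:Sol_WPfC_NoNoise} around exact data, together with the fact that $\ddot{\bf x}$ is the midpoint of an exact set and a set that is itself nearly realizable. This is of the same ``holds with probability close to one on the data at hand'' character as Fact~\ref{Fact:Orientation}, on which the second assertion also rests; turning either into a quantitative perturbation bound would be the genuine work, and for the purposes of this paper it suffices that both are confirmed a posteriori by the 3D Buddha experiments.
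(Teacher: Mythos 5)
Your overall skeleton matches the paper's: identify each objective value with the squared distance between that group's exact correspondences and $\hat{\bf x}$, get the first claim from the averaging identity $\ddot{\bf x}_m^{\text{\tiny(n)}}-\hat{\bf x}_m^{\text{\tiny(n)}}=\tfrac12(\bar{\bf x}_m^{\text{\tiny(n)}}-\hat{\bf x}_m^{\text{\tiny(n)}})$ plus Proposition~\ref{Prop:Opt_by_Iso}, and get the second by feeding the resulting inequality into Fact~\ref{Fact:Orientation}. The genuine gap is in how you bridge from $\ddot{\bf x}$ (the input to \texttt{CF-WPfC}) to $\dot{\bf x}$ (the correspondences of its output). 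You assert $\dot{\bf x}\approx\ddot{\bf x}$ ``negligibly'' by continuity of the closed forms near exact data. But $\ddot{\bf x}$ is not near exact data in the relevant sense: it sits at distance $\tfrac12\|\bar{\bf x}_m^{\text{\tiny(n)}}-\hat{\bf x}_m^{\text{\tiny(n)}}\|$ from the realizable set $\bar{\bf x}$, which is precisely the order of the residuals being compared, so $\|\dot{\bf x}_m^{\text{\tiny(n)}}-\ddot{\bf x}_m^{\text{\tiny(n)}}\|$ has no reason to be negligible against them. (If it were, your argument would give $d(\{\ddot{\bf X}_m,\ddot{\bf P}_n\})\approx\tfrac14\,d(\{{\bf X}^{\circ}_m,{\bf P}^{*}_n\})$, a guaranteed factor-of-four decrease per iteration --- far too strong a conclusion.)

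The paper closes this step differently, in~\eqref{eq:Ineq_1}--\eqref{eq:Ineq_2}. For $m\le 5$ it has $\dot{\bf x}_m^{\text{\tiny(n)}}=\ddot{\bf x}_m^{\text{\tiny(n)}}$ exactly (because \texttt{CF-WPfC} pins those five points); for $m\ge 6$ it uses $\|\dot{\bf x}-\hat{\bf x}\|^2\le 2\|\dot{\bf x}-\ddot{\bf x}\|^2+2\|\ddot{\bf x}-\hat{\bf x}\|^2$ and then bounds $\sum_{m\ge6}\sum_n\|\dot{\bf x}-\ddot{\bf x}\|^2$ by $\tfrac14\,d(\{{\bf X}^{\circ}_m,{\bf P}^{\circ}_n\})$ by exhibiting the explicit feasible point $\{{\bf X}^{\circ}_m,\tfrac12({\bf P}^{\circ}_n+{\bf P}^{*}_n)\}$ of the constrained problem~\eqref{eq:Opt_CF_more6} w.r.t.\ $\ddot{\bf x}$, whose residual against $\ddot{\bf x}$ is exactly $\tfrac12(\breve{\bf x}-\hat{\bf x})$, and assuming only that \texttt{CF-WPfC} does no worse than this single candidate. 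That is still a heuristic (the paper writes ``we still expect''), but it is a far weaker assumption than yours, and it yields $d(\{\ddot{\bf X}_m,\ddot{\bf P}_n\})\le\tfrac12 d(\{{\bf X}^{\circ}_m,{\bf P}^{*}_n\})+\tfrac12 d(\{{\bf X}^{\circ}_m,{\bf P}^{\circ}_n\})$, which suffices. To repair your write-up, replace the continuity claim with this feasible-candidate comparison. One minor point in your favour: you apply Fact~\ref{Fact:Orientation} image by image, which is what the per-image absolute orientations actually require, whereas the paper applies it to the aggregated inequality~\eqref{eq:Ineq_3}; but your per-image version of the first inequality again rests on the unjustified $\dot{\bf x}\approx\ddot{\bf x}$.
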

\begin{proof}
A relationship between two correspondences $\{\dot{\bf x}_m^{\text{\tiny(1)}}\leftrightarrow\dot{\bf x}_m^{\text{\tiny(2)}}\leftrightarrow\cdots\leftrightarrow\dot{\bf x}_m^{\text{\tiny(N)}}\}$ and $\{\ddot{\bf x}_m^{\text{\tiny(1)}}\leftrightarrow\ddot{\bf x}_m^{\text{\tiny(2)}}\leftrightarrow\cdots\leftrightarrow\ddot{\bf x}_m^{\text{\tiny(N)}}\}$ via the solution $\{\ddot{\bf X}_m,\ddot{\bf P}_n\}$ confirms that $\dot{\bf x}_m^{\text{\tiny(n)}} = \ddot{\bf x}_m^{\text{\tiny(n)}}$ for all $n$ and $m = 1,2,\ldots,5$. Thus, the objective value of $\{\ddot{\bf X}_m,\ddot{\bf P}_n\}$ is computed based on its correspondences as follows.
\begin{equation}\label{eq:Ineq_1}
\begin{split}
    d\big(\{\ddot{\bf X}_m,\ddot{\bf P}_n\}\big) &= \sum^M_{m=1}\sum^N_{n=1}\big\|\dot{\bf x}_m^{\text{\tiny(n)}} - \hat{\bf x}_m^{\text{\tiny(n)}}\big\|^2_2\\
    &= \sum^5_{m=1}\sum^N_{n=1}\big\|\ddot{\bf x}_m^{\text{\tiny(n)}} - \hat{\bf x}_m^{\text{\tiny(n)}}\big\|^2_2 + \sum^M_{m=6}\sum^N_{n=1}\big\|\dot{\bf x}_m^{\text{\tiny(n)}} - \hat{\bf x}_m^{\text{\tiny(n)}}\big\|^2_2\\
    &\leq \sum^5_{m=1}\sum^N_{n=1}\Big\|\frac{1}{2}(\bar{\bf x}_m^{\text{\tiny(n)}} - \hat{\bf x}_m^{\text{\tiny(n)}})\Big\|^2_2 + \sum^M_{m=6}\sum^N_{n=1}2\Big(\big\|\dot{\bf x}_m^{\text{\tiny(n)}} - \ddot{\bf x}_m^{\text{\tiny(n)}}\big\|^2_2 + \big\|\ddot{\bf x}_m^{\text{\tiny(n)}} - \hat{\bf x}_m^{\text{\tiny(n)}}\big\|^2_2\Big)\\
    &= \frac{1}{4}\sum^5_{m=1}\sum^N_{n=1}\big\|\bar{\bf x}_m^{\text{\tiny(n)}} - \hat{\bf x}_m^{\text{\tiny(n)}}\big\|^2_2 + \frac{1}{2}\sum^M_{m=6}\sum^N_{n=1}\big\|\bar{\bf x}_m^{\text{\tiny(n)}} - \hat{\bf x}_m^{\text{\tiny(n)}}\big\|^2_2 + 2\sum^M_{m=6}\sum^N_{n=1}\big\|\dot{\bf x}_m^{\text{\tiny(n)}} - \ddot{\bf x}_m^{\text{\tiny(n)}}\big\|^2_2\\
    &= \frac{1}{2}d\big(\{{\bf X}^{\circ}_m,{\bf P}^*_n\}\big) - \frac{1}{4}\sum^5_{m=1}\sum^N_{n=1}\big\|\bar{\bf x}_m^{\text{\tiny(n)}} - \hat{\bf x}_m^{\text{\tiny(n)}}\big\|^2_2 + 2\sum^M_{m=6}\sum^N_{n=1}\big\|\dot{\bf x}_m^{\text{\tiny(n)}} - \ddot{\bf x}_m^{\text{\tiny(n)}}\big\|^2_2\\
    &\leq \frac{1}{2}d\big(\{{\bf X}^{\circ}_m,{\bf P}^*_n\}\big) + 2\sum^M_{m=6}\sum^N_{n=1}\big\|\dot{\bf x}_m^{\text{\tiny(n)}} - \ddot{\bf x}_m^{\text{\tiny(n)}}\big\|^2_2
\end{split}
\end{equation}
On the other hand, since the projection matrices ${\bf P}^{\circ}_n$ and ${\bf P}^*_n$ have the same third rows, the correspondences of the solution $\{{\bf X}^{\circ}_m,\frac{1}{2}({\bf P}^{\circ}_n + {\bf P}^*_n)\}$ will be
\begin{equation*}
    \bigg\{\frac{1}{2}\big(\breve{\bf x}_m^{\text{\tiny(1)}} + \bar{\bf x}_m^{\text{\tiny(1)}}\big)~\leftrightarrow~\frac{1}{2}\big(\breve{\bf x}_m^{\text{\tiny(2)}} + \bar{\bf x}_m^{\text{\tiny(2)}}\big)~\leftrightarrow~\cdots~\leftrightarrow~ \frac{1}{2}\big(\breve{\bf x}_m^{\text{\tiny(N)}} + \bar{\bf x}_m^{\text{\tiny(N)}}\big)\bigg\}^M_{m=1\, .}
\end{equation*}
In addition, $\{{\bf X}^{\circ}_m,\frac{1}{2}({\bf P}^{\circ}_n + {\bf P}^*_n)\}$ satisfies the constraints of~\eqref{eq:Opt_CF_more6} w.r.t. $\{\ddot{\bf x}_m^{\text{\tiny(1)}}\leftrightarrow\ddot{\bf x}_m^{\text{\tiny(2)}}\leftrightarrow\cdots\leftrightarrow\ddot{\bf x}_m^{\text{\tiny(N)}}\}$ because
\begin{equation*}
    \qquad\quad\frac{1}{2}\big({\bf P}^{\circ}_n+{\bf P}^*_n\big)\begin{bmatrix}{\bf X}^{\circ}_m\\[1mm]1\end{bmatrix}\sim\frac{1}{2}\big(\breve{\bf x}_m^{\text{\tiny(n)}} + \bar{\bf x}_m^{\text{\tiny(n)}}\big) = \frac{1}{2}\big(\hat{\bf x}_m^{\text{\tiny(n)}} + \bar{\bf x}_m^{\text{\tiny(n)}}\big) = \ddot{\bf x}_m^{\text{\tiny(n)}} \qquad \text{for all}\quad n \quad \text{and}\quad m = 1,2\ldots,5\,.
\end{equation*}
Indeed, although $\{\ddot{\bf X}_m,\ddot{\bf P}_n\}$ is not the global optimal solution for~\eqref{eq:Opt_CF_more6} w.r.t. $\{\ddot{\bf x}_m^{\text{\tiny(1)}}\leftrightarrow\ddot{\bf x}_m^{\text{\tiny(2)}}\leftrightarrow\cdots\leftrightarrow\ddot{\bf x}_m^{\text{\tiny(N)}}\}$, we still expect that it is better than $\{{\bf X}^{\circ}_m,\frac{1}{2}({\bf P}^{\circ}_n + {\bf P}^*_n)\}$ for this optimization. It means that
\begin{equation}\label{eq:Ineq_2}
\begin{split}
    \sum^M_{m=6}\sum^N_{n=1}\big\|\dot{\bf x}_m^{\text{\tiny(n)}} - \ddot{\bf x}_m^{\text{\tiny(n)}}\big\|^2_2 &~\leq~  \sum^M_{m=6}\sum^N_{n=1}\Big\|\frac{1}{2}\big(\bar{\bf x}_m^{\text{\tiny(n)}} + \breve{\bf x}_m^{\text{\tiny(n)}}\big) - \ddot{\bf x}_m^{\text{\tiny(n)}}\Big\|^2_2\\
    &~=~\sum^M_{m=6}\sum^N_{n=1}\Big\|\frac{1}{2}\big(\bar{\bf x}_m^{\text{\tiny(n)}} + \breve{\bf x}_m^{\text{\tiny(n)}}\big) - \frac{1}{2}\big(\bar{\bf x}_m^{\text{\tiny(n)}} + \hat{\bf x}_m^{\text{\tiny(n)}}\big)\Big\|^2_2\\
    &~=~ \frac{1}{4}d\big(\{{\bf X}^{\circ}_m,{\bf P}^{\circ}_n\}\big)\,.
\end{split}
\end{equation}
\begin{algorithm}[t!]
\caption{Iterative solutions for WPfC (\textbf{WPfC})}
\label{Alg:Iteration}
{
{\bf Input:} $\big\{\hat{\bf x}^{\text{\tiny(1)}}_m \,\leftrightarrow\,\hat{\bf x}^{\text{\tiny(2)}}_m \,\leftrightarrow\,\cdots \,\leftrightarrow\,\hat{\bf x}^{\text{\tiny(N)}}_m\big\}^M_{m=1}$~:~ Correspondences.\\[-2mm]

{\bf Conditions:} $M\geq 6$ and $N\geq 5$.\\[-2mm]

{\bf Implementation:}\\[-2mm]

{\bf 1.} $\{{\bf X}^{\circ}_m,{\bf P}^{\circ}_n\}~\overset{\text{CF-WPfC alg.}}{\longleftarrow}~\big\{\hat{\bf x}^{\text{\tiny(1)}}_m \,\leftrightarrow\,\hat{\bf x}^{\text{\tiny(2)}}_m \,\leftrightarrow\,\cdots \,\leftrightarrow\,\hat{\bf x}^{\text{\tiny(N)}}_m\big\}$\,.\\[-2mm]

{\bf 2.} $\text{Eval} ~~\overset{\eqref{eq:Opt_WPfC}}{\longleftarrow}~~d\big(\{{\bf X}^{\circ}_m,{\bf P}^{\circ}_n\}\big)$: an objective value,\\[-2mm]

\hspace{3.5mm} $\{{\bf X}^{\star}_m,{\bf P}^{\star}_n\} ~~\longleftarrow~~\{{\bf X}^{\circ}_m,{\bf P}^{\circ}_n\}$\,.\\[-2mm]

{\bf 3.} Repeat several times the following steps:\\[-2mm]

\hspace{6mm}{\bf a.} $\{\breve{\bf x}_m^{\text{\tiny(1)}}\leftrightarrow\breve{\bf x}_m^{\text{\tiny(2)}}\leftrightarrow\cdots\leftrightarrow\breve{\bf x}_m^{\text{\tiny(N)}}\}~~\overset{\eqref{eq:Corresponences_Est}}{\longleftarrow}~~\{{\bf X}^{\circ}_m,{\bf P}^{\circ}_n\}$\,,\\[-2mm]

\hspace{6mm}{\bf b.} for each $n$,~~ $\{\bar{\bf x}_m^{\text{\tiny(n)}}\}^M_{m=1}~~\overset{\text{Absolute orientation}}{\longleftarrow}~~\{\breve{\bf x}_m^{\text{\tiny(n)}}\}^M_{m=1}$\,,\\[-2mm]

\hspace{6mm}{\bf c.} create a new correspondences $\{\ddot{\bf x}_m^{\text{\tiny(1)}}\leftrightarrow\ddot{\bf x}_m^{\text{\tiny(2)}}\leftrightarrow\cdots\leftrightarrow\ddot{\bf x}_m^{\text{\tiny(N)}}\}$~:~~$\ddot{\bf x}_m^{\text{\tiny(n)}} = \frac{1}{2}\big(\bar{\bf x}_m^{\text{\tiny(n)}} + \hat{\bf x}_m^{\text{\tiny(n)}}\big)$\,,\\[-2mm]

\hspace{6mm}{\bf d.} $\{{\bf X}^{*}_m,{\bf P}^{*}_n\}~\overset{\text{CF-WPfC alg.}}{\longleftarrow}~\big\{\ddot{\bf x}^{\text{\tiny(1)}}_m \,\leftrightarrow\,\ddot{\bf x}^{\text{\tiny(2)}}_m \,\leftrightarrow\,\cdots \,\leftrightarrow\,\ddot{\bf x}^{\text{\tiny(N)}}_m\big\}$\,,\\[-2mm]

\hspace{6mm}{\bf e.} if $d\big(\{{\bf X}^{*}_m,{\bf P}^{*}_n\}\big) < \text{Eval}$:~~ $\{{\bf X}^{\star}_m,{\bf P}^{\star}_n\} ~~\longleftarrow~~\{{\bf X}^{*}_m,{\bf P}^{*}_n\}$\\[-2mm]

\hspace{51.5mm}$\text{Eval} ~~\overset{\eqref{eq:Opt_WPfC}}{\longleftarrow}~~ d\big(\{{\bf X}^{*}_m,{\bf P}^{*}_n\}\big)$\,,\\[-2mm]

\hspace{6mm}{\bf f.} $\{{\bf X}^{\circ}_m,{\bf P}^{\circ}_n\}~~\longleftarrow~~\{{\bf X}^{*}_m,{\bf P}^{*}_n\}$\,.\\[-2mm]

{\bf Output:} $\big\{{\bf X}^{\star}_m, {\bf P}^{\star}_n\big\}$\,.\\[-1mm]
}
\end{algorithm}

\noindent Because of $d\big(\{{\bf X}^{\circ}_m,{\bf P}^*_n\}\big) \leq d\big(\{{\bf X}^{\circ}_m,{\bf P}^{\circ}_n\}\big)$,~\eqref{eq:Ineq_1} and~\eqref{eq:Ineq_2} yield
\begin{equation*}
    d\big(\{\ddot{\bf X}_m,\ddot{\bf P}_n\}\big) ~\leq~ \frac{1}{2}d\big(\{{\bf X}^{\circ}_m,{\bf P}^*_n\}\big) + \frac{1}{2}d\big(\{{\bf X}^{\circ}_m,{\bf P}^{\circ}_n\}\big) ~\leq~ d\big(\{{\bf X}^{\circ}_m,{\bf P}^{\circ}_n\}\big)\,.
\end{equation*}
Therefore $\{\ddot{\bf X}_m,\ddot{\bf P}_n\}$ is better than $\{{\bf X}^{\circ}_m,{\bf P}^{\circ}_n\}$, i.e.,
\begin{equation}\label{eq:Ineq_3}
    \sum^M_{m=1}\sum^N_{n=1}\big\|\dot{\bf x}_m^{\text{\tiny(n)}} - \hat{\bf x}_m^{\text{\tiny(n)}}\big\|^2_2 ~\leq~ \sum^M_{m=1}\sum^N_{n=1}\big\|\breve{\bf x}_m^{\text{\tiny(n)}} - \hat{\bf x}_m^{\text{\tiny(n)}}\big\|^2_2\, .
\end{equation}
Fact~\ref{Fact:Orientation} and~\eqref{eq:Ineq_3} imply
\begin{equation}\label{eq:Ineq_4}
    \sum^M_{m=1}\sum^N_{n=1}\big\|\tilde{\bf x}_m^{\text{\tiny(n)}} - \hat{\bf x}_m^{\text{\tiny(n)}}\big\|^2_2 ~\leq~ \sum^M_{m=1}\sum^N_{n=1}\big\|\bar{\bf x}_m^{\text{\tiny(n)}} - \hat{\bf x}_m^{\text{\tiny(n)}}\big\|^2_2\, .
\end{equation}
Finally, since
\[\{{\bf X}^{**}_m,{\bf P}^{**}_n\} = \texttt{CF-WPfC}(\{\tilde{\bf x}_m^{\text{\tiny(1)}}\leftrightarrow\tilde{\bf x}_m^{\text{\tiny(2)}}\leftrightarrow\cdots\leftrightarrow\tilde{\bf x}_m^{\text{\tiny(N)}}\})\]  \[\{{\bf X}^{\circ}_m,{\bf P}^{*}_n\} = \texttt{CF-WPfC}(\{\bar{\bf x}_m^{\text{\tiny(1)}}\leftrightarrow\bar{\bf x}_m^{\text{\tiny(2)}}\leftrightarrow\cdots\leftrightarrow\bar{\bf x}_m^{\text{\tiny(N)}}\})\,\]~\eqref{eq:Ineq_4} confirms that
$\{{\bf X}^{**}_m,{\bf P}^{**}_n\}$ is better than $\{{\bf X}^{\circ}_m,{\bf P}^{*}_n\}$. Proposition~\ref{Prop:Opt_by_Iter} is proven.
\end{proof}

The results given by Propositions~\ref{Prop:Opt_by_Iso} and~\ref{Prop:Opt_by_Iter} allow us to build an iterative process to find good solutions from the closed-form solution given by Algorithm~\ref{Alg:Closed_form}. This iterative process is called the \emph{Iterative solutions for WPfC} and presented by Algorithm~\ref{Alg:Iteration}. In this work, the output
\begin{equation*}
    \{{\bf X}^{\star}_m,{\bf P}^{\star}_n\}~=~\texttt{WPfC}\big(\{\hat{\bf x}_m^{\text{\tiny(1)}}\leftrightarrow\hat{\bf x}_m^{\text{\tiny(2)}}\leftrightarrow\cdots\leftrightarrow\hat{\bf x}_m^{\text{\tiny(N)}}\}^M_{m=1}\big)
\end{equation*}
of Algorithm~\ref{Alg:Iteration} is considered the final solution for the world points from correspondences (WPfC). Since the correctness of Fact~\ref{Fact:Orientation} is not always guaranteed, convergence of our proposed iterative process can be not occurred. We overcome this drawback by simply choosing the optimal solution if its objective value is the smallest at each time in step 3, and fix the number of times in step 3 by 20.

\newpage

\section{Creating Point clouds}\label{Sec:Point_Clouds}

\begin{figure*}[t]
  \centering
  \includegraphics[width=0.95\linewidth]{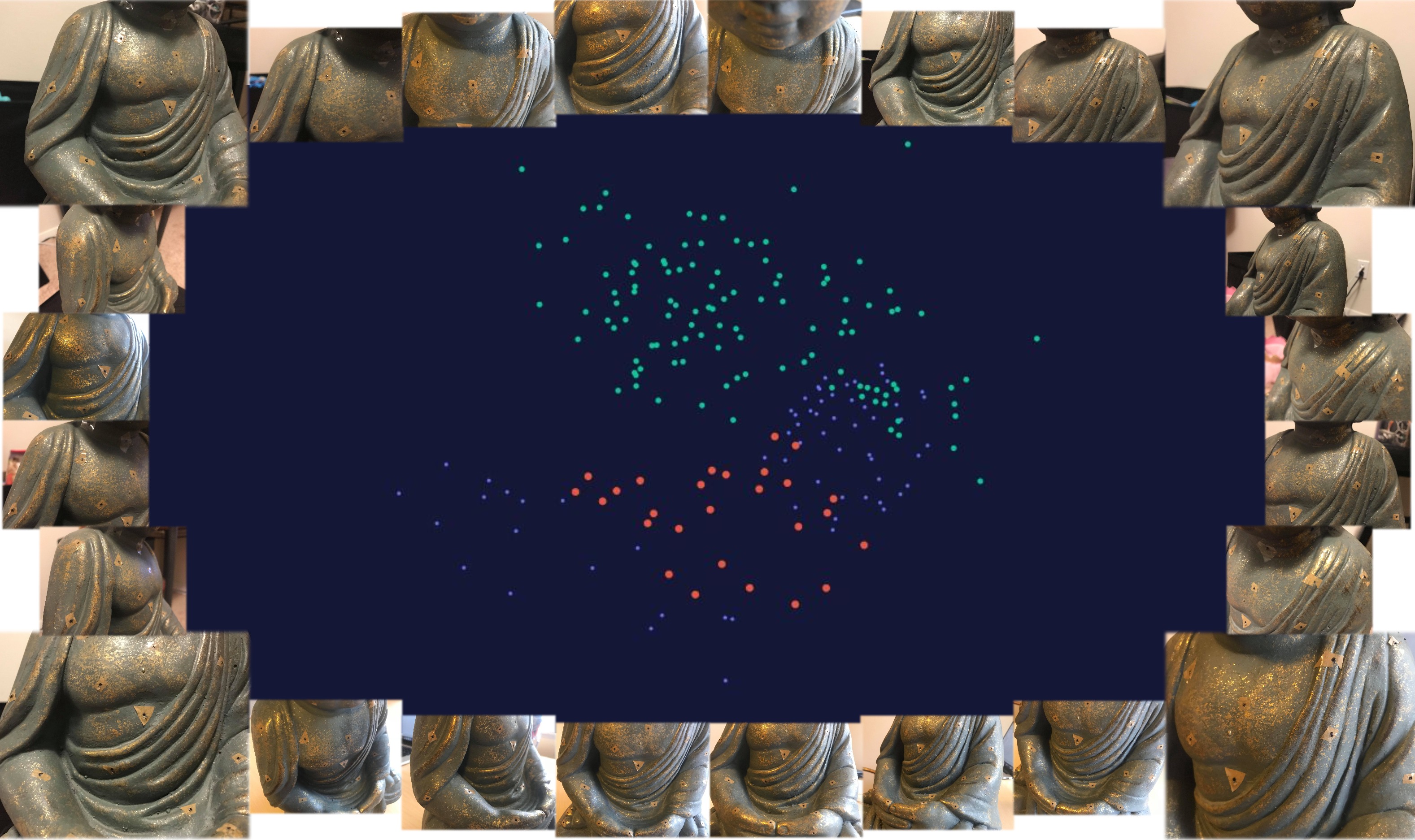}
  \caption{World points and camera positions from a 3D-reconstruction of the \emph{front body} of Buddha statue. This experiment has 116 images and 27 key-points. The results are estimated by Algorithm~\ref{Alg:Iteration} in which the \emph{red circles} are the estimated world points, the \emph{green circles} are the estimated camera positions. The \emph{small blue circles} are the other estimated world points from other experiments. They are added in the figure to simulate the 3D-shape of the Buddha statue. The Buddha statue is horizontal. Some examples of the correspondences can be found from the key-points on the twenty five images in the figure.}
  \label{Fig:Sec2}
\end{figure*}

We begin this section by summarizing our results that we had in the previous section. The results are simply explained by Figure~\ref{Fig:Sec2} that presents results from our project on 3D reconstructing \emph{front body of the Buddha statue}. Given 116 images, we determine 27 key-points and then correspondences. In our work, we use SURF~\citep{Bay2006} and SIFT~\citep{Lowe1999} to detect the key-points on each images, and then create the correspondences. The estimated correspondences are refined by~\citet{le2022multi} before using as the input of both Algorithms~\ref{Alg:Closed_form} and~\ref{Alg:Iteration}. The output of the Algorithm~\ref{Alg:Iteration} are the 27 world points indicated by the \emph{red circles} and the 116 camera positions indicated by the \emph{green circles}. Of course, from these 27 world points, we cannot recognize the Buddha statue. We need more points to get the 3D-shape of the statue, then construct its surface to complete its 3D-reconstruction.

This section studies how to create new points from the images, the world points and the camera positions derived by the previous section. These new points are called \emph{point clouds}. Before going further on the study of creating point clouds, we discuss some important tasks. First, points clouds should be almost everywhere on the object's surface. Second, points in point clouds will be evaluated with a metric. From there we know what is good and what is bad. Finally, a creating point clouds process should be simple, so we can recover bad points or missing surfaces by adding new images.

Discussing on the first task, we shall see a disadvantage of existence methods on 3D-reconstruction. In general, the existence methods build point clouds based on features they detect on images~\citep{Munkres1957,Lowe1999,Mikolajczyk2004,Bay2006,Myronenko2010}. Therefore, surfaces with weak or translucent features cannot be reconstructed. Of course, to overcome this disadvantage, we can change properties of features for successful point detection on these surfaces. In the viewpoint of this approach, features detection will depend on applications.

In this section, we propose a new approach to feature detection. A proposed feature of a point does not depend on neighbors of this point as the existence methods do. This feature is defined based on geometrical properties of this point to some given points. We name this feature by a \emph{geodesic feature} or shortly by a \emph{geo-feature}. Given two world points on the object's surface, a geodesic between these two points is defined as the shortest curve on the object's surface connecting two points~\citep{Busemann2012}. Then an unknown world point has a `\emph{geo-feature}' if there exist two know world points that this unknown world point lays on their geodesic. We shall introduce and study carefully the geo-feature in Subsection~\ref{Subsec:Geo_feature} after we list some needed notations for its introduction.

\subsection{Notations}\label{Subsec:Notations}

Given $K$ world points $\{{\bf X}_1,{\bf X}_2,\ldots,{\bf X}_K\}$, $N$ projection matrices $\{{\bf P}^{\star}_1,{\bf P}^{\star}_2,\ldots,{\bf P}^{\star}_N\}$ and their correspondences $\{{\bf x}_k^{\text{\tiny(1)}}\leftrightarrow{\bf x}_k^{\text{\tiny(2)}}\leftrightarrow\cdots\leftrightarrow{\bf x}_k^{\text{\tiny(N)}}\}^K_{k=1}$ as we can acquire from Section~\ref{Sec:World_points}. We use the symbol $\star$ for the projection matrices because all the projection matrices used in this section are from Algorithm~\ref{Alg:Iteration} in Section~\ref{Sec:World_points}, and use $K$ to replace $M$, the number of world points, because the number of world points will be increased in the process of creating point clouds $(K \gg M)$. Let us denote ${\bf Img}_n$ be a \emph{gray-scale image} from the $n$th camera. It is an $H\times W$ matrix and its elements are real values from zero to one. ${\bf Img}_n(h,w)$ or ${\bf Img}_n({\bf x}_k^{\text{\tiny(n)}})$ is the value of ${\bf Img}_n$ at the $(h,w)$-th pixel or the image point ${\bf x}_k^{\text{\tiny(n)}}$. Note that the image point ${\bf x}_k^{\text{\tiny(n)}}$ would be in the rectangle $[0,H]\times[0,W]$. Otherwise, we think that this image point does not exist. In addition, for convenience we accept the image point ${\bf x}_k^{\text{\tiny(n)}}$ as a real vector, not must be integer vector. When two coordinates of ${\bf x}_k^{\text{\tiny(n)}}$ are not integers, they are replaced by their closest integers to determine the value of ${\bf Img}_n({\bf x}_k^{\text{\tiny(n)}})$.

From any two world points ${\bf X}_k$, ${\bf X}_{k'}$ on the 3D object's surface, we denote $\mathcal{G}({\bf X}_k,{\bf X}_{k'})$ be a \emph{geodesic} between ${\bf X}_k$ and ${\bf X}_{k'}$, the shortest curve on the 3D object's surface connecting ${\bf X}_k$ and ${\bf X}_{k'}$. A motivation of this section is a study on how to reconstruct this geodesic. Given an image ${\bf Img}_n$ capturing both ${\bf X}_k$ and ${\bf X}_{k'}$, we define
\begin{equation}\label{eq:Image_Line}
{\bf Img}_n({\bf X}_k,{\bf X}_{k'}|L) ~=~ \begin{bmatrix}{\bf Img}_n({\bf x}_k^{\text{\tiny(n)}})\\[1mm]{\bf Img}_n\Big(\frac{1}{L-1}\left[(L-2){\bf x}_k^{\text{\tiny(n)}} + {\bf x}_{k'}^{\text{\tiny(n)}}\right]\Big)\\[2.5mm]{\bf Img}_n\Big(\frac{1}{L-1}\left[(L-3){\bf x}_k^{\text{\tiny(n)}} + 2{\bf x}_{k'}^{\text{\tiny(n)}}\right]\Big)\\\vdots\\[0.5mm]{\bf Img}_n\Big(\frac{1}{L-1}\left[{\bf x}_k^{\text{\tiny(n)}} + (L-2){\bf x}_{k'}^{\text{\tiny(n)}}\right]\Big)\\[2.5mm]{\bf Img}_n({\bf x}_{k'}^{\text{\tiny(n)}})\end{bmatrix}_{L\times 1}
\end{equation}
then normalize this vector, i.e., $\|{\bf Img}_n({\bf X}_k,{\bf X}_{k'}|L)\|_2 = 1$. This is the column vector of length $L$ that contains all image points on a straight line between the image points ${\bf x}_k^{\text{\tiny(n)}}$ of ${\bf X}_k$ and ${\bf x}_{k'}^{\text{\tiny(n)}}$ of ${\bf X}_{k'}$ on the image ${\bf Img}_n$. When $L = \|{\bf x}_k^{\text{\tiny(n)}} - {\bf x}_{k'}^{\text{\tiny(n)}}\|_{\infty}$, a $\mathcal{L}_{\infty}$-norm or maximum norm of the vector ${\bf x}_k^{\text{\tiny(n)}} - {\bf x}_{k'}^{\text{\tiny(n)}}$, each value at a pixel in the straight line between ${\bf x}_k^{\text{\tiny(n)}}$ and ${\bf x}_{k'}^{\text{\tiny(n)}}$ will correspond to one element in the vector ${\bf Img}_n({\bf X}_k,{\bf X}_{k'}|L)$, and when $L$ is double, each value at one pixel will correspond to two elements in the vector. We call this vector the \emph{image curve} between ${\bf X}_k$ and ${\bf X}_{k'}$ on ${\bf Img}_n$ and expect that this image curve is close to the image of the geodesic $\mathcal{G}({\bf X}_k,{\bf X}_{k'})$. Hence, from a lot of these image curves, we can obtain some information of the geodesic and reconstruct some points on it.

Next we introduce a concept of \emph{similar} for a group of images with respect to (w.r.t.) two world points.

\begin{defi}\label{Def:Similar_two_WPs}
Given a threshold $\theta\in(0,1)$ and two world points ${\bf X}_k, {\bf X}_{k'}$, a group of $J$ images $\{{\bf Img}_{n_1}$, ${\bf Img}_{n_2}$,$\ldots,{\bf Img}_{n_J}\}$ is called similar w.r.t. ${\bf X}_k, {\bf X}_{k'}$ and $\theta$ if
\begin{equation}\label{eq:Similar_two_WPs}
\big\|{\bf \Psi}^T{\bf \Psi} ~-~ {\bf 1}_{J\times J}\big\|_{\infty} ~\leq~ \theta\, ,
\end{equation}
where
\begin{equation}\label{eq:Psi}
{\bf \Psi} ~=~ \begin{bmatrix}{\bf Img}_{n_1}({\bf X}_k,{\bf X}_{k'}|L)^T \\[1mm] {\bf Img}_{n_2}({\bf X}_k,{\bf X}_{k'}|L)^T \\ \vdots \\ {\bf Img}_{n_J}({\bf X}_k,{\bf X}_{k'}|L)^T\end{bmatrix}^T_{L\times J}
\end{equation}
and
\begin{equation}\label{eq:Length}
L~=~\max\Big\{\big\|{\bf x}_k^{\text{\tiny$(n_1)$}} - {\bf x}_{k'}^{\text{\tiny$(n_1)$}}\big\|_{\infty}~,~\big\|{\bf x}_k^{\text{\tiny$(n_2)$}} - {\bf x}_{k'}^{\text{\tiny$(n_2)$}}\big\|_{\infty}~,\,\ldots\,,~\big\|{\bf x}_k^{\text{\tiny$(n_J)$}} - {\bf x}_{k'}^{\text{\tiny$(n_J)$}}\big\|_{\infty}\Big\}\,.
\end{equation}
\end{defi}

Since all columns ${\bf Img}_{n_j}({\bf X}_k,{\bf X}_{k'}|L)$ of ${\bf \Psi}$ are normalized, the $(i,j)$-th element of ${\bf \Psi}^T{\bf \Psi}$ is a Pearson correlation of two vectors ${\bf Img}_{n_i}({\bf X}_k,{\bf X}_{k'}|L)$ and ${\bf Img}_{n_j}({\bf X}_k,{\bf X}_{k'}|L)$. Thus, if the group $\{{\bf Img}_{n_1}, {\bf Img}_{n_2},\ldots,{\bf Img}_{n_J}\}$ is similar w.r.t. $\{{\bf X}_k, {\bf X}_{k'}, \theta\}$, then any two image curves of ${\bf X}_k$ and ${\bf X}_{k'}$ on two images $n_i$ and $n_j$ has a Pearson correlation at least $1-\theta$. Hence, when $\theta\simeq 0$, these curves are linearly dependent. This rare situation results two cases that (i) all these $J$ images capture two world points ${\bf X}_k$ and ${\bf X}_{k'}$ in the same direction, or (ii) the geodesic $\mathcal{G}({\bf X}_k,{\bf X}_{k'})$ is a straight line. We easily check the first case by comparing the camera positions and ${\bf X}_k, {\bf X}_{k'}$. If it is not the first but the second case, it is  useful for 3D reconstruction because all points in the straight line $\mathcal{G}({\bf X}_k,{\bf X}_{k'})$ can be reconstructed by these $J$ images. In practice, because of (i) non-straight line of the geodesic $\mathcal{G}({\bf X}_k,{\bf X}_{k'})$, (ii) distortions in images and others, it is  difficult to catch this situation with $\theta\simeq 0$. However, what is happen when the group $\{{\bf Img}_{n_1}, {\bf Img}_{n_2},\ldots,{\bf Img}_{n_J}\}$ is similar w.r.t. $\{{\bf X}_k, {\bf X}_{k'}, \theta\}$ with $\theta \simeq 0.3$? We expect that although this situation will not help us to reconstruct all points, it will help us to reconstruct some special points in $\mathcal{G}({\bf X}_k,{\bf X}_{k'})$.

Finally, we finish this subsection by introducing a metric for points in point clouds. Assuming that $\{{\bf x}^{\text{\tiny$(n_1)$}}\leftrightarrow{\bf x}^{\text{\tiny$(n_2)$}}\leftrightarrow\cdots\leftrightarrow{\bf x}^{\text{\tiny$(n_J)$}}\}$ is a correspondence found on the $J$-group $\{{\bf Img}_{n_1}, {\bf Img}_{n_2},\ldots,{\bf Img}_{n_J}\}$ based on the geo-feature as we study in Subsection~\ref{Subsec:Geo_feature}. From the given projection matrices $\{{\bf P}^{\star}_{n_1}, {\bf P}^{\star}_{n_2},\ldots,{\bf P}^{\star}_{n_J}\}$, we compute the $2J\times 4$ matrix $\mathcal{H}(\{{\bf P}^{\star}_{n_j}\})$ and determine a world point ${\bf X}$ as~\eqref{eq:Point_from_Camera}. The correct correspondence $\{\bar{\bf x}^{\text{\tiny$(n_1)$}}\leftrightarrow\bar{\bf x}^{\text{\tiny$(n_2)$}}\leftrightarrow\cdots\leftrightarrow\bar{\bf x}^{\text{\tiny$(n_J)$}}\}$ for the new world point is given by
\begin{equation}\label{eq:Image_point}
    \qquad\qquad\bar{\bf x}^{\text{\tiny$(n_j)$}}~=~ \left[\frac{{\bf p}^{\star}_{1,n_j}\big[{\bf X}^T\,,\,1\big]^T}{{\bf p}^{\star}_{3,n_j}\big[{\bf X}^T\,,\,1\big]^T}~,~\frac{{\bf p}^{\star}_{2,n_j}\big[{\bf X}^T\,,\,1\big]^T}{{\bf p}^{\star}_{3,n_j}\big[{\bf X}^T\,,\,1\big]^T}\right]^T\qquad \text{for}\quad j = 1,2\ldots,J\, .
\end{equation}
Thus, we use the \emph{mean differences} between these two correspondences to evaluate the new point ${\bf X}$. Concretely, we denote this metric by $\mathcal{E}_{\text{d}}$, and the metric of ${\bf X}$ is defined by
\begin{equation}\label{eq:Metric_point}
\mathcal{E}_{\text{d}}({\bf X}) ~=~ \frac{1}{J}\sum^J_{j=1}\big\|{\bf x}^{\text{\tiny$(n_j)$}} - \bar{\bf x}^{\text{\tiny$(n_j)$}}\big\|_{2\, .}
\end{equation}

Beside the metric $\mathcal{E}_{\text{d}}$, we evaluate the estimation of the world point ${\bf X}$ based on the number of images using for this estimation. We denote this metric by $\mathcal{E}_{\text{i}}$ and $\mathcal{E}_{\text{i}}({\bf X})$ will be $J$. Therefore, the new point ${\bf X}$ is good if $\mathcal{E}_{\text{d}}({\bf X})$ is low and $\mathcal{E}_{\text{i}}({\bf X})$ is high.

\subsection{Geodesic feature}\label{Subsec:Geo_feature}

Assuming that the group of $J$ images $\{{\bf Img}_{n_1}, {\bf Img}_{n_2},\ldots,{\bf Img}_{n_J}\}$ is similar w.r.t. two world points ${\bf X}_k, {\bf X}_{k'}$ and a threshold $\theta$. Given a length $\ell < L$ where $L$ is given by~\eqref{eq:Length}, we are interested with the following optimization
\begin{equation}\label{eq:Opt_find_feature}
\arg\hspace{-5.5mm}\min_{\hspace{-6mm}1\leq l_1,l_2,\ldots,l_J\leq L-\ell}\Big\|{\bf \Psi}^T_{l_1l_2\ldots l_J}{\bf \Psi}_{l_1l_2\ldots l_J}~-~{\bf 1}_{J\times J}\Big\|_{\infty}
\end{equation}
where ${\bf \Psi}_{l_1l_2\ldots l_J}$ is a $\ell\times J$ matrix that its $j$-th column is a sub-column beginning at $l_j$ and ending at $l_j+\ell-1$ of the $j$-th column of ${\bf \Psi}$, i.e.,
\begin{equation}
    \big[{\bf \Psi}_{l_1l_2\ldots l_J}\big]_j ~=~ \big[{\bf \Psi}(l_j,j)\,,\,{\bf \Psi}(l_j+1,j)\,,\,{\bf \Psi}(l_j+2,j)\,,\,\ldots\,,\,{\bf \Psi}(l_j+\ell-1,j)\big]^T
\end{equation}
with $\big[{\bf \Psi}_{l_1l_2\ldots l_J}\big]_j$ means the $j$-th column of ${\bf \Psi}_{l_1l_2\ldots l_J}$ and ${\bf \Psi}(i,j)$ mean the $(i,j)$-th element of ${\bf \Psi}$. If $\{l^*_1, l^*_2,\ldots,l^*_J\}$ is the optimal solution for~\eqref{eq:Opt_find_feature}, it is obvious that
\begin{equation*}
    \qquad\text{Corr}\Big(\big[{\bf \Psi}_{l^*_1l^*_2\ldots l^*_J}\big]_i\,,\,\big[{\bf \Psi}_{l^*_1l^*_2\ldots l^*_J}\big]_j\Big) ~\gg~ \text{Corr}\big({\bf \Psi}_i\,,\,{\bf \Psi}_j\big) > 1-\theta\,\qquad \text{for all}\quad i,j\,,
\end{equation*}
where ${\bf \Psi}_j = {\bf Img}_{n_j}({\bf X}_k,{\bf X}_{k'}|L)$ be the $j$-th column of ${\bf \Psi}$. We expect that when $\theta \simeq 0.3$ and $\ell \simeq L/10$, the correlation between two vectors $[{\bf \Psi}_{l^*_1l^*_2\ldots l^*_J}]_i$ and $[{\bf \Psi}_{l^*_1l^*_2\ldots l^*_J}]_j$ is at least 0.9 for all $i,j$. Then as our previous discussion, all the sub image curves $\big\{[{\bf \Psi}_{l^*_1l^*_2\ldots l^*_J}]_j\big\}^J_{j=1}$ are close to images of a sub-geodesic of $\mathcal{G}({\bf X}_k,{\bf X}_{k'})$. In addition, this sub-geodesic is nearly straight line. Indeed, in theory we can reconstruct all points in this sub-geodesic. In our work, we only reconstruct the mid-point of this sub-geodesic based on a correspondence from the mid-points of $\big\{[{\bf \Psi}_{l^*_1l^*_2\ldots l^*_J}]_j\big\}^J_{j=1}$. Concretely, the following definition of the geo-feature explains how we find a new world point to create the point clouds.

\begin{defi}\label{Def:Geo-feature}
Given a similar group $\{{\bf Img}_{n_1},{\bf Img}_{n_2},\ldots,{\bf Img}_{n_J}\}$ w.r.t. $\{{\bf X}_k, {\bf X}_{k'}, \theta\}$, let $\{l^*_1,l^*_2,\ldots,l^*_J\}$ be the optimal solution for~\eqref{eq:Opt_find_feature} corresponding to $\ell$. A correspondence $\{{\bf x}^{\text{\tiny$(n_1)$}}\leftrightarrow{\bf x}^{\text{\tiny$(n_2)$}}\leftrightarrow\cdots\leftrightarrow{\bf x}^{\text{\tiny$(n_J)$}}\}$ where
\begin{equation}\label{eq:Geo-feature}
{\bf x}^{\text{\tiny$(n_j)$}}~=~\frac{L - 1 - (l^*_j \,+\, \ell/2)}{L-1}{\bf x}_k^{\text{\tiny$(n_j)$}} + \frac{l^*_j + \ell/2}{L-1}{\bf x}_{k'}^{\text{\tiny$(n_j)$}}
\end{equation}
is called a geodesic feature (geo-feature) w.r.t. ${\bf X}_k, {\bf X}_{k'}, \theta$ and $\ell$.
\end{defi}

When the geo-feature $\{{\bf x}^{\text{\tiny$(n_1)$}}\leftrightarrow{\bf x}^{\text{\tiny$(n_2)$}}\leftrightarrow\cdots\leftrightarrow{\bf x}^{\text{\tiny$(n_J)$}}\}$ is found, we use~\eqref{eq:Point_from_Camera}, together with the projection matrix $\{{\bf P}^{\star}_{n_j}\}$, to create a new world point ${\bf X}$. If the metrics $\mathcal{E}_d$ and $\mathcal{E}_i$ of the new point is small and large, respectively, we add this point in the point clouds and reconstruct other points in the geodesic $\mathcal{G}({\bf X}_k,{\bf X}_{k'})$ by applying the same process with replacing the couple $\{{\bf X}_k,{\bf X}_{k'}\}$ by $\{{\bf X}_k,{\bf X}\}$ or $\{{\bf X},{\bf X}_{k'}\}$.

\subsection{Algorithm}\label{Subsec:Alg}

An initialization of our process on creating the point clouds is a group of $M$ world points $\{{\bf X}^{\star}_1, {\bf X}^{\star}_2, \ldots, {\bf X}^{\star}_M\}$ and $N$ projection matrices $\{{\bf P}^{\star}_1, {\bf P}^{\star}_2, \ldots, {\bf P}^{\star}_N\}$, the output of Algorithm~\ref{Alg:Iteration}. We choose all a couple of two world points ${\bf X}^{\star}_k$ and ${\bf X}^{\star}_{k'}$, and use $N$ images ${\bf Img}_1, {\bf Img}_2, \ldots, {\bf Img}_N$, together with two groups of image points $\{{\bf x}^{\text{\tiny(n)}}_k\}^N_{n=1}$ of ${\bf X}^{\star}_k$ and $\{{\bf x}^{\text{\tiny(n)}}_{k'}\}$ of ${\bf X}^{\star}_{k'}$ computing based on the projection matrices $\{{\bf P}^{\star}_n\}^N_{n=1}$, to find as much as possible geo-features w.r.t. this couple.

Beside a criterion of geo-feature used as a sign of good point for the point clouds, we can use a small value of $\mathcal{E}_d$ and a large value of $\mathcal{E}_i$ of a point to guarantee that this is a good point. For example, in our project of the Buddha's statue 3D reconstruction, if a point has its $\mathcal{E}_d$ value at most 25 and its $\mathcal{E}_i$ value at least 20, i.e., this point is estimated by at least 20 images and a mean difference between image points of this point before and after estimation is smaller than 25 pixels, would be a good point. Thus, we can create lots of candidates then use their metrics $\mathcal{E}_d$ and $\mathcal{E}_i$ to recognize good candidate to add in the point clouds. Based on this viewpoint, we shall simplify a process of finding geo-features based on the \emph{similar groups} given by Definition~\ref{Def:Similar_two_WPs} and the optimal solutions for~\eqref{eq:Opt_find_feature} by a simple process of finding \emph{candidates} that

\begin{defi}\label{Def:Candidate_Geofeature}
Given a group of $J$ images $\{{\bf Img}_{n_1}\,,\,{\bf Img}_{n_2}, \ldots,{\bf Img}_{n_J}\}$, a correspondence $\{{\bf x}^{\text{\tiny$(n_1)$}}\leftrightarrow{\bf x}^{\text{\tiny$(n_2)$}}\leftrightarrow\cdots\leftrightarrow{\bf x}^{\text{\tiny$(n_J)$}}\}$ is called a candidate of a geo-feature w.r.t. $\{{\bf X}_{k}, {\bf X}_{k'}, \theta, \ell\}$ if
\begin{enumerate}
\item for all $1\leq i\neq j \leq J$, a group of two images $\{{\bf Img}_{n_i}\,,\,{\bf Img}_{n_j}\}$ is similar w.r.t. $\{{\bf X}_k,{\bf X}_{k'},\theta\}$\,,
\item ${\bf x}^{\text{\tiny$(n_j)$}}$ is in the straight line between ${\bf x}_k^{\text{\tiny$(n_j)$}}$ and ${\bf x}_{k'}^{\text{\tiny$(n_j)$}}$, i.e., there exists an integer number $l_j$ between $\ell/2$ to $L-1-\ell/2$ that
\begin{equation*}
    {\bf x}^{\text{\tiny$(n_j)$}}~=~\frac{L - 1 - l_j}{L-1}{\bf x}_k^{\text{\tiny$(n_j)$}} + \frac{l_j}{L-1}{\bf x}_{k'}^{\text{\tiny$(n_j)$}}\,,
\end{equation*}
\item and for all $1\leq i \neq j \leq J$,
\begin{equation*}
    \text{Corr}\Big(\psi\big({\bf x}^{\text{\tiny$(n_i)$}}\big)\,,\,\psi\big({\bf x}^{\text{\tiny$(n_j)$}}\big)\Big) ~\geq~ 1-\theta\,,
\end{equation*}
where
\begin{equation}\label{eq:Image_Line_ell}
    \psi\big({\bf x}^{\text{\tiny$(n_j)$}}\big)~=~\psi\big({\bf x}_k^{\text{\tiny$(n_j)$}},{\bf x}_{k'}^{\text{\tiny$(n_j)$}},l_j\big)~\defeq~\begin{bmatrix}\frac{L-1-(l_j-\ell/2)}{L-1}{\bf x}_k^{\text{\tiny$(n_j)$}} + \frac{l_j - \ell/2}{L-1}{\bf x}_{k'}^{\text{\tiny$(n_j)$}}\\[2.5mm]\frac{L-1-(l_j-\ell/2+1)}{L-1}{\bf x}_k^{\text{\tiny$(n_j)$}} + \frac{l_j - \ell/2 +1}{L-1}{\bf x}_{k'}^{\text{\tiny$(n_j)$}}\\[1mm]\vdots\\[1mm]\frac{L-1-(l_j+\ell/2-1)}{L-1}{\bf x}_k^{\text{\tiny$(n_j)$}} + \frac{l_j + \ell/2-1}{L-1}{\bf x}_{k'}^{\text{\tiny$(n_j)$}}\\[2.5mm]\frac{L-1-(l_j+\ell/2)}{L-1}{\bf x}_k^{\text{\tiny$(n_j)$}} + \frac{l_j + \ell/2}{L-1}{\bf x}_{k'}^{\text{\tiny$(n_j)$}}\end{bmatrix}_{\ell\times 1}
\end{equation}
be the $\ell\times 1$ vector in the straight line between ${\bf x}_k^{\text{\tiny$(n_j)$}}$ and ${\bf x}_{k'}^{\text{\tiny$(n_j)$}}$ with ${\bf x}^{\text{\tiny$(n_j)$}}$ be a middle.
\end{enumerate}
\end{defi}

Although motivating to find candidates for geo-features of $J$ images ($J \gg 2$), Definition~\ref{Def:Candidate_Geofeature} allows us to find these candidates by doing independently with any two images from $J$ images. More precisely, given a couple of two images ${\bf Img}_{n_i}$ and ${\bf Img}_{n_j}$, first we check
\begin{equation*}\text{Corr}\big({\bf Img}_{n_i}({\bf X}_k,{\bf X}_{k'}|L)\,,\,{\bf Img}_{n_j}({\bf X}_k,{\bf X}_{k'}|L)\big) ~\geq~ 1-\theta
\end{equation*}
where $L = \max\big\{\|{\bf x}_{k}^{\text{\tiny$(n_i)$}} - {\bf x}_{k'}^{\text{\tiny$(n_i)$}}\|_{\infty}\,,\,\|{\bf x}_{k}^{\text{\tiny$(n_j)$}} - {\bf x}_{k'}^{\text{\tiny$(n_j)$}}\|_{\infty}\big\}$ to confirm that the group $\{{\bf Img}_{n_i}\,,\,{\bf Img}_{n_j}\}$ is similar w.r.t. $\{{\bf X}_k,{\bf X}_{k'},\theta\}$. Second, we find all $l_i, l_j \in \{0,1,\ldots,L-1\}$ such that the third condition of Definition~\ref{Def:Candidate_Geofeature} occurs. Then we store the image points
\begin{equation*}
\begin{split}
{\bf x}^{\text{\tiny$(n_i)$}} ~&=~ \frac{L-1-l_i}{L-1}{\bf x}_k^{\text{\tiny$(n_i)$}} + \frac{l_i}{L-1}{\bf x}_{k'}^{\text{\tiny$(n_i)$}}\qquad \text{to}\qquad \mathcal{I}_{n_i}\,,\\[2mm]
{\bf x}^{\text{\tiny$(n_j)$}} ~&=~ \frac{L-1-l_j}{L-1}{\bf x}_k^{\text{\tiny$(n_j)$}} + \frac{l_j}{L-1}{\bf x}_{k'}^{\text{\tiny$(n_j)$}}\qquad \text{to}\qquad \mathcal{I}_{n_j}\,,
\end{split}
\end{equation*}
where $\mathcal{I}_{n_i},\, \mathcal{I}_{n_j}$ be sets of all candidates for the geo-feature w.r.t. $\{{\bf X}_k,{\bf X}_{k'},\theta,\ell\}$ on the images $n_i$ and $n_j$, respectively, and remember that ${\bf x}^{\text{\tiny$(n_i)$}}$ in $\mathcal{I}_{n_i}$ matches to ${\bf x}^{\text{\tiny$(n_j)$}}$ in $\mathcal{I}_{n_j}$. Here we have a notice that if (i) ${\bf x}_1^{\text{\tiny$(n_j)$}} \in {\bf Img}_{n_j}$ matches to ${\bf x}^{\text{\tiny$(n_i)$}} \in {\bf Img}_{n_i}$, (ii) another ${\bf x}_2^{\text{\tiny$(n_j)$}} \in {\bf Img}_{n_j}$ matches to ${\bf x}^{\text{\tiny$(n_{i'})$}} \in {\bf Img}_{n_{i'}}$, and (iii) ${\bf x}_1^{\text{\tiny$(n_j)$}}\neq{\bf x}_2^{\text{\tiny$(n_j)$}}$ but they are very close then we must conclude ${\bf x}_1^{\text{\tiny$(n_j)$}}$ matches to ${\bf x}^{\text{\tiny$(n_{i'})$}}$ and ${\bf x}_2^{\text{\tiny$(n_j)$}}$ matches to ${\bf x}^{\text{\tiny$(n_{i})$}}$. In addition, we should replace two points ${\bf x}_1^{\text{\tiny$(n_j)$}}$ and ${\bf x}_2^{\text{\tiny$(n_j)$}}$ by a new point $\bar{\bf x}^{\text{\tiny$(n_j)$}}$ and remember that the new point $\bar{\bf x}^{\text{\tiny$(n_j)$}}$ matches to both ${\bf x}^{\text{\tiny$(n_i)$}}$ on the image $n_i$ and ${\bf x}^{\text{\tiny$(n_{i'})$}}$ on the image $n_{i'}$. To solve this problem, we shall cluster the set $\mathcal{I}_{n_j}$ to group image points that are close together into a group. Note that all image points in $\mathcal{I}_{n_j}$ are in a straight line between ${\bf x}_k^{\text{\tiny$(n_j)$}}$ and ${\bf x}_{k'}^{\text{\tiny$(n_j)$}}$. Thus, we shall use the \emph{Jenks natural breaks optimization}~\citep{Jenks1967} to cluster all image points in $\mathcal{I}_{n_j}$. We have a mention that in one-dimensional clustering, the Jenks natural breaks optimization seems to be \emph{$k$-means clustering}~\citep{Hartigan1979}.

\begin{algorithm}[t!]
\caption{Creating Point Clouds from two given world points (\textbf{CrPC})}
\label{Alg:Point_Clouds}
{
{\bf Input:} $\big\{{\bf P}^{\star}_1, {\bf P}^{\star}_2, \ldots, {\bf P}^{\star}_N\big\}$~:~ Projection matrices,\\[-2mm]

\hspace{12mm} ${\bf X}_k$ ~and~ ${\bf X}_{k'}$\hspace{7.25mm}:~ Two given world points,\\[-2mm]

\hspace{12mm} $\theta$ ~and~ $\ell$\hspace{14.5mm}:~ Thresholds for candidates detection.\\[-2mm]

{\bf Implementation:}\\[-2mm]

{\bf 1.} $\mathcal{PC} ~\longleftarrow~ \emptyset$ : \quad a point clouds set collecting all reconstructed points,\\[-2mm]

\hspace{5mm}$\{{\bf x}_k^{\text{\tiny(n)}}\}^N_{n=1} \overset{\text{Image points}~\eqref{eq:Image_point}}{\longleftarrow} \big\{{\bf X}_k\,;\,\{{\bf P}^{\star}_n\}\big\}$\qquad\text{and}\qquad$\{{\bf x}_{k'}^{\text{\tiny(n)}}\}^N_{n=1} \overset{\text{Image points}~\eqref{eq:Image_point}}{\longleftarrow} \big\{{\bf X}_{k'}\,;\,\{{\bf P}^{\star}_n\}\big\}$\,,\\[-2mm]

\hspace{5mm}$\mathcal{I}_{1} \longleftarrow \emptyset$~,\quad $\mathcal{I}_{2} \longleftarrow \emptyset$~, ~$\ldots$~,\quad $\mathcal{I}_{N} \longleftarrow \emptyset$\,.\\[-2mm]

{\bf 2.} For all $1\leq n_i \neq n_j \leq N$: \\[-2mm]

\hspace{6mm}{\bf 2.1.} $L \longleftarrow \max\big\{\|{\bf x}_{k}^{\text{\tiny$(n_i)$}} - {\bf x}_{k'}^{\text{\tiny$(n_i)$}}\|_{\infty}\,,\,\|{\bf x}_{k}^{\text{\tiny$(n_j)$}} - {\bf x}_{k'}^{\text{\tiny$(n_j)$}}\|_{\infty}\big\}$\,.\\[-2mm]

\hspace{6mm}{\bf 2.2.} ${\bf Img}_{n_i}({\bf X}_k,{\bf X}_{k'}|L)$\quad and \quad ${\bf Img}_{n_j}({\bf X}_k,{\bf X}_{k'}|L) ~\longleftarrow~~\eqref{eq:Image_Line}$\,.\\[-2mm]

\hspace{6mm}{\bf 2.3.} If~ $\text{Corr}\big({\bf Img}_{n_i}({\bf X}_k,{\bf X}_{k'},|L)\,,\,{\bf Img}_{n_j}({\bf X}_k,{\bf X}_{k'}|L)\big)~>~ 1-\theta$ :\\[-2mm]

\hspace{16mm}for all $\ell/2 \leq l_i, l_j \leq L-1-\ell/2$:\\[-2mm]

\hspace{16mm}{\bf a.} $\psi\big({\bf x}_k^{\text{\tiny$(n_i)$}},{\bf x}_{k'}^{\text{\tiny$(n_i)$}},l_i\big)$\quad and \quad$\psi\big({\bf x}_k^{\text{\tiny$(n_j)$}},{\bf x}_{k'}^{\text{\tiny$(n_j)$}},l_j\big)~\longleftarrow~\eqref{eq:Image_Line_ell}$\\[-2mm]

\hspace{16mm}{\bf b.} if~ $\text{Corr}\big(\psi({\bf x}_k^{\text{\tiny$(n_i)$}},{\bf x}_{k'}^{\text{\tiny$(n_i)$}},l_i)\,,\,\psi({\bf x}_k^{\text{\tiny$(n_j)$}},{\bf x}_{k'}^{\text{\tiny$(n_j)$}},l_j)\big) ~>~1-\theta$ :\\

\hspace{26mm}{\bf i.}~~ ${\bf x}^{\text{\tiny$(n_i)$}} ~\longleftarrow~ \frac{1}{L-1}\big[(L-1-l_i){\bf x}_k^{\text{\tiny$(n_i)$}} + l_i{\bf x}_{k'}^{\text{\tiny$(n_i)$}}\big]$~ \quad and\quad add ${\bf x}^{\text{\tiny$(n_i)$}}$ ~to~ $\mathcal{I}_{n_i}$\\

\hspace{26mm}{\bf ii.}~ ${\bf x}^{\text{\tiny$(n_j)$}} ~\longleftarrow~ \frac{1}{L-1}\big[(L-1-l_j){\bf x}_k^{\text{\tiny$(n_j)$}} + l_j{\bf x}_{k'}^{\text{\tiny$(n_j)$}}\big]$ \quad and\quad add ${\bf x}^{\text{\tiny$(n_j)$}}$ ~to~ $\mathcal{I}_{n_j}$\\

\hspace{26mm}{\bf iii.} Remember that ``${\bf x}^{\text{\tiny$(n_i)$}}$ matches to ${\bf x}^{\text{\tiny$(n_j)$}}$''.\\[-2mm]

{\bf 3.} For each~ $n\in\{1,2,\ldots,N\}$,\quad $\big\{\bar{\bf x}_1^{\text{\tiny(n)}}, \bar{\bf x}_2^{\text{\tiny(n)}},\ldots,\bar{\bf x}_{\kappa_n}^{\text{\tiny(n)}}\big\}$ $\overset{\text{~Jenks natural breaks optimization~}}{\longleftarrow}$ $\mathcal{I}_n$\,.\\[-2mm]

{\bf 4.} $\mathcal{C}({\bf X}_k, {\bf X}_{k'},\theta,\ell) ~\longleftarrow~$~\eqref{eq:Match_Cluster} and~\eqref{eq:Candidate_Set}.\\[-2mm]

{\bf 5.} For each candidate $\big\{\bar{\bf x}_{h_1}^{\text{\tiny$(n_1)$}}\leftrightarrow\bar{\bf x}_{h_2}^{\text{\tiny$(n_2)$}}\leftrightarrow\cdots\leftrightarrow\bar{\bf x}_{h_J}^{\text{\tiny$(n_J)$}}\big\}$ ~in~ $\mathcal{C}({\bf X}_k, {\bf X}_{k'},\theta,\ell)$\\[-2mm]

\hspace{6mm}{\bf 5.1.} Use $J$ projection matrices $\{{\bf P}^{\star}_{n_1},{\bf P}^{\star}_{n_2},\ldots,{\bf P}^{\star}_{n_J}\}$ and~\eqref{eq:Point_from_Camera} to create a new world point ${\bf X}$\,,\\[-2mm]

\hspace{6mm}{\bf 5.2.} $\mathcal{E}_d({\bf X}) ~\longleftarrow~~\eqref{eq:Image_point}$ and~\eqref{eq:Metric_point}\,,\qquad $\mathcal{E}_i({\bf X}) = J$\,,\\[-2mm]

\hspace{6mm}{\bf 5.3.} If $\mathcal{E}_d({\bf X})$ and $\mathcal{E}_i({\bf X})$ satisfy the conditions of the point clouds, add ${\bf X}$ to $\mathcal{PC}$.\\[-2mm]

{\bf Output:} $\mathcal{PC}$ ~-~ Point clouds.\\[-1mm]
}
\end{algorithm}

Assuming that the Jenks natural breaks optimization clusters $\mathcal{I}_{n_j}$ into $\kappa_j$ classes $\{{\bf I}_{1}, {\bf I}_2, \ldots,{\bf I}_{\kappa_j}\}$, we shall replace all the image points ${\bf x}^{\text{\tiny$(n_j)$}}$ in ${\bf I}_h$ by their mean $\bar{\bf x}_h^{\text{\tiny$(n_j)$}}$ given by
\begin{equation}\label{eq:Imge_Cluster}
\bar{\bf x}_{h}^{\text{\tiny$(n_j)$}} = \frac{1}{|{\bf I}_{h}|}\sum_{{\bf x}^{\text{\tiny$(n_j)$}}\in{\bf I}_h}{\bf x}^{\text{\tiny$(n_j)$}}
\end{equation}
where $|{\bf I}_{h}|$ is the number of image points in ${\bf I}_{h}$. Then on the image ${\bf Img}_{n_j}$, at the straight line connecting two image points ${\bf x}_k^{\text{\tiny$(n_j)$}}$ and ${\bf x}_{k'}^{\text{\tiny$(n_j)$}}$, we only use $\kappa_j$ new image points $\bar{\bf x}^{\text{\tiny$(n_j)$}}_1, \bar{\bf x}^{\text{\tiny$(n_j)$}}_2, \ldots, \bar{\bf x}_{\kappa_j}^{\text{\tiny$(n_j)$}}$ to find candidates for geo-features as defined by Definition~\ref{Def:Candidate_Geofeature}. The candidates will be found based on the matches of elements in $J$ groups $\big\{\bar{\bf x}^{\text{\tiny$(n_j)$}}_1, \bar{\bf x}^{\text{\tiny$(n_j)$}}_2, \ldots, \bar{\bf x}_{\kappa_j}^{\text{\tiny$(n_j)$}}\big\}^J_{j=1}$ corresponding to $J$ groups of images $\{{\bf Img}_{n_1}, {\bf Img}_{n_2}, \ldots, {\bf Img}_{n_J}\}$ that
\begin{equation}\label{eq:Match_Cluster}
\begin{cases}
{\bf x}^{\text{\tiny$(n_i)$}} \quad \text{is replaced by} \quad \bar{\bf x}_{h_i}^{\text{\tiny$(n_i)$}}\\[2mm]
{\bf x}^{\text{\tiny$(n_j)$}} \quad \text{is replaced by} \quad \bar{\bf x}_{h_j}^{\text{\tiny$(n_j)$}}\\[2mm]
{\bf x}^{\text{\tiny$(n_i)$}} \quad \text{matches to} \hspace{4.5mm}\quad {\bf x}^{\text{\tiny$(n_j)$}}
\end{cases} \qquad \Rightarrow \qquad \bar{\bf x}_{h_i}^{\text{\tiny$(n_i)$}} \quad \text{matches to} \quad \bar{\bf x}_{h_j}^{\text{\tiny$(n_j)$}}\, .
\end{equation}
Concretely, we consider the following set of correspondences
\begin{equation}\label{eq:Candidate_Set}
\mathcal{C}\big({\bf X}_k,{\bf X}_{k'},\theta,\ell\big) ~\defeq~ \Bigg\{\big\{\bar{\bf x}_{h_1}^{\text{\tiny$(n_1)$}}\leftrightarrow\bar{\bf x}_{h_2}^{\text{\tiny$(n_2)$}}\leftrightarrow\cdots\leftrightarrow\bar{\bf x}_{h_J}^{\text{\tiny$(n_J)$}}\big\}~\Bigg|~\begin{matrix}\bar{\bf x}_{h_j}^{\text{\tiny$(n_j)$}}\in\big\{\bar{\bf x}^{\text{\tiny$(n_j)$}}_1, \bar{\bf x}^{\text{\tiny$(n_j)$}}_2, \ldots, \bar{\bf x}_{\kappa_j}^{\text{\tiny$(n_j)$}}\big\}~~\text{for all}~~j\,~~\\[1mm]
\bar{\bf x}_{h_i}^{\text{\tiny$(n_i)$}} \quad \text{matches to}\quad \bar{\bf x}_{h_j}^{\text{\tiny$(n_j)$}}\hspace{7.25mm}\quad\text{for all}~~i,j\,\end{matrix}\Bigg\}
\end{equation}
as candidates for the geo-features w.r.t. $\{{\bf X}_k,{\bf X}_{k'},\theta,\ell\}$. From these candidates, new points are created based on the known projection matrices ${\bf P}^{\star}_{n_1}, {\bf P}^{\star}_{n_2}, \ldots, {\bf P}^{\star}_{n_J}$ and the formula~\eqref{eq:Point_from_Camera}. Depending on applications, the optimal metric values $\mathcal{E}_d$ and $\mathcal{E}_i$ are used to decide whether or not to create these new points to the point clouds.

\newpage

Finally, Algorithm~\ref{Alg:Point_Clouds}, named the \emph{creating point clouds} (\texttt{CrPC}), presents step-by-step the process of creating point clouds based on the two given world points ${\bf X}_k, {\bf X}_{k'}$ and the parameters $\theta$ and $\ell$.

\section{Point Clouds from Correspondences via an experiment}\label{Sec:PC_from_Correspondences}

Section~\ref{Sec:Point_Clouds} proposes the algorithm named \texttt{CrCP} and the theory of \emph{reconstructing geodesic based on geo-feature} for this algorithm. New world points in the point clouds are created based on some given world points found by Algorithm \texttt{WPfC} and the correspondences. This section will describe an optimal method to create the point clouds from the correspondences such that these point clouds would be everywhere in the surface of the reconstructed object. This is the most important feature of the point clouds we hope to obtain when we discuss it at the beginning of Section~\ref{Sec:Point_Clouds}. The description is based on a real experiment in our project of the Buddha statue 3D reconstruction. In this experiment, we use 116 images of the \emph{front body} of the Buddha statue to 3D reconstruct the front body part of this statue. In the front body of the Buddha statue, we have 27 special points that help us on estimating the correspondences, reconstructing 3D points of these 27 special points, and estimating the projection matrices for the 116 images as we done and figured out in Figure~\ref{Fig:Sec2}.

This section begins with Figure~\ref{Fig:Images_Keypoints} that gives 38 from 116 images we use for this experiment. In these 38 images, we indicate the key points from the 27 special points by the red dots. These key-points are used to determine the correspondences. We determine these key-points based on our work in~\citet{le2022multi}. The average special points captured by each image is 9.4. There are only two from 116 images capture 17 (maximum) and three images capture 6 (minimum) from 27 special points. Two images capturing 17 special points are two big images in Figure~\ref{Fig:Images_Keypoints}.

\begin{figure*}[t]
  \centering
  \includegraphics[width=0.95\linewidth]{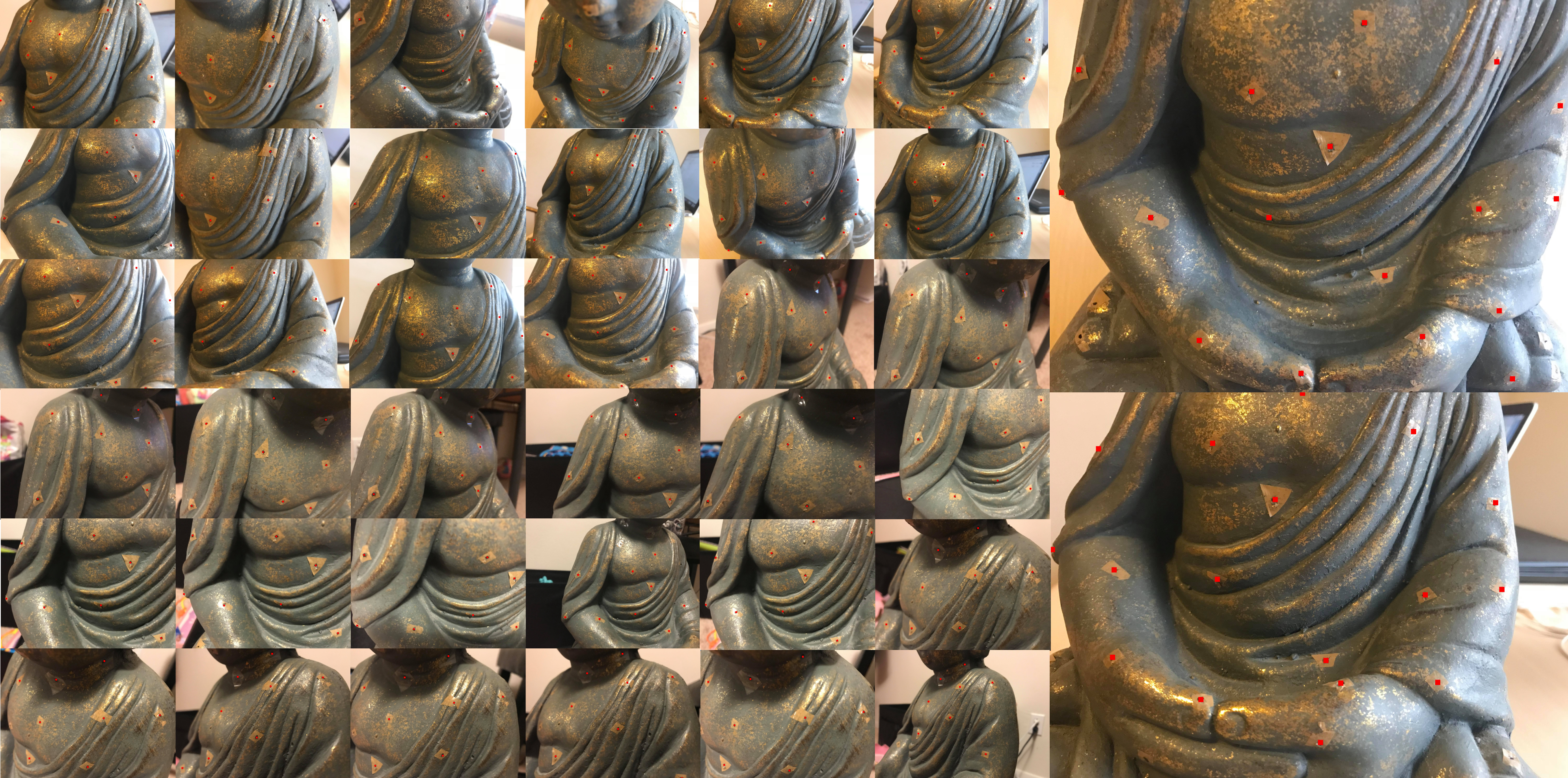}
  \caption{Images with key points (\emph{Red dots}) for the front body of Buddha statue experiment. Thirty eight (38) images from one hundred and sixteen (116) images of the experiment.  Two big images capture seventeen (17) from twenty seven (27) special points in the surface of the Buddha statue's front body.}
  \label{Fig:Images_Keypoints}
\end{figure*}

We denote 27 special points by $\{{\bf X}_1, {\bf X}_2, \ldots, {\bf X}_{27}\}$ and use Algorithm \texttt{WPfC} together with correspondences from the key points to estimate them. To fix the ambiguity of the 3D reconstruction under any projective transformation as presented by Proposition~\ref{Prop:Ambiguity}, we need to know exact relative positions of at least four points from these 27 special points. Indeed, in our 3D reconstruction, we shall determine the relative positions of four points $\{{\bf X}_1, {\bf X}_2, {\bf X}_3, {\bf X}_4\}$ by manually measuring six Euclidean distances $\|{\bf X}_i - {\bf X}_j\|_2,~(1\leq i < j \leq 4)$ and then using the \emph{multidimensional scaling} (MDS)~\citep{Schonemann1970} to acquire their relative positions. If we need to increase an accuracy, we can determine the relative positions of five special points by measuring ten Euclidean distances or the relative positions of six special points by measuring fifteen Euclidean distances. Next without loss of generality, we assume that the 3D positions of ${\bf X}_1, {\bf X}_2, {\bf X}_3$ and ${\bf X}_4$ are known. We consider all $N$ images $\{{\bf Img}_{i_1}, {\bf Img}_{i_2},\ldots,{\bf Img}_{i_N}\}$ such that there are $N$-view correspondences of $M$ world points and $\{{\bf X}_1, {\bf X}_2, {\bf X}_3, {\bf X}_4\}$ are ones of these $M$ world points. Using Algorithm \texttt{WPfC} with these $N$-view correspondences of $M$ world points, we reconstruct all other world points to increase the number of known world points from four to $M$. From the $M$ known world points, we continue this process until we have a total of 27 estimated world points ${\bf X}^{\star}_1, {\bf X}^{\star}_2,\ldots,{\bf X}^{\star}_{27}$. Of course, we also derive all projection matrices ${\bf P}^{\star}_1, {\bf P}^{\star}_2,\ldots,{\bf P}^{\star}_{116}$ of 116 images. The 27 estimated world points and 116 camera positions corresponding to 116 projection matrices are figured out in Figure~\ref{Fig:Sec2}.

\begin{figure*}[t]
  \centering
  \includegraphics[width=0.95\linewidth]{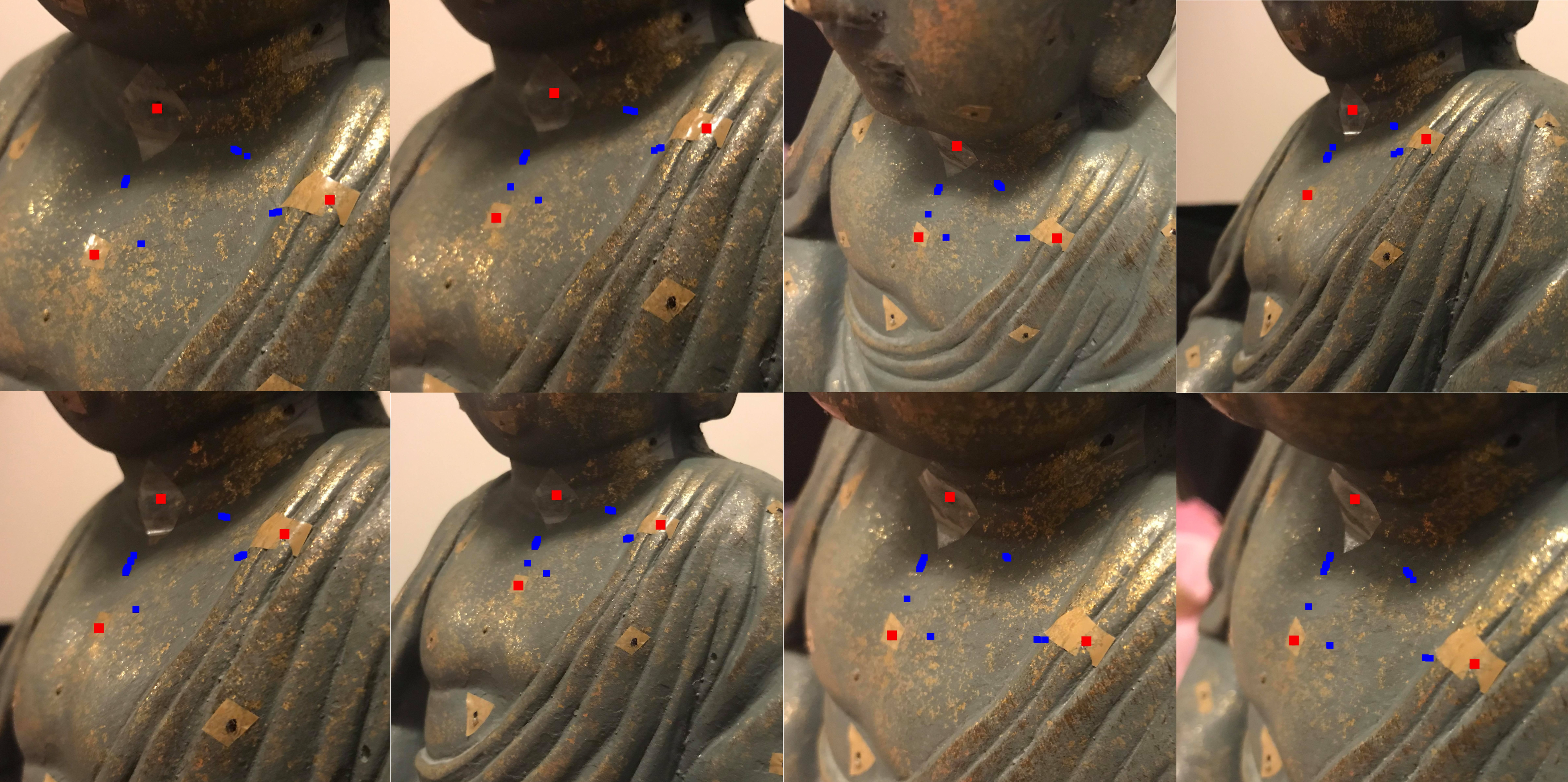}
  \caption{Zoom out of eight example images capturing three world points ${\bf X}^{\star}_4, {\bf X}^{\star}_5, {\bf X}^{\star}_{26}$. Some of geo-features w.r.t. these three world points and the parameters $\theta = 0.3,~\ell = L/10$ are the \emph{blue dots} in each images.}
  \label{Fig:Geo_features}
\end{figure*}

\begin{figure*}[t]
  \centering
  \includegraphics[width=0.95\linewidth]{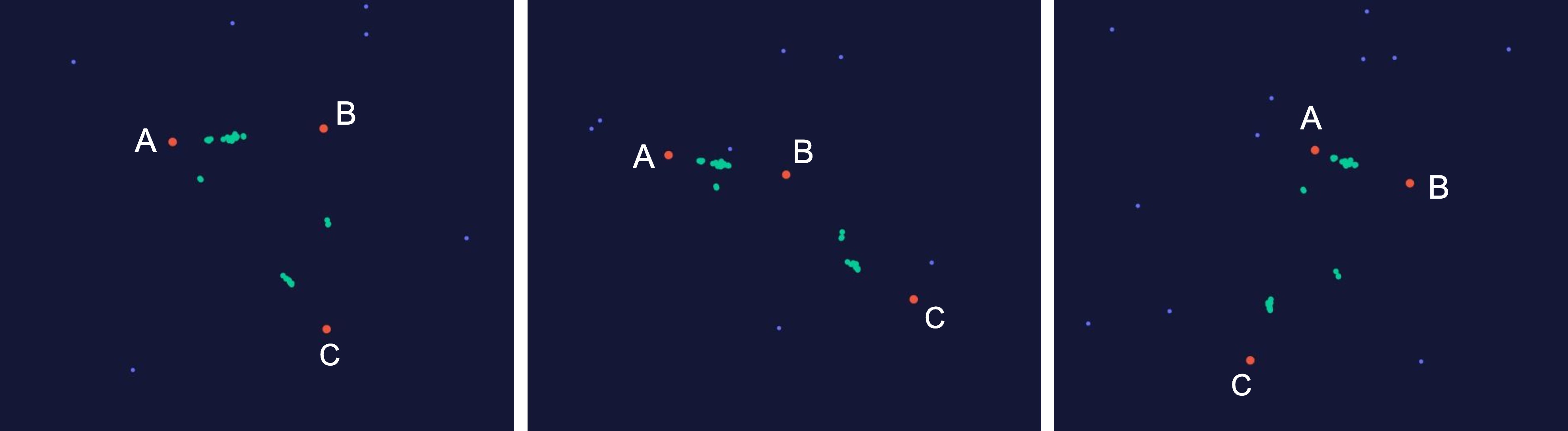}
  \caption{Three different views in three-dimensional space of the old world points ${\bf X}^{\star}_4, {\bf X}^{\star}_5, {\bf X}^{\star}_{26}$ (\emph{red dots}) and 37 new world points (\emph{green dots}) found based on these three old world points. The \emph{white A} indicates the world point ${\bf X}^{\star}_4$, the \emph{white B} indicates the world point ${\bf X}^{\star}_5$, and the \emph{white C} indicates the world point ${\bf X}^{\star}_{26}$.}
  \label{Fig:Wpoints_Est}
\end{figure*}

To generate the point clouds, we take two arbitrary world points ${\bf X}^{\star}_k$ and ${\bf X}^{\star}_{k'}$ from the 27 estimated world points and find all candidates for the geo-features w.r.t. $\{{\bf X}^{\star}_k\,,\,{\bf X}^{\star}_{k'}\,,\,\theta = 0.3\,,\, \ell = L/10\}$. For example, with a couple $\{{\bf X}^{\star}_4\,,\,{\bf X}^{\star}_5\}$, there are 48 images capturing this couple and from them we find a total 12 geo-features. Note that there are some of 48 images capture 10 but some others capture only 3 from these 12 geo-features. Let us consider other examples with two couples $\{{\bf X}^{\star}_4\,,\,{\bf X}^{\star}_{26}\}$ and $\{{\bf X}^{\star}_5\,,\,{\bf X}^{\star}_{26}\}$. There are 42 images capturing the first couple and 22 images capturing the second one. We find 26 new geo-features w.r.t. the first couple but only 8 new geo-features w.r.t. the second couple. Eight example images that capture all three world points ${\bf X}^{\star}_4, {\bf X}^{\star}_5, {\bf X}^{\star}_{26}$ together with some new geo-features (\emph{blue dots}) found from these three world points are in Figure~\ref{Fig:Geo_features}. Normally, from these (12 + 26 + 8 =) 46 geo-features w.r.t. three world points $\{{\bf X}^{\star}_4, {\bf X}^{\star}_5, {\bf X}^{\star}_{26}\}$, we shall have 46 new world points. However, since we use the constraints that $\mathcal{E}_d(\text{new world point}) \leq 40$ and $\mathcal{E}_i(\text{new world point}) \geq 7$ to recognize good world points, we only have 37 new world points from these 46 geo-features. The new world points (\emph{green dots}), together with three old world points ${\bf X}^{\star}_4, {\bf X}^{\star}_5, {\bf X}^{\star}_{26}$ (\emph{red dots}) are plotted by Figure~\ref{Fig:Wpoints_Est}. 

\begin{figure*}[t]
  \centering
  \includegraphics[width=0.95\linewidth]{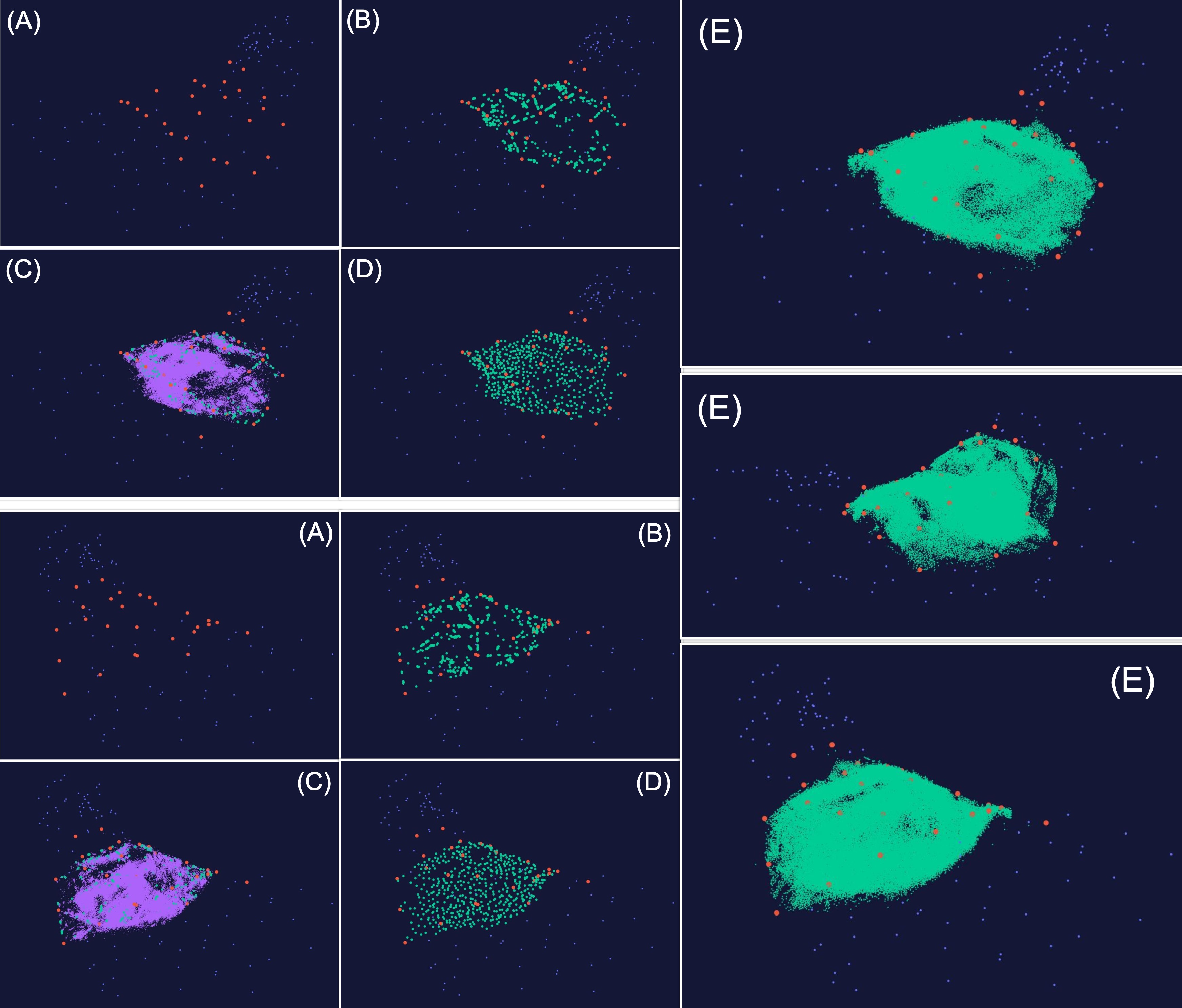}

\vspace{-0.05in}

  \caption{A process of creating point clouds from given world points: \textbf{(A)} \emph{Red dots} are some given world points (called \emph{level 0}), \textbf{(B)} \emph{Green dots} are point clouds (called \emph{level 1}) creating from level 0, \textbf{(C)} \emph{Purple dots} are point clouds (called \emph{level 2}) generating from level 1, and \textbf{(E)} \emph{Green dots} are point clouds (called \emph{level 3}) creating from the point clouds in \textbf{(D)} that refined by level 2, \textbf{(F)} and so on. The process is simulated by an experiment of reconstructing the front body of the Buddha statue.}
  \label{Fig:Creating_PClouds}
\end{figure*}

The above paragraph explains a process of creating new world points from old world points. Figure~\ref{Fig:Creating_PClouds} gives a sufficient description of a process of generating point clouds. In this figure, sub-figures (A) are the 27 estimated world points $\{{\bf X}^{\star}_1, {\bf X}^{\star}_2, \ldots, {\bf X}^{\star}_{27}\}$ (\emph{red circles}) in two different views. From these given 27 world points, we generate 785 new world points as we do with three world points $\{{\bf X}^{\star}_4,{\bf X}^{\star}_5, {\bf X}^{\star}_{26}\}$ in the previous paragraph. The old 27 world points (\emph{red circles}) and 785 new world points (\emph{green circles}) are plotted in the sub-figures (B). We continue this process with (27 + 785 =) 812 given world points to create more than 400,000 new world points as \emph{purple circles} in the sub-figures (C). Although a huge number of world points are generated, there are some areas in the surface that cannot reconstruct. In another word, our point clouds are not everywhere of the object's surface. Indeed, we continue our process of creating the point clouds by collecting the best world points in the current point clouds to generate new world points. Specifically, we choose the best world points such that they have minimal values of $\mathcal{E}_d$ and no two points in them are closer than 0.5 centimeter. There are 522 world points chosen based on this algorithm. These 522 world points (\emph{green circles}) and 27 initialization (\emph{red circles}) are given in the sub-figures (D). Comparing with 785 world points in the sub-figures (B), the new 522 world points cover the surface better. With these 522 world points, Algorithm \texttt{CrPC}, together with the constraints $\mathcal{E}_d(\cdot) \leq 40$ and $\mathcal{E}_i(\cdot)\geq 7$, generates more than 2 millions new world points. From the new world points, we choose around the 600,000 best ones such that no two points in them are closer than 0.05 centimeter. The 600,000 best ones are plotted in the sub-figures (E) as the \emph{green dots}. To see how the final point clouds cover the area between the hands and the front body, we add another view of these point clouds (the middle sub-figure (E)).

There is one interesting point we can see in Figure~\ref{Fig:Creating_PClouds} to evaluate how is efficient of Algorithm \texttt{CrPC} in reconstructing the front body of Buddha statue. From the initialization with 27 world points, there are two world points in the neck of the statue that only have twelve images capture them. Indeed, first we guess Algorithm \texttt{CrPC} cannot help us to reconstruct the area in the neck. It is still correct when we see results in the sub-figures (B), (C) and (D). However, the results in the sub-figures (E) show that there are some new world points in the neck area that means Algorithm \texttt{CrPC} can reconstruct this area. This result also yields that the 522 world points in the sub-figures (D) is really better than the 783 world points in the sub-figures (B). Therefore, we believe that from the point clouds in the sub-figures (E), we find the best ones as we do with 522 world points in the sub-figures (D) from the point clouds in the sub-figures (C), then the new world points may be better than the 522 world points. Thus, we can have better point clouds than the point clouds given in the sub-figures (E).

\section{Evaluation: The Buddha statue 3D reconstruction}\label{Sec:Evaluation}

\subsection{The Buddha statue experiments}\label{Subsec:Experiments}

\begin{figure*}[t]
  \centering
  \includegraphics[width=0.85\linewidth]{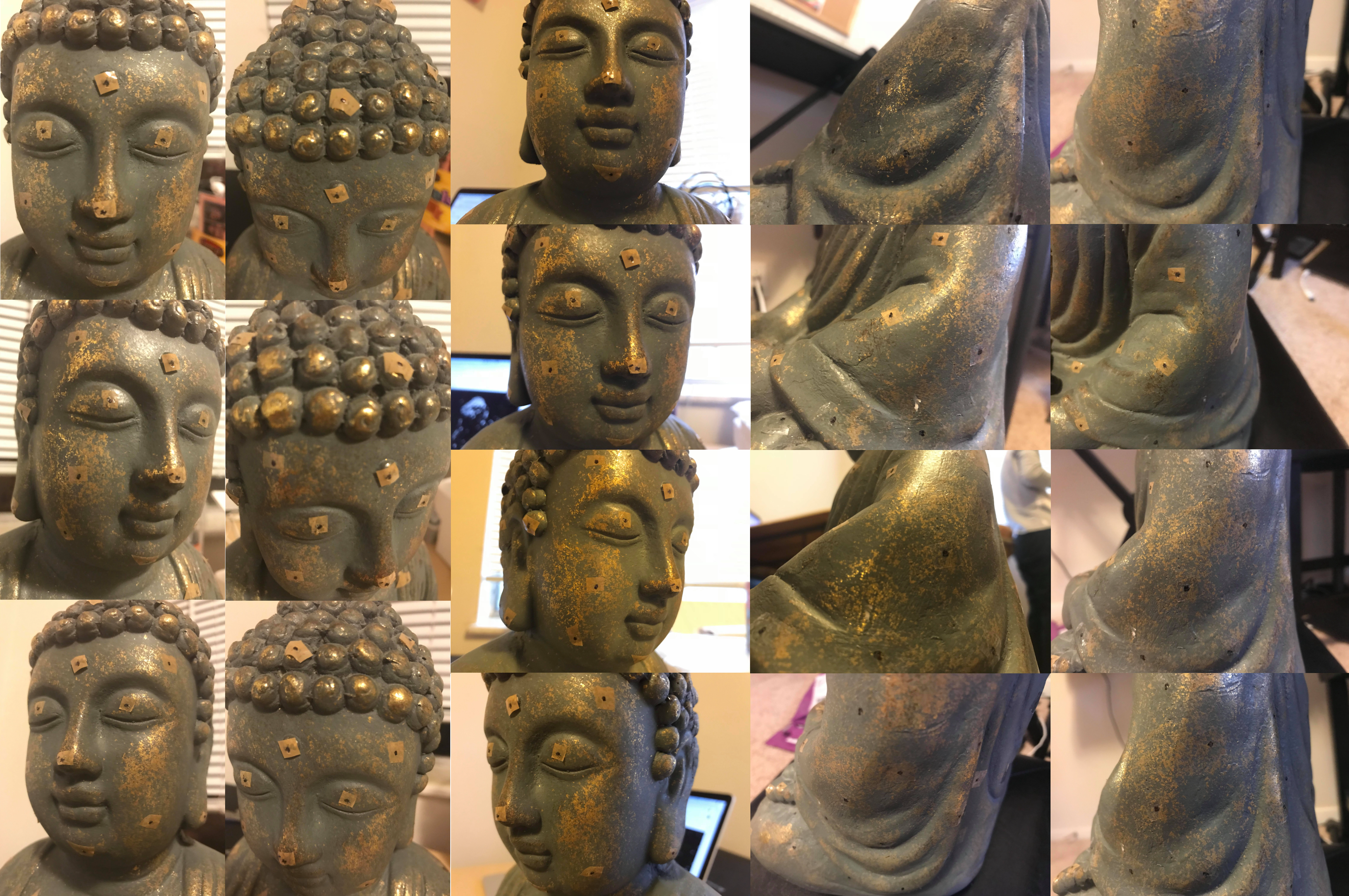}

\vspace{-0.05in}

  \caption{Thirteen example images from 160 images of the \emph{face experiment} and eight example images from 68 images of the \emph{right hand experiment}.}
  \label{Fig:Experiments}\vspace{-0.05in}
\end{figure*}

Our research on creating 3D point clouds from a few accurate 2D correspondences is validated by the 3D Buddha statue reconstruction project as we describe in Section~\ref{Sec:Introduction} by Figure~\ref{Fig:Main}. We study to 3D reconstruct the Buddha statue using 402 images of this statue. First, we mark on the surface of the Buddha statue 112 special small points. These special points are around the surface. For instance, there are 27 points on the front body of Buddha statue as we see in the \emph{front body experiment} described in Section~\ref{Sec:PC_from_Correspondences}. Second, the 402 images of the Buddha statue were taken such that each image captures at least six and at most 20 special points. From these 402 images, we divide our project into twelve experiments corresponding to 3D reconstructions of twelve areas of the Buddha statue. The first experiment is the \emph{front body experiment} with 116 images capturing 27 special world points. A task of this experiment is reconstructing the front body surface of the Buddha statue as sub-figures (E) of Figure~\ref{Fig:Creating_PClouds} show. The second experiment, named the \emph{face experiment}, has 119 images that capture 15 special world points, and other experiments like the \emph{head experiment}, the \emph{back experiment}, the \emph{left hand experiment}, the \emph{right hand experiment}, etc. Figure~\ref{Fig:Experiments} gives 13 example images from 119 images of the face experiment and 8 example images from 68 images of the right hand experiment.

\begin{figure*}[t]
  \centering
  \includegraphics[width=0.95\linewidth]{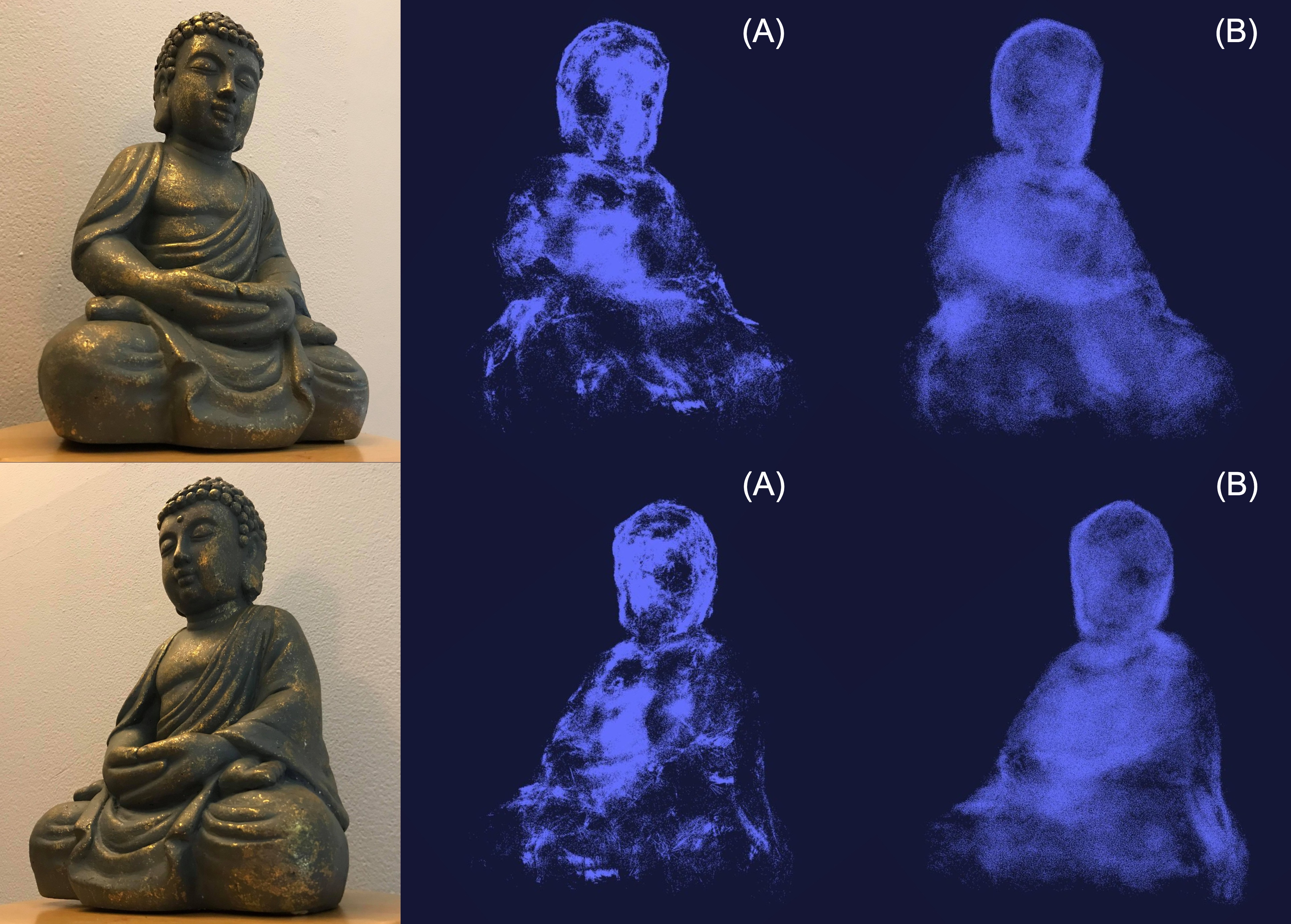}
  \caption{The 3D Buddha statue reconstruction along with two similar view images: (A) The refinement point clouds corresponding to $\epsilon = 7$ pixels. (B) The refinement point clouds corresponding to distance $\delta = 0.05$ centimeter. \emph{Blue dots} are points in the point clouds.}
  \label{Fig:Total_PClouds_1}
\end{figure*}

Thanks to the special points on the surface of the statue and also on the images, we derive accurate correspondences by using the works from~\citet{Lowe1999,Bay2006,le2022multi}. Next we use Algorithms \texttt{WPfC} and \texttt{CrPC} to estimate the world points, the projection matrices and create the point clouds. The 3D reconstruction of the Buddha statue are the point clouds from the twelve experiments. From the huge point clouds that are everywhere in the surface of the Buddha statue, we can refine them to get better a 3D Buddha statue reconstruction by the following process: Given the huge original point clouds $\mathcal{PC}$ and a threshold $\delta > 0$, a refined point clouds $\mathcal{PC}_{\delta}$ is called a \emph{refinement of $\mathcal{PC}$ corresponding to distance $\delta$} if
\begin{enumerate}
\item $\mathcal{PC}_{\delta}$ is a subset of $\mathcal{PC}$, i.e., $\mathcal{PC}_{\delta}\subset\mathcal{PC}$,
\item for all refined point ${\bf X}_{\delta}\in\mathcal{PC}_{\delta}$, we always have $\mathcal{E}_d({\bf X}_{\delta}) \leq \mathcal{E}_d({\bf X}),~\forall\, {\bf X}\in\mathcal{PC}$ and $\|{\bf X} - {\bf X}_\delta\|_2\leq \delta$, \item for all ${\bf X}_{\delta}, {\bf X}'_{\delta}\in\mathcal{PC}_{\delta},~\|{\bf X}_{\delta} - {\bf X}'_{\delta}\|_2\geq \delta$.
\end{enumerate}
Examples of the refinement of point clouds corresponding to $\delta$ are in Figure~\ref{Fig:Creating_PClouds} that the point clouds in sub-figures (D) is the refinement of the point clouds in sub-figures (C) corresponding to $\delta = 0.5$ centimeter, and the point clouds in sub-figures (E) is the refinement of the final point clouds we find from the front body experiment corresponding $\delta = 0.05$ centimeter. Note that the refinement of the point clouds corresponding to distance $\delta$ maintains the \emph{everywhere} property of the original point clouds. However, we do not guarantee all the points in the refined point clouds $\mathcal{PC}_{\delta}$ are good. If some applications on 3D reconstruction require all good points in the point clouds, we can refine the point clouds $\mathcal{PC}$ by $\mathcal{PC}_{\epsilon}$, the \emph{refinement of $\mathcal{PC}$ corresponding to pixel $\epsilon$} that is defined as follows. Given a small positive real number $\epsilon$, the refined point clouds $\mathcal{PC}_{\epsilon}$ is a subset of $\mathcal{PC}$ such that for all ${\bf X}_{\epsilon}\in\mathcal{PC}_{\epsilon}$, we always get $\mathcal{E}_d({\bf X}_{\epsilon}) \leq \epsilon$. Figure~\ref{Fig:Total_PClouds_1} gives examples of the refinement point clouds corresponding to distance $\delta = 0.1$ centimeter and the refinement point clouds corresponding to $\epsilon = 5$ pixels of our point clouds for the 3D Buddha statue reconstruction.


\subsection{Evaluations}\label{Subsec:Evaluations}

There are two obvious steps in our research. The first is the estimate of the world points and projection matrices from the correspondences, and the second is the estimate of the point clouds from the estimated world points and estimated projection matrices obtained in the first step. Thus, the evaluation of our work will be based on the estimates of the world points, the projection matrices and the point clouds. Note that the estimate in the second step depends on the estimate in the first step. In addition, since the point clouds are created based on the estimate of the geo-features, the second step evaluation can be studied from the estimate of the geo-features.

In general, a correct evaluation of any estimator will be based on differences between ground-truths and their estimations by this estimator. However, in the 3D reconstruction study, it is difficult to acquire ground-truths for world points, projection matrices and point clouds. For example, to have the ground-truths for 27 world points in the front body experiment of the Buddha statue as we discuss in Section~\ref{Sec:PC_from_Correspondences}, we need to measure carefully the Euclidean distances of at least one hundred pairs of 3D points from these 27 world points, and the errors on these manual measurements would be less than 0.1 centimeter. It is a hard work. When the number of world points is the hundreds, it is not possible to obtain their good ground-truths by manually measuring the Euclidean distances. There are similar situations for acquiring the ground-truths of projection matrices, point clouds and geo-features. Since our research faces on the hundreds world points, the hundreds projection matrices, the millions point clouds and geo-features, we do not have the ground-truths for the world points, projection matrices, point clouds and geo-features. Therefore, we replace the traditional error by the \emph{relative error} to valuate our research. An relative error for the output of Algorithm \texttt{WPfC} is the objective value of the optimization~\eqref{eq:Opt_WPfC}, and an relative error for the new world point from Algorithm \texttt{CrPC} is computed based on two metrics $\mathcal{E}_d$ and $\mathcal{E}_i$ of this world point. Concretely, assuming that a group $\{{\bf X}^{\star}_m,{\bf P}^{\star}_n\}$ of $M$ estimated world points and $N$ estimated projection matrices is an output of Algorithm \texttt{WPfC} with respect to an input the correspondences $\{\hat{\bf x}^{\text{\tiny(1)}}_m\leftrightarrow\hat{\bf x}^{\text{\tiny(2)}}_m\leftrightarrow\cdots\leftrightarrow\hat{\bf x}^{\text{\tiny(N)}}_m\}^M_{m=1}$, then its relative error is given by
\begin{equation}\label{eq:Relative_Eval_WP_PM}
\text{Error}_{\text{Rel}}\big(\{{\bf X}^{\star}_m,{\bf P}^{\star}_n\}\big)~=~\frac{1}{MN}\sum^M_{m=1}\sum^N_{n=1}\left\|\hat{\bf x}^{\text{\tiny(n)}}_m \,-\,\left[\frac{{\bf p}^{\star}_{1,n}\big[{\bf X}^{\star T}_m\,,\,1\big]^T}{{\bf p}^{\star}_{3,n}\big[{\bf X}^{\star T}_m\,,\,1\big]^T}~,~\frac{{\bf p}^{\star}_{2,n}\big[{\bf X}^{\star T}_m\,,\,1\big]^T}{{\bf p}^{\star}_{3,n}\big[{\bf X}^{\star T}_m\,,\,1\big]^T}\right]^T\right\|_2
\end{equation}
the mean differences between the correspondences used to estimated $\{{\bf X}^{\star}_m,{\bf P}^{\star}_n\}$ and other correspondences computed by $\{{\bf X}^{\star}_m,{\bf P}^{\star}_n\}$. Given a new world point ${\bf X}$ in the point clouds that is acquired by Algorithm \texttt{CrPC} corresponding to a geo-feature $\{\bar{\bf x}^{\text{\tiny$(n_1)$}}_{h_1}\leftrightarrow\bar{\bf x}^{\text{\tiny$(n_2)$}}_{h_2}\leftrightarrow\cdots\leftrightarrow\bar{\bf x}^{\text{\tiny$(n_J)$}}_{h_J}\}$, its relative error is given by
\begin{equation}\label{eq:Relative_Eval_PClouds}
\begin{split}
\text{Error}_{\text{Rel}}\big({\bf X}\big)~&=~\bigg[\frac{\mathcal{E}_d({\bf X})}{\sqrt{H^2+W^2}}\bigg]^{\mathcal{E}_i({\bf X})}\\
&=~ \left[\frac{1}{J\sqrt{H^2+W^2}}\sum^J_{j=1}\left\|\bar{\bf x}^{\text{\tiny$(n_j)$}}_{h_j} - \left[\frac{{\bf p}^{\star}_{1,n_j}\big[{\bf X}^T\,,\,1\big]^T}{{\bf p}^{\star}_{3,n_j}\big[{\bf X}^T\,,\,1\big]^T}~,~\frac{{\bf p}^{\star}_{2,n_j}\big[{\bf X}^T\,,\,1\big]^T}{{\bf p}^{\star}_{3,n_j}\big[{\bf X}^T\,,\,1\big]^T}\right]^T\right\|_2\right]^{J}\, ,
\end{split}
\end{equation}
where $H,W$ are two side of images. The value $\sqrt{H^2+W^2}$ is used in the relative error of ${\bf X}$ to ensure that the error $\mathcal{E}_d({\bf X})/\sqrt{H^2+W^2}$ is always less than one. Then, the relative error of ${\bf X}$ will be smaller as $\mathcal{E}_i({\bf X})$ is larger. Thus, two points have the same $\mathcal{E}_d$ errors, the one with larger $\mathcal{E}_i$ value will be better.

\begin{figure*}[t]
  \centering
  \includegraphics[width=0.95\linewidth]{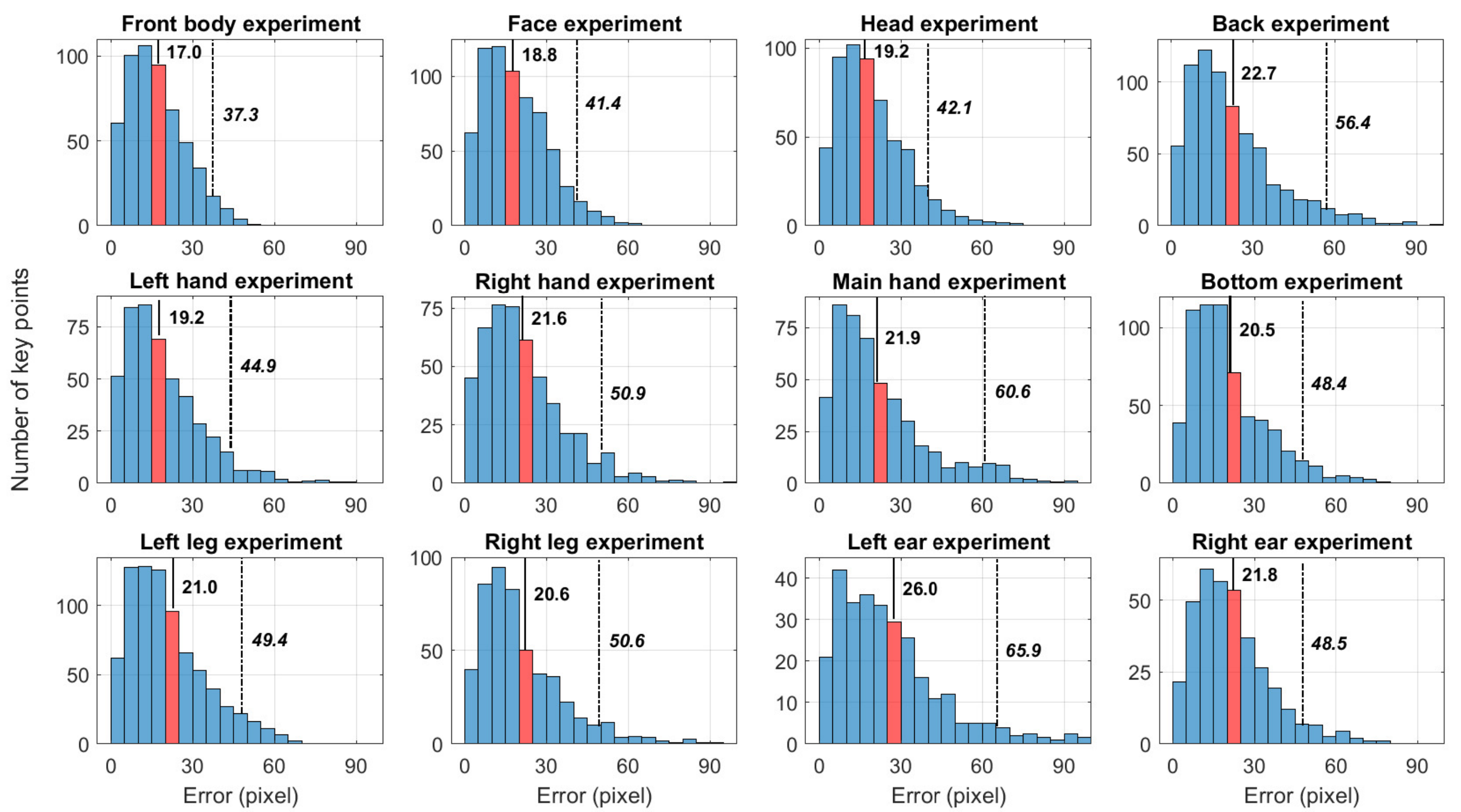}
  \caption{Twelve histograms of \emph{differences between before \& after key points} from twelve experiments in our 3D Buddha statue reconstruction project. The \emph{red bar} are used to indicate the \emph{mean} values. Two real numbers in each sub-figure are the mean and 95\%-percentile values (mean value $<$ 95\%-percentile value).}
  \label{Fig:Hist_Experiments}
\end{figure*}

Regarding the relative errors, we hope that the smaller this relative error, the smaller its ground-truth-based error (denoted by $\text{Error}_{\text{GT}}$). More precisely, we expect to find a function $f$ such that
\begin{equation}\label{eq:Func_Error_WP}
\begin{split}
&\text{Error}_{\text{GT}}\big(\{{\bf X}^{\star}_m,{\bf P}^{\star}_n\}\big) ~\leq~ f\big[\text{Error}_{\text{Rel}}(\{{\bf X}^{\star}_m,{\bf P}^{\star}_n\})\big]\\[2mm] &f\big[\text{Error}_{\text{Rel}}(\{{\bf X}^{\star}_m,{\bf P}^{\star}_n\})\big] ~\longrightarrow~ 0 \quad \text{as} \quad \text{Error}_{\text{Rel}}(\{{\bf X}^{\star}_m,{\bf P}^{\star}_n\})~\longrightarrow~ 0\,,
\end{split}
\end{equation}
for all estimated groups $\{{\bf X}^{\star}_m,{\bf P}^{\star}_n\}$, and another function $g$ such that
\begin{equation}\label{eq:Func_Error_PC}
\begin{split}
&\text{Error}_{\text{GT}}({\bf X}) ~\leq~ g\big[\text{Error}_{\text{Rel}}({\bf X})\big]\\[2mm] &g\big[\text{Error}_{\text{Rel}}({\bf X})\big] ~\longrightarrow~ 0 \quad \text{as} \quad \text{Error}_{\text{Rel}}({\bf X}) ~\longrightarrow~0\,,
\end{split}
\end{equation}
for all the point ${\bf X}$ in the point clouds. When $f$ and $g$ are derived, the study of 3D reconstruction can be evaluated without the ground-truths for world points, projection matrices and point clouds. Unfortunately, to the best of our knowledge, these two functions have yet to be found. Finding these two functions will be our next work.

\begin{figure*}[t!]
  \centering
  \includegraphics[width=0.95\linewidth]{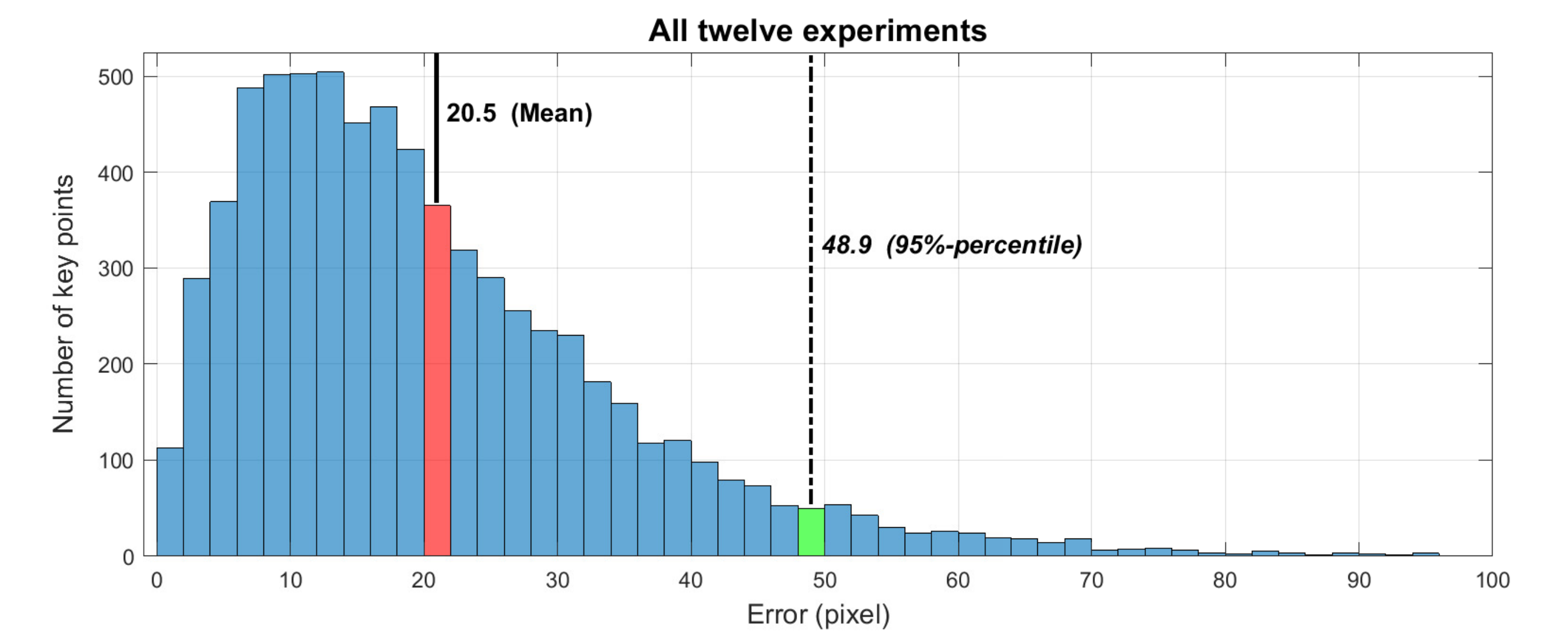}

\vspace{-0.05in}

  \caption{Histogram of \emph{differences between before \& after key points} from all key points in our 3D Buddha statue reconstruction project. The \emph{red bar} and \emph{green bar}, together with two real numbers, are used to indicate the \emph{mean} and 95\%\emph{-percentile} values.}
  \label{Fig:Hist_Total}
\end{figure*}
\begin{figure*}[t!]
  \centering
  \includegraphics[width=0.95\linewidth]{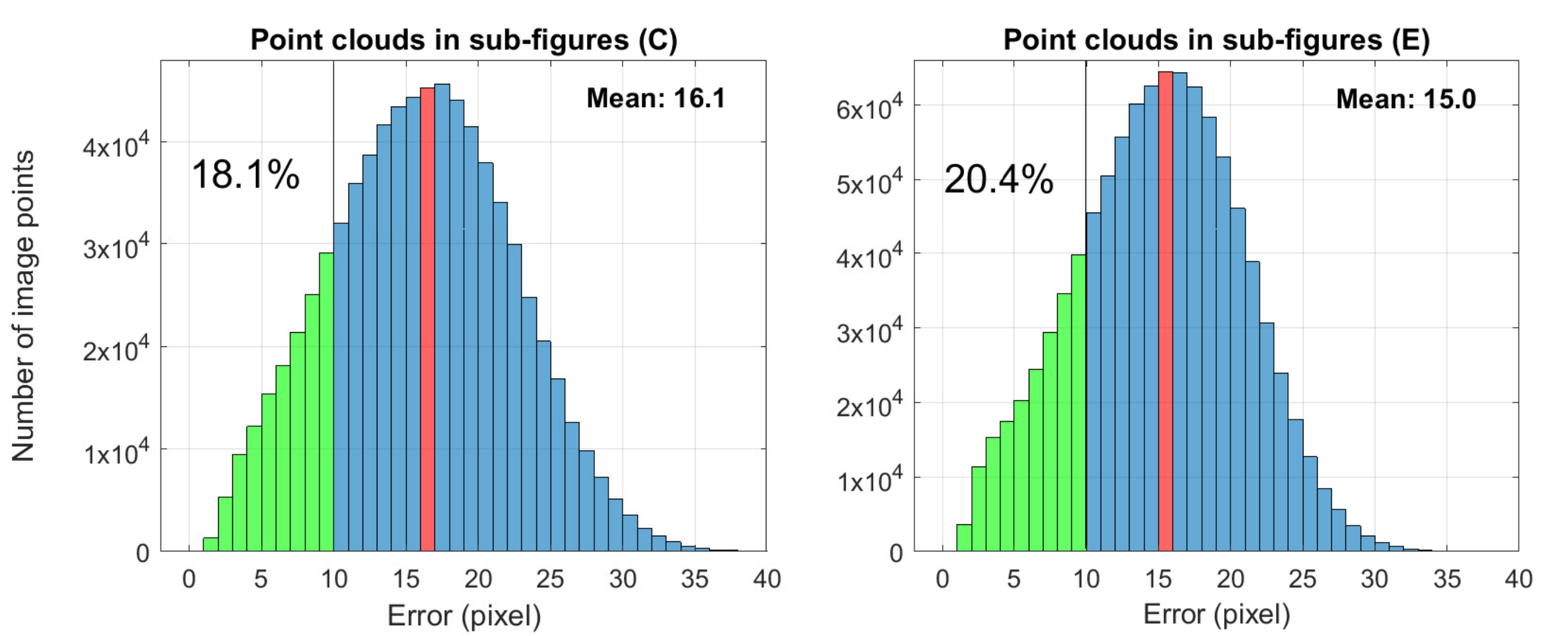}

 \vspace{-0.05in}

  \caption{Histograms of \emph{differences between before \& after image points} from the point clouds simulated by sub-figures (C) and the others simulated by sub-figures (E) in the front body experiment in Figure~\ref{Fig:Creating_PClouds}.}
  \label{Fig:Hist_Frontbody}
\end{figure*}

\begin{figure*}[t!]
  \centering
  \includegraphics[width=0.83\linewidth]{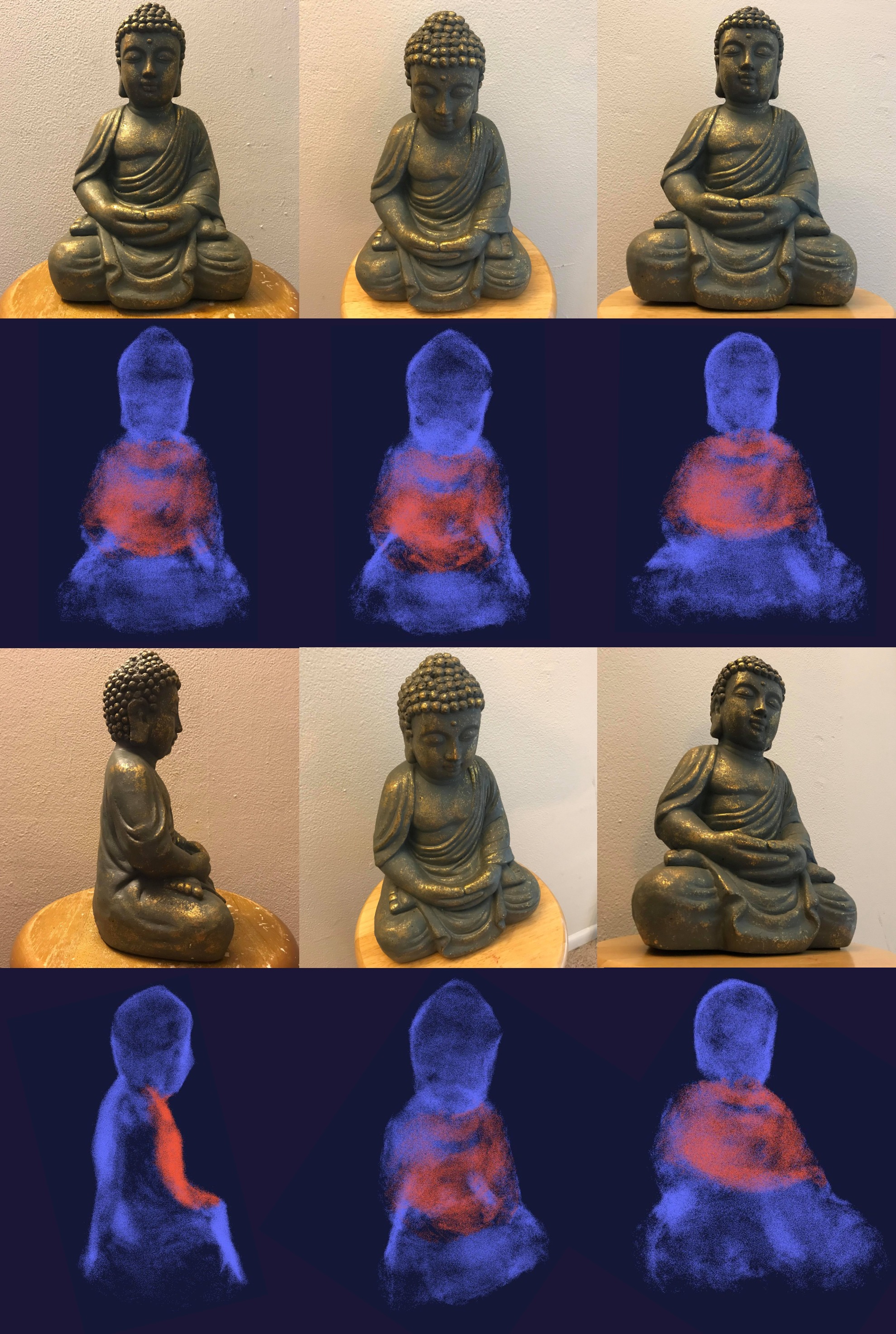}
  \caption{The 3D Buddha statue reconstruction along with six similar view images. \emph{Blue dots} are the refinement point clouds corresponding to distance $\delta = 0.05$ centimeter. \emph{Red dots} are these refinement point clouds but only from the \emph{front body experiment}.}
  \label{Fig:Total_PClouds_2}
\end{figure*}

If each Euclidean distance in~\eqref{eq:Relative_Eval_WP_PM} is called by a \emph{difference between before \& after key points} corresponding to the world point ${\bf X}^{\star}_m$ and the projection matrix ${\bf P}^{\star}_n$, the small or large of $\text{Error}_{\text{Rel}}(\{{\bf X}^{\star}_m,{\bf P}^{\star}_n\})$ can be seen from all the differences between before \& after key points. Figure~\ref{Fig:Hist_Experiments} presents twelve histograms of the differences between before \& after key points from twelve experiments. For the (1,1)th sub-figure of Figure~\ref{Fig:Hist_Experiments} corresponding to the front body experiment, we can conclude that most $\{{\bf X}^{\star}_m,{\bf P}^{\star}_n\}$ derived by this experiment will have relative error close to 17.0 pixels and less than 37.3 pixels. Similarly, relative errors of most $\{{\bf X}^{\star}_m,{\bf P}^{\star}_n\}$ from the face experiment will be close to 18.8 pixels and less than 41.4 pixels. Figure~\ref{Fig:Hist_Total} plots the similar histogram for all the key points in our project. This histogram claims that the average of differences between before \& after key points is 20.5 pixels, and 95\% of them are less than 48.9 pixels. Other than that, there's no difference greater than 100 pixels. Our images have the size $3024 \times 4032$ ($\text{pixel}^2$). Indeed, a difference of 20-pixels is very small and we hardly notice this difference with the naked eyes. This achievement validates not only a successful of the proposed algorithm \texttt{WPfC} on estimating the world points and projection matrices based on their correspondences, but also a successful of works on~\citet{Lowe1999,Bay2006,le2022multi} on detecting the key-points and the correspondences.

\begin{figure*}[t!]
  \centering
  \includegraphics[width=0.95\linewidth]{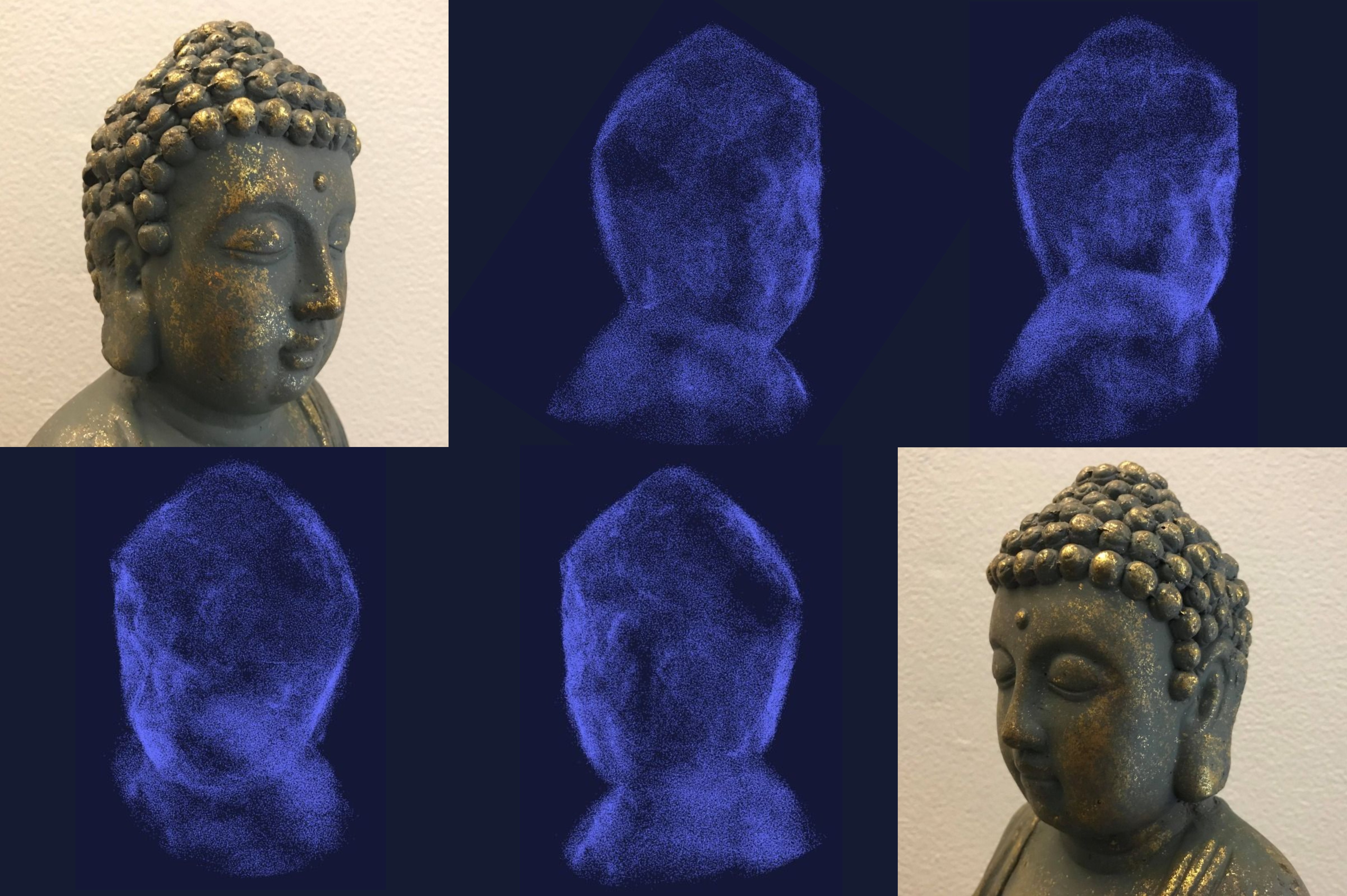}
  \caption{The \emph{face} of Buddha statue with four different views of the point clouds. The point clouds are a combine of the refinement corresponding to distance $\delta = 0.05$ centimeter and the refinement corresponding to $\epsilon = 5$ pixels.}
  \label{Fig:Face_PClouds}
\end{figure*}

We use a histogram of all Euclidean distances in~\eqref{eq:Relative_Eval_PClouds} as we do with~\eqref{eq:Relative_Eval_WP_PM} to evaluate both the relative errors $\text{Error}_{\text{Rel}}({\bf X})$ and Algorithm \texttt{CrPC}. Considering the front body experiment, Figure~\ref{Fig:Hist_Frontbody} presents two histograms of these Euclidean distances corresponding to the point clouds in the sub-figures (C) and the point clouds in the sub-figures (E). Note that since we do not add a new point with an $\mathcal{E}_d$ value greater than 40 to our point clouds, most points in the point clouds have differences before \& after image points less than 40 pixels. Indeed, although the average value of the differences before \& after image points (15.5 pixels) is smaller than the average value of the differences before \& after key points (17.0 pixels), while the errors of key points effect to the errors of image points, we cannot conclude anything. However, there is an interesting point that we would like to discuss. Since there are lots of key points where their errors are more than 20 pixels as we see in Figure~\ref{Fig:Hist_Total}, we were worried at first that the constraint $\mathcal{E}_d(\cdot)\leq 40$ would limit the number of points in the point clouds, and the point clouds cannot cover everywhere of the object's surface. In spite of that the size of our point clouds is very large and they are everywhere the the surface as simulated by the sub-figures (E) in Figure~\ref{Fig:Creating_PClouds}. Furthermore, since there are more good points ($\mathcal{E}_d$-values are less than 10) in the sub-figures (D) than in the sub-figures (B), the number of good points in (E) is very large (20.4\% of two millions points) compared to (C) (18.1\% of one million points) (see in Figure~\ref{Fig:Hist_Frontbody}). This fact allows us to believe that the Buddha statue can be reconstructed by good point clouds. In another word, we can achieve an accurate 3D Buddha statue reconstruction. Histograms of differences between before \& after image points from other experiments are similar as two histograms given by Figure~\ref{Fig:Hist_Frontbody} from the front body experiment that warrants our conclusion on the proposed algorithm of creating point clouds.
\newpage

Finally, because we do not have any theoretical tool as~\eqref{eq:Func_Error_WP} and~\eqref{eq:Func_Error_PC} to evaluate the 3D reconstruction work without ground-truths, the best way to evaluate our point clouds for the 3D Buddha statue reconstruction is bring out some figures like Figure~\ref{Fig:Total_PClouds_1}. Figure~\ref{Fig:Total_PClouds_2} gives six different views of the refinement point clouds corresponding to distance $\delta = 0.05$ centimeter along with similar view images. The part reconstructed by the \emph{front body experiment} is highlighted by red. From these six views, we strongly confirm that the 3D shape of the Buddha statue can be modeled with a high accuracy. The dense clouds rendered by these point clouds after applying some bundle adjustment techniques would be a realistic and natural model of the Buddha statue. Figure~\ref{Fig:Face_PClouds} shows more detail as it gives four different views of just the head. We hope the readers can see the beautiful and gentle face of the Buddha statue through these point clouds. The point clouds used in Figure~\ref{Fig:Face_PClouds} are a combine of the refinement corresponding to distance $\delta = 0.05$ centimeter and the other corresponding to $\epsilon = 5$ pixels. To finish this section, we emphasize that these point clouds are not final, and we will work to get better point clouds.

\section{Conclusion}\label{Sec:Conclusion}

Beginning with a simple work that marks 112 special points on the Buddha statue surface and takes 402 images of this statue such that each image captures at least these six special points, the study from this article builds the point clouds with millions points to 3D reconstruct this Buddha statue. The point clouds cover the entire surface of the statue with most their relative errors are less than 30 pixels. To our experience on image-based 3D reconstruction, this achievement will give the statue a realistic and natural look when these point clouds are rendered. Our point clouds building process is based on three steps. First, we detect the special points on images that we call key points, and accurately estimate the correspondences from these key points. Second, we estimate both the 3D coordinates for the initial special points and the projection matrices for all images. Third, based on the estimated special points and the estimated projection matrices from the second step, we find as many geo-feature correspondences as possible and create the point clouds. The first step is studies by our previous work in~\citet{le2022multi}, the second and third steps are presented by Sections~\ref{Sec:World_points} and~\ref{Sec:Point_Clouds}, respectively.  Concretely, algorithm \texttt{WPfC} solves the second step and algorithm \texttt{CrPC} solves the third step.

As we discuss the protocol to create the point clouds simulated by Figure~\ref{Fig:Creating_PClouds} in Section~\ref{Sec:PC_from_Correspondences}, it seems that our method is not limited to the point clouds with millions points and their relative errors less than 30 pixels. It can help us to obtain the better point clouds with tens or hundreds million points and their relative errors less than 20 pixels. Indeed, if this discussion is correct, we hope our proposed method succeeds not only in 3D reconstruction of Buddha statues but also other objects such as human face or human body. Imagining that how much fun it is when we success to create the point clouds for human faces and we are able to recognize people from these point clouds. We believe that this is a huge challenge. However, the results we obtained in Figure~\ref{Fig:Face_PClouds} will give us a strong incentive to study and address this challenge in the future.

Finally, our results on the projection matrix estimation and the point clouds creation differ from other existing results. For instance, we estimate the projection matrices not through the fundamental matrix estimation, but through the assignment of the first five world points and create a new point for the point clouds based on the new feature named the geodesic feature or the geo-feature. Therefore, we believe that there are better 3D reconstructions than currently available if we combine our methods with other existing methods.

\bibliographystyle{plainnat}
\bibliography{refs}

\end{document}